\documentclass[twoside,11pt]{article}

\usepackage{blindtext}

\usepackage{jmlr2e}
\usepackage{amsmath}
\usepackage{mathrsfs}
\usepackage{booktabs}
\usepackage{url}

\newtheorem{assumption}{Assumption}

\usepackage{lastpage}

\ShortHeadings{}{A Unified Framework for In-Context Learning}
\firstpageno{1}

\begin{document}
	\title{A Unified Framework for In-Context Learning with Causal and Masked Language Models}
	
	\author{\name Chenrui Liu \email chenruiliu@mail.bnu.edu.cn \\
		\addr Beijing Normal University at Zhuhai\\
		Zhuhai, China
		\AND
		\name Chuanlong Xie \email clxie@bnu.edu.cn \\
		\addr Beijing Normal University at Zhuhai\\
		Zhuhai, China
		\AND
		\name Falong Tan \email falongtan@hnu.edu.cn \\
		\addr Hunan University\\
		Changsha, China
		\AND
		\name Yicheng Zeng \email zengyicheng@mail.sysu.edu.cn \\
		\addr Sun Yat-sen University\\
		Shenzhen, China
		\AND
		\name Lixing Zhu\thanks{Corresponding author.} \email lzhu@bnu.edu.cn \\
		\addr Beijing Normal University at Zhuhai\\
		Zhuhai, China
	}
	
	\editor{My editor}
	
	\maketitle
	
	\begin{abstract}
		In-context learning (ICL) has emerged as a central capability of pretrained
		language models, yet its theoretical analysis has focused primarily on causal
		language models trained by left-to-right autoregressive prediction, such as
		GPT-style models. Masked language models instead recover masked tokens from
		bidirectional context, and their role in ICL remains less understood. We develop
		a statistical learning framework that represents the context examples by their
		empirical measure and models prediction as a function of the context and the
		query. This formulation places autoregressive and masked pretraining objectives
		within a common excess-risk analysis.
		Under Wasserstein-type regularity conditions, we relate pretraining with \(T\)
		tasks and \(N\) samples per task to \(k\)-shot excess risk at inference,
		obtaining same-order upper bounds for masked and autoregressive objectives.
		We also study task-distribution shift, where pretraining tasks are sampled
		from \(\mathbb P\) and inference tasks from \(\mathbb Q\); the resulting bound
		contains an additional term controlled by the lifted Wasserstein distance
		between \(\mathbb P\) and \(\mathbb Q\). The bounds further imply an
		order-optimal allocation under a fixed pretraining data budget and refined
		rates under intrinsic low-dimensional structure. Experiments on controlled function-learning tasks show that the Masked Pair Encoder (MPE) can achieve performance comparable to GPT-2-style causal Transformers, suggesting that ICL behavior is not specific to causal language models.
	\end{abstract}

	\begin{keywords}
		in-context learning, masked language models, causal language models,
		excess risk, Wasserstein distance
	\end{keywords}

	\section{Introduction}
	Recent advances in large language models (LLMs) have led to impressive
	performance across a wide range of tasks. One particularly intriguing phenomenon
	exhibited by such models is \emph{in-context learning} (ICL)
	\citep{brown2020language}, whereby a pretrained model adapts to a new task at
	inference time by conditioning on a prompt containing a few input--output
	examples, without updating its parameters.
	
	The ICL phenomenon has motivated substantial theoretical interest in
	understanding its underlying mechanisms
	\citep{garg2022can,von2023transformers,ahn2023transformers,
		bai2023transformers,zhang2024trained}. Some works interpret ICL as a form of
	implicit Bayesian inference
	\citep{xie2022an,muller2022transformers,panwar2024incontext}, while others
	analyze the algorithmic mechanisms underlying ICL by viewing transformers as
	approximators of classical optimization procedures, such as gradient descent
	\citep{von2023transformers}, preconditioned gradient descent
	\citep{ahn2023transformers}, domain adaptation procedures
	\citep{hataya2024automatic}, and Newton-type methods
	\citep{giannou2023looped,fu2024transformers}. From a statistical learning
	perspective, several recent works study the generalization properties of ICL
	and establish excess-risk guarantees
	\citep{li2023transformers,kim2024transformers,wumany,
		ma2025provable,wakayama2025context,liu2025context,ching2026efficient}.

	However, most existing theoretical analyses of ICL have been developed for
	causal language models, such as GPT-style models, whose pretraining is based on
	autoregressive next-token prediction
	\citep{radford2019language,brown2020language}. By contrast, masked language
	models, such as BERT, are trained by masked-token prediction, where masked tokens
	are recovered from the remaining bidirectional context \citep{devlin2019bert}.
	Despite being a central paradigm of language-model pretraining, masked language
	modeling has received comparatively little theoretical attention in the study of
	ICL. This gap is nontrivial because existing analyses for autoregressive ICL do
	not directly apply to masked objectives. In an autoregressive objective, each
	target is predicted from a prefix context, whereas in a masked objective the
	target response is excluded and prediction is made from the remaining
	bidirectional context. Thus, the two objectives provide different context
	information to the predictor. Although recent empirical evidence suggests that
	masked language models can also exhibit ICL behavior \citep{samuel2024berts}, a
	statistical theory of ICL under masked objectives remains largely undeveloped.
	This leads to the question studied in this paper: how do autoregressive and
	masked objectives affect the excess risk and scaling behavior of ICL?
	
	We address this question by developing a unified statistical learning framework
	that connects pretraining objectives to $k$-shot inference. The basic
	abstraction is to separate the two roles played by an ICL prompt: the
	in-context examples provide task-specific information, while the query input
	specifies where prediction is made. We encode the in-context examples by their
	empirical measure and keep the query input as an explicit argument, leading to
	predictors of the form $f(\rho,\mathbf{x})$. This abstraction is aligned with
	the measure-theoretic formulation of attention used in our Transformer
	analysis.
	
	This representation allows masked and autoregressive pretraining to be compared
	within a single predictor class. The two objectives differ only in how the
	empirical context measure is constructed: autoregressive pretraining uses
	prefix measures, whereas masked pretraining uses leave-one-out measures. This
	pretraining-to-inference procedure is not a standard empirical risk
	minimization problem. To account for the finite context window of language
	models, we distinguish the number $N$ of examples available per task during
	pretraining from the number $k \le N$ of in-context examples provided at
	inference. Our analysis therefore tracks how empirical pretraining risk, based
	on $T$ tasks and $N$ examples per task, translates into $k$-shot excess
	risk.
	
	The resulting framework yields a common excess-risk decomposition for masked
	and autoregressive objectives, and leads to same-order excess-risk bounds for
	the two pretraining paradigms. Our contributions are as follows.
	\begin{itemize}
		\item We formulate a unified statistical learning framework for ICL that places
		autoregressive and masked pretraining objectives within a common excess-risk
		analysis. This framework provides a theoretical basis for comparing causal and
		masked language-model pretraining and, to the best of our knowledge, gives the
		first systematic theoretical analysis of ICL under masked pretraining.
		
		\item Under Wasserstein-type regularity conditions, we derive excess-risk
		bounds for autoregressive and masked pretraining objectives. The bounds relate
		pretraining with \(T\) tasks and \(N\) examples per task to \(k\)-shot excess
		risk at inference, and imply both an order-optimal budget allocation and
		refined low-dimensional rates.
		
		\item We establish a \(k\)-shot transferability bound under task-distribution
		shift. Unlike the standard no-shift setting in which both pretraining and
		inference tasks are sampled from the same meta-distribution, we allow
		pretraining tasks to be sampled from \(\mathbb P\) and inference tasks from
		\(\mathbb Q\). The resulting excess-risk bound contains an additional term
		controlled by the lifted Wasserstein distance between \(\mathbb P\) and
		\(\mathbb Q\).
		
		\item We provide a controlled empirical comparison between the Masked Pair
		Encoder (MPE) and a GPT-2-style causal Transformer for in-context function
		learning. Across representative function-learning tasks, MPE achieves
		performance comparable to the causal baseline. These results provide empirical
		support for the view that masked language models can also exhibit ICL.
	\end{itemize}
	
	\paragraph{Notation.}
	Let \(\mathcal X \subseteq \mathbb R^{d_x}\) and
	\(\mathcal Y \subseteq \mathbb R^{d_y}\) be compact sets, and set
	\(\mathcal Z := \mathcal X \times \mathcal Y \subseteq
	\mathbb R^{d_x+d_y}\). We equip \(\mathcal X\), \(\mathcal Y\), and
	\(\mathcal Z\) with the Euclidean norm. Thus, for
	\(\mathbf z=(\mathbf x,\mathbf y)\) and
	\(\mathbf z'=(\mathbf x',\mathbf y')\), one has
	\(\|\mathbf z-\mathbf z'\|_2 =
	(\|\mathbf x-\mathbf x'\|_2^2+\|\mathbf y-\mathbf y'\|_2^2)^{1/2}\).
	In particular, \(\mathcal Z\) is a compact metric space. We write
	\(\mathcal P(\mathcal Z)\) for the space of Borel probability measures on
	\(\mathcal Z\), endowed with the topology of weak convergence. Each task is a
	distribution \(\rho\in\mathcal P(\mathcal Z)\) generating input-output pairs.
	The meta-distribution \(\mathbb P\in\mathcal P(\mathcal P(\mathcal Z))\) is a
	distribution over such task distributions. We use \(W_1\) for the Wasserstein
	distance between task distributions, with respect to the Euclidean norm on
	\(\mathcal Z\), and \(\mathbb W_1\) for the lifted Wasserstein distance between
	meta-distributions. Boldface letters denote vector variables,
	\(\delta_{\mathbf z}\) denotes the Dirac measure at \(\mathbf z\), and
	\([n]:=\{1,\ldots,n\}\) for \(n\in\mathbb N\). For nonnegative quantities
	\(a\) and \(b\), we write \(a\lesssim b\) if \(a\le Cb\) for a constant
	\(C>0\) independent of the relevant sample-size parameters, and write
	\(a\asymp b\) if both \(a\lesssim b\) and \(b\lesssim a\) hold.
	
	\section{Idealized Statistical Model for In-Context Learning}
	\label{sec:idealized_stat_model}
	
	We formalize in-context learning as prediction from a prompt consisting of
	in-context examples and a query input. Let $\mathcal Z=\mathcal X\times\mathcal Y$
	be the example space, and let $\rho\in\mathcal P(\mathcal Z)$ denote a task
	distribution. Conditional on $\rho$, examples
	$\mathbf z_i=(\mathbf x_i,\mathbf y_i)$ are sampled i.i.d.\ from $\rho$.
	
	Fix a maximal context length $N\in\mathbb N$. For $k\in[N]$, a $k$-shot prompt is
	given by $P^{(k)}=(\mathbf z_1,\ldots,\mathbf z_k,\mathbf x_{k+1})$, where
	$\mathbf z_1,\ldots,\mathbf z_k$ are the in-context examples and
	$\mathbf x_{k+1}$ is the query input. The goal is to predict the corresponding
	response $\mathbf y_{k+1}$.

	For statistical analysis, we represent the information contained in the context
	by the empirical measure
	$\widehat\rho^{\,k}=k^{-1}\sum_{i=1}^k\delta_{\mathbf z_i}\in\mathcal P(\mathcal Z)$.
	Thus the prompt is represented by the pair
	$(\widehat\rho^{\,k},\mathbf x_{k+1})$. A related empirical-measure viewpoint
	appears in \citet{mroueh2023towards}. In the present paper, this representation
	places contexts of different lengths in the common space $\mathcal P(\mathcal Z)$
	and makes explicit the two components of in-context prediction: the empirical
	measure $\widehat\rho^{\,k}$ represents the task information supplied by the
	context, while $\mathbf x_{k+1}$ is the query input at which prediction is made.

	An idealized in-context predictor is a measurable map
	$f:\mathcal P(\mathcal Z)\times\mathcal X\to\mathcal Y$. Given a finite context,
	the prediction is $f(\widehat\rho^{\,k},\mathbf x_{k+1})$. For population-level
	analysis, we use the corresponding idealized form $f(\rho,\mathbf x)$, where the
	task is represented by its underlying distribution $\rho$.
	
	To model variation across tasks, let
	$\mathbb P\in\mathcal P(\mathcal P(\mathcal Z))$ be a meta-distribution over task
	distributions. We measure the population performance of an in-context predictor
	$f$ over tasks drawn from $\mathbb P$ by the expected risk
	\begin{equation}
		\label{eq:lifted_risk}
		R_{\mathbb P}^{\ell}(f)
		:=
		\mathbb E_{\rho\sim\mathbb P}
		\mathbb E_{(\mathbf X,\mathbf Y)\sim\rho}
		\left[
		\ell\bigl(\mathbf Y,f(\rho,\mathbf X)\bigr)
		\right]
		=
		\int_{\mathcal P(\mathcal Z)}
		\int_{\mathcal Z}
		\ell\!\left(\mathbf y,f(\rho,\mathbf x)\right)
		\,d\rho(\mathbf x,\mathbf y)\,d\mathbb P(\rho).
	\end{equation}
	
	Let $\mathcal F$ be a class of measurable in-context predictors
	$f:\mathcal P(\mathcal Z)\times\mathcal X\to\mathcal Y$. In the subsequent
	sections, this class will be instantiated by predictors induced by Transformer
	architectures. The corresponding population benchmark is
	\begin{equation}
		\label{eq:global_error}
		\inf_{f\in\mathcal F} R_{\mathbb P}^{\ell}(f).
	\end{equation}

	\paragraph{Relation to standard learning formulations.}
	The expected risk in \eqref{eq:lifted_risk} differs from standard learning
	risks in how task information enters the prediction rule. In classical
	statistical learning \citep{vapnik2013nature}, a task distribution $\rho$ is
	fixed and one learns a task-specific predictor $h:\mathcal X\to\mathcal Y$,
	with risk
	\[
	R_{\rho}^{\ell}(h)
	:=
	\mathbb E_{(\mathbf X,\mathbf Y)\sim\rho}
	\ell\bigl(\mathbf Y,h(\mathbf X)\bigr)
	=
	\int_{\mathcal Z}
	\ell\!\left(\mathbf y,h(\mathbf x)\right)\,
	d\rho(\mathbf x,\mathbf y).
	\]
	In multi-task learning, one typically learns predictors of the form
	$h_t\circ g$, where $g$ is a shared representation and $h_t$ is a task-specific
	head \citep{maurer2016benefit}. For tasks $\rho_1,\ldots,\rho_T$, the average
	risk is
	\[
	R_{(\rho_t)_{t=1}^T}^{\ell}(\{h_t\circ g\}_{t=1}^T)
	:=
	\frac{1}{T}\sum_{t=1}^T
	\int_{\mathcal Z}
	\ell\!\left(\mathbf y,(h_t\circ g)(\mathbf x)\right)\,
	d\rho_t(\mathbf x,\mathbf y).
	\]
	Distribution regression instead learns a map
	$r:\mathcal P(\mathcal X)\to\mathcal Y$ from distribution-label pairs
	$(P_{\mathbf X},\mathbf Y)\sim\Pi
	\in\mathcal P(\mathcal P(\mathcal X)\times\mathcal Y)$
	\citep{poczos2013distribution,szabo2016learning}, with risk
	\[
	R_{\Pi}^{\ell}(r)
	:=
	\int_{\mathcal P(\mathcal X)\times\mathcal Y}
	\ell\!\left(\mathbf y,r(P_{\mathbf X})\right)\,
	d\Pi(P_{\mathbf X},\mathbf y).
	\]
	By contrast, in the ICL formulation above, the same predictor
	$f:\mathcal P(\mathcal Z)\times\mathcal X\to\mathcal Y$ is used across tasks.
	Task information enters as a distributional argument of $f$, and prediction for
	a query input is represented by $f(\rho,\mathbf x)$ at the population level or
	by $f(\widehat\rho^{\,k},\mathbf x)$ with a finite context.

	\section{Pretraining Objectives for In-Context Learning}
	\label{pretrain}
	
	We now instantiate the preceding statistical formulation through two idealized
	pretraining objectives, which serve as theoretical simplifications of the
	prediction structures in causal and masked language models. For causal language
	models, such as GPT-style models
	\citep{radford2019language,brown2020language}, we use the term autoregressive
	objective to emphasize the left-to-right prediction structure. For masked
	language models, such as BERT and DeBERTa \citep{devlin2019bert,hedeberta}, we
	use the term masked objective to emphasize prediction from bidirectional context
	with the target removed. In our formulation, the distinction between the
	autoregressive and masked objectives is encoded by the empirical measure used as
	the  context for each target prediction.

	Let \((\rho_t)_{t=1}^T\sim\mathbb P^{\otimes T}\) be i.i.d.\ task distributions.
	For each task \(t\), let \(\mathbf z_{t,j}=(\mathbf x_{t,j},\mathbf y_{t,j})\),
	\(j\in[N]\), be i.i.d.\ samples drawn from \(\rho_t\). We assume \(N\ge 2\).
	
	\paragraph{Masked objective.}
	For the masked objective, the target example is removed from the context. For
	each \(j\in[N]\), define the leave-one-out empirical measure
	\[
	\widehat\rho_t^{\,(-j)}
	:=
	\frac{1}{N-1}
	\sum_{j'\neq j}\delta_{\mathbf z_{t,j'}} .
	\]
	When predicting \(\mathbf y_{t,j}\), the model receives the query input
	\(\mathbf x_{t,j}\) and the context represented by
	\(\widehat\rho_t^{\,(-j)}\). The corresponding empirical risk is
	\begin{equation}
		\label{eq:lifted_empirical_risk}
		\widehat R_{T,N}^{\ell}(f)
		:=
		\frac{1}{TN}
		\sum_{t=1}^T
		\sum_{j=1}^N
		\ell\!\left(
		\mathbf y_{t,j},
		f(\widehat\rho_t^{\,(-j)},\mathbf x_{t,j})
		\right).
	\end{equation}
	This construction abstracts the key statistical feature of masked prediction:
	the target label is excluded from the conditioning context. It is not meant to
	replicate the exact masking scheme used in practical masked language modeling,
	where masking is typically random and partial; rather, it provides a clean
	idealization for risk analysis \citep{devlin2019bert,salazar2020masked}.
	
	\paragraph{Autoregressive objective.}
    For the autoregressive objective, the target response is predicted from preceding
    examples only. For \(j\ge 2\), define the prefix empirical measure
    \[
    	\widehat\rho_t^{(<j)}
    	:=
    	\frac{1}{j-1}
    	\sum_{j'=1}^{j-1}\delta_{\mathbf z_{t,j'}} .
    \]
    The corresponding empirical risk is
    \begin{equation}
    	\label{eq:lifted_empirical_risk_bar}
    	\overline R_{T,N}^{\ell}(f)
    	:=
    	\frac{1}{T(N-1)}
    	\sum_{t=1}^T
    	\sum_{j=2}^N
    	\ell\!\left(
    	\mathbf y_{t,j},
    	f(\widehat\rho_t^{(<j)},\mathbf x_{t,j})
    	\right).
    \end{equation}
    The term \(j=1\) is omitted because no preceding example is available. This
    objective is the ICL analogue of autoregressive next-token prediction: the
    target response is predicted from the preceding context
    \citep{radford2019language,brown2020language}.
	
	Thus, within the same predictor class
	\(f:\mathcal P(\mathcal Z)\times\mathcal X\to\mathcal Y\), the masked and
	autoregressive objectives differ only in the context measure used for each
	prediction: \(\widehat\rho_t^{\,(-j)}\) for the masked objective and
	\(\widehat\rho_t^{(<j)}\) for the autoregressive objective.

    \section{Transformer Architecture}
	\label{sec:transformer_arch}
	
	We now specialize the measure-dependent learning framework to Transformer
	architectures. In ICL, the input consists of context examples and a query input.
	We encode the context by an empirical measure, while the query input is kept as
	the explicit input variable. This leads to a Transformer predictor of the form
	\(f(\rho,\mathbf x)\), where \(\rho\) represents the context and
	\(\mathbf x\) is the query input.
	
	For theoretical clarity, we omit positional encodings. The resulting layers are
	permutation-equivariant with respect to the input tokens, so a context sequence
	can be represented by the empirical measure induced by its tokens. Recall that
	\(\mathcal Z=\mathcal X\times\mathcal Y\subseteq\mathbb R^{d_x+d_y}\), and write
	\(d_{\mathcal Z}:=d_x+d_y\). In this section, we idealize the embedding step by
	treating each pair token as already represented in \(\mathcal Z\). We assume
	that \(\mathcal Z\subseteq\{\mathbf u\in\mathbb R^{d_{\mathcal Z}}:
	\|\mathbf u\|_2\le R\}\) is compact. For an input sequence
	\(\mathbf Z=(\mathbf z_1,\ldots,\mathbf z_L)\in\mathcal Z^L\), write
	\(\widehat\rho^{\,L}(\mathbf Z):=L^{-1}\sum_{\ell=1}^L
	\delta_{\mathbf z_\ell}\).
	
	In this formulation, self-attention is viewed as a token-wise map whose
	coefficients depend on \(\widehat\rho^{\,L}(\mathbf Z)\). Applying the map to
	all tokens transforms the point cloud and pushes forward its empirical measure.
	This measure-theoretic viewpoint provides a convenient way to study Transformer
	regularity through Wasserstein continuity arguments and is in line with related
	formulations of attention; see
	\citet{vuckovic2021regularity,sander2022sinkformers,mroueh2023towards,kawata2026transformers,furuya2026approximation}.
	
	\paragraph{Standing regularity conditions.}
	We adopt the measure-theoretic attention setting of
	\citet{vuckovic2021regularity}. Throughout this section, \(W_1\) denotes the
	\(1\)-Wasserstein distance induced by the Euclidean metric on \(\mathcal Z\).
	All attention and feedforward parameters are fixed and have finite operator
	norms. For each layer \(j\in[D]\) and head \(h\in[H]\), set
	\(G_j^{(h)}(\mathbf z,\mathbf u):=\exp(a(W_{Q,j}^{(h)}\mathbf z,
	W_{K,j}^{(h)}\mathbf u))\), where a standard choice is the scaled dot-product
	similarity \(a(\mathbf q,\mathbf k)=\mathbf q^\top\mathbf k/\sqrt{d_{\mathrm{att}}}\).
	Here \(d_{\mathrm{att}}\) denotes the query-key dimension. We assume
	\(G_j^{(h)}(\mathbf z,\mathbf u)\ge\varepsilon_{j,h}>0\) on
	\(\mathcal Z\times\mathcal Z\), and \(G_j^{(h)}\) is Lipschitz in each argument
	uniformly over the other, namely
	\(\sup_{\mathbf u\in\mathcal Z}\operatorname{Lip}(G_j^{(h)}(\cdot,\mathbf u))<\infty\)
	and
	\(\sup_{\mathbf z\in\mathcal Z}\operatorname{Lip}(G_j^{(h)}(\mathbf z,\cdot))<\infty\).
	These conditions hold for scaled dot-product attention on compact domains with
	finite parameter norms. We also assume that \(\mathcal Z\) is invariant under
	the Transformer blocks below, so all intermediate token representations remain
	in \(\mathcal Z\).
	
	\paragraph{Self-attention as a measure-dependent map.}
	For any \(\rho\in\mathcal P(\mathcal Z)\), define the \(h\)-th attention head in
	layer \(j\) by
	\[
	\mathcal A_{j,\rho}^{(h)}(\mathbf z)
	:=
	\frac{
		\int_{\mathcal Z}
		G_j^{(h)}(\mathbf z,\mathbf u)W_{V,j}^{(h)}\mathbf u
		\,d\rho(\mathbf u)
	}{
		\int_{\mathcal Z}
		G_j^{(h)}(\mathbf z,\mathbf u)
		\,d\rho(\mathbf u)
	},
	\qquad
	\mathbf z\in\mathcal Z .
	\]
	The denominator is positive by the lower bound on \(G_j^{(h)}\). The multi-head
	attention map is
	\(\mathcal A_{j,\rho}(\mathbf z):=\sum_{h=1}^H
	W_{O,j}^{(h)}\mathcal A_{j,\rho}^{(h)}(\mathbf z)\). If
	\(\rho=\widehat\rho^{\,L}(\mathbf Z)\), then this integral formula reduces to
	the usual finite-token softmax attention:
	\[
	\mathcal A_{j,\widehat\rho^{\,L}(\mathbf Z)}^{(h)}(\mathbf z_\ell)
	=
	\frac{
		\sum_{s=1}^L
		G_j^{(h)}(\mathbf z_\ell,\mathbf z_s)
		W_{V,j}^{(h)}\mathbf z_s
	}{
		\sum_{s=1}^L
		G_j^{(h)}(\mathbf z_\ell,\mathbf z_s)
	},
	\qquad
	\ell\in[L].
	\]
	Indeed, since \(G_j^{(h)}(\mathbf z_\ell,\mathbf z_s)
	=\exp(a(W_{Q,j}^{(h)}\mathbf z_\ell,W_{K,j}^{(h)}\mathbf z_s))\), this is the
	standard softmax attention formula.
	
	\paragraph{Residual block and empirical pushforward.}
	Let \(\sigma\) be the ReLU function applied componentwise, and define the
	feedforward map in layer \(j\) by
	\(g_j(\mathbf z):=W_j^{(2)}\sigma(W_j^{(1)}\mathbf z+\mathbf b_j^{(1)})
	+\mathbf b_j^{(2)}\). Since ReLU is \(1\)-Lipschitz, \(g_j\) is Lipschitz with
	constant at most \(\|W_j^{(2)}\|_{\mathrm{op}}\|W_j^{(1)}\|_{\mathrm{op}}\).
	For a measure \(\mu\in\mathcal P(\mathcal Z)\), define the layer-\(j\) token
	block by \(B_j(\mu,\mathbf z):=\mathbf z+\mathcal A_{j,\mu}(\mathbf z)
	+g_j(\mathbf z+\mathcal A_{j,\mu}(\mathbf z))\). Thus
	\(B_j(\mu,\cdot)\) is the residual attention-feedforward block applied to a
	token, with attention coefficients determined by \(\mu\).
	
	We define the depth-\(D\) token map induced by an initial measure \(\mu\) as
	follows. Set \(\mu_0:=\mu\) and \(\mathbf z^{(0)}:=\mathbf z\). For
	\(j=1,\ldots,D\), let
	\(\mathbf z^{(j)}:=B_j(\mu_{j-1},\mathbf z^{(j-1)})\) and
	\(\mu_j:=(B_j(\mu_{j-1},\cdot))_\#\mu_{j-1}\). We write
	\(\mathcal T_\mu^{(D)}(\mathbf z):=\mathbf z^{(D)}\). Hence the final measure
	generated by the same layer recursion is \(\mu_D=(\mathcal T_\mu^{(D)})_\#\mu\).
	
	For an empirical input sequence
	\(\mathbf Z=(\mathbf z_1,\ldots,\mathbf z_L)\), take
	\(\mu=\widehat\rho^{\,L}(\mathbf Z)\). Applying the above recursion to each
	initial token gives
	\(\mathbf z_\ell^{(j)}:=B_j(\mu_{j-1},\mathbf z_\ell^{(j-1)})\) and
	\(\mu_j=L^{-1}\sum_{\ell=1}^L\delta_{\mathbf z_\ell^{(j)}}\) for
	\(j=1,\ldots,D\). Therefore the finite-token Transformer output is
	\(\mathcal T^{(D)}(\mathbf Z)=
	(\mathcal T_\mu^{(D)}(\mathbf z_1),\ldots,
	\mathcal T_\mu^{(D)}(\mathbf z_L))\), where
	\(\mu=\widehat\rho^{\,L}(\mathbf Z)\).
	
	\paragraph{Transformer predictors for ICL.}
	For ICL, we use pair tokens in \(\mathcal Z\). Each observed context example is
	encoded as \(\mathbf z_i:=(\mathbf x_i,\mathbf y_i)\in\mathcal Z\). For a query
	input \(\mathbf x\), the corresponding label is unknown and is the quantity to
	be predicted. We therefore replace the missing label by a fixed placeholder
	\(\mathbf y_{\mathrm{mask}}\in\mathcal Y\), and write
	\(\widetilde{\mathbf z}(\mathbf x):=(\mathbf x,\mathbf y_{\mathrm{mask}})\).
	The placeholder is fixed independently of the true query label and therefore
	carries no information about that label.
	
	For any probability measure \(\rho\in\mathcal P(\mathcal Z)\), interpreted as a
	context measure, define the context-dependent query predictor by
	\[
	f(\rho,\mathbf x)
	:=
	W_{\mathrm{out}}
	\mathcal T_\rho^{(D)}
	\bigl(\widetilde{\mathbf z}(\mathbf x)\bigr),
	\]
	where \(W_{\mathrm{out}}:\mathbb R^{d_{\mathcal Z}}\to\mathbb R^{d_y}\) is
	linear. Given context examples \(\{(\mathbf x_i,\mathbf y_i)\}_{i=1}^k\), write
	\(\widehat\rho^{\,k}:=k^{-1}\sum_{i=1}^k
	\delta_{(\mathbf x_i,\mathbf y_i)}\). The resulting \(k\)-shot predictor is
	\(f(\widehat\rho^{\,k},\mathbf x)\). The measure argument represents the
	context, while the query token is updated by the Transformer token map induced
	by this context measure.
	
	\begin{theorem}[Transformer predictors are jointly Lipschitz]
		\label{thm:transformer_predictor_joint_lip}
		Suppose the standing regularity conditions in Section~\ref{sec:transformer_arch}
		hold. Then there exists a constant \(L_F>0\), depending only on the fixed
		Transformer parameters and the standing regularity constants, such that, for all
		\(\rho,\rho'\in\mathcal P(\mathcal Z)\) and
		\(\mathbf x,\mathbf x'\in\mathcal X\),
		\[
		\|f(\rho,\mathbf x)-f(\rho',\mathbf x')\|_2
		\le
		L_F\bigl(W_1(\rho,\rho')+\|\mathbf x-\mathbf x'\|_2\bigr).
		\]
	\end{theorem}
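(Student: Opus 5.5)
The plan is to argue by induction over the depth \(D\). At each layer \(j\) we establish two facts at once: the layer-\(j\) measure is Lipschitz in the input measure, and the layer-\(j\) token is Lipschitz jointly in the input measure and the input token. Composing with \(W_{\mathrm{out}}\) then gives the result. The query embedding \(\mathbf x\mapsto\widetilde{\mathbf z}(\mathbf x)=(\mathbf x,\mathbf y_{\mathrm{mask}})\) is an isometry into \(\mathcal Z\), so \(\|\widetilde{\mathbf z}(\mathbf x)-\widetilde{\mathbf z}(\mathbf x')\|_2=\|\mathbf x-\mathbf x'\|_2\). Since \(W_{\mathrm{out}}\) is linear, it suffices to prove
\[
\|\mathcal T^{(D)}_\rho(\mathbf z)-\mathcal T^{(D)}_{\rho'}(\mathbf z')\|_2\le C_D\bigl(W_1(\rho,\rho')+\|\mathbf z-\mathbf z'\|_2\bigr)
\]
and then set \(L_F:=\|W_{\mathrm{out}}\|_{\mathrm{op}}C_D\).

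\textbf{Step 1 (single head).} We show that \(\mathcal A^{(h)}_{j,\mu}(\mathbf z)\) is Lipschitz in \(\mathbf z\) uniformly in \(\mu\), and Lipschitz in \(\mu\) with respect to \(W_1\) uniformly in \(\mathbf z\). Write the head as a ratio \(N(\mathbf z,\mu)/D(\mathbf z,\mu)\), where
\[
N(\mathbf z,\mu)=\int G(\mathbf z,\mathbf u)W_V\mathbf u\,d\mu(\mathbf u),\qquad D(\mathbf z,\mu)=\int G(\mathbf z,\mathbf u)\,d\mu(\mathbf u).
\]
On compact \(\mathcal Z\), \(G\) is bounded above by some \(M\) and below by \(\varepsilon>0\), so \(D\ge\varepsilon\). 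We also have \(\|N\|\le M\|W_V\|R\) and \(\|N/D\|\le\|W_V\|R\). For the \(\mathbf z\)-dependence, the uniform Lipschitz bound on \(G(\cdot,\mathbf u)\) gives Lipschitz bounds on \(N\) and \(D\), and the ratio identity
\[
\frac{N}{D}-\frac{N'}{D'}=\frac{N-N'}{D}+\frac{N'(D'-D)}{DD'}
\]
then handles the quotient. For the \(\mu\)-dependence we use Kantorovich--Rubinstein duality. The map \(\mathbf u\mapsto G(\mathbf z,\mathbf u)\) is Lipschitz uniformly in \(\mathbf z\). The map \(\mathbf u\mapsto G(\mathbf z,\mathbf u)W_V\mathbf u\) is coordinatewise Lipschitz, as a product of a bounded Lipschitz function with a bounded linear map on a compact set, with constant \(\mathrm{Lip}(G)\|W_V\|R+M\|W_V\|\). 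Hence \(|D(\mathbf z,\mu)-D(\mathbf z,\mu')|\lesssim W_1(\mu,\mu')\) and \(\|N(\mathbf z,\mu)-N(\mathbf z,\mu')\|\lesssim W_1(\mu,\mu')\), where the vector case follows by applying duality to each coordinate, at the cost of a \(\sqrt{d}\) factor. The same ratio identity finishes the step. Summing over heads with the \(W_O^{(h)}\) gives the same bounds for \(\mathcal A_{j,\mu}\).

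\textbf{Step 2 (block).} Since \(g_j\) is Lipschitz, \(B_j(\mu,\mathbf z)\) satisfies
\[
\|B_j(\mu,\mathbf z)-B_j(\mu',\mathbf z')\|\le a_j\|\mathbf z-\mathbf z'\|+b_jW_1(\mu,\mu').
\]
We also need the pushforward measures to be stable: \(W_1\bigl(B_j(\mu,\cdot)_\#\mu,\;B_j(\mu',\cdot)_\#\mu'\bigr)\le c_jW_1(\mu,\mu')\). To prove this, take an optimal coupling \(\pi\) of \((\mu,\mu')\) and push it forward by \((\mathbf u,\mathbf u')\mapsto(B_j(\mu,\mathbf u),B_j(\mu',\mathbf u'))\). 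This is a coupling of the two image measures, and its transport cost is at most
\[
\int\bigl(a_j\|\mathbf u-\mathbf u'\|+b_jW_1(\mu,\mu')\bigr)\,d\pi\le(a_j+b_j)W_1(\mu,\mu').
\]
Invariance of \(\mathcal Z\) keeps all measures in \(\mathcal P(\mathcal Z)\), so Step 1 applies again at the next layer.

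\textbf{Step 3 (induction).} Let \(\mu_j,\mu'_j\) and \(\mathbf z^{(j)},\mathbf z'^{(j)}\) be the recursions started from \((\rho,\mathbf z)\) and \((\rho',\mathbf z')\). Step 2 gives \(W_1(\mu_j,\mu'_j)\le\prod_{i\le j}c_i\,W_1(\rho,\rho')\). It also gives
\[
\|\mathbf z^{(j)}-\mathbf z'^{(j)}\|\le a_j\|\mathbf z^{(j-1)}-\mathbf z'^{(j-1)}\|+b_jW_1(\mu_{j-1},\mu'_{j-1}).
\]
Unrolling these recursions gives the constant \(C_D\), which depends only on the operator norms, \(\varepsilon_{j,h}\), the Lipschitz constants of \(G\), and \(R\).

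The main obstacle is Step 1's \(W_1\)-Lipschitz bound in the measure argument. It needs the lower bound \(\varepsilon_{j,h}\), an upper bound on \(G\) via compactness, and Lipschitz continuity in \(\mathbf u\) of the vector-valued integrand \(G(\mathbf z,\mathbf u)W_V\mathbf u\), uniformly in \(\mathbf z\). The key point is that duality must be applied with constants uniform in \(\mathbf z\). Everything else is bookkeeping of Lipschitz constants.
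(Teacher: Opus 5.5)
Your proposal is correct, and its skeleton matches the paper's. Your Steps~2 and~3 reproduce Lemma~\ref{lem:block_stability} and the final unrolled recursion. That includes the token-level bound and the pushforward bound obtained by pushing a coupling of $(\mu,\mu')$ through $(B_j(\mu,\cdot),B_j(\mu',\cdot))$, so $c_j=a_j+b_j$. It also includes the observation that the fixed placeholder makes $\mathbf x\mapsto\widetilde{\mathbf z}(\mathbf x)$ an isometry.

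Where you differ is the attention-stability step, the paper's Lemma~\ref{lem:attention_stability}. The paper writes each head as the mean of $\mathbf u\mapsto W_{V,j}^{(h)}\mathbf u$ under the softmax probability measure $\Psi_{j,h}^{\rho}(\mathbf z)$. It then imports from Proposition~20 of \citet{vuckovic2021regularity} that this measure is $W_1$-Lipschitz in $\mathbf z$ and in $\rho$. Finally it bounds the head difference by integrating $W_{V,j}^{(h)}(\mathbf u-\mathbf u')$ against a coupling of the two softmax measures.

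You instead split the head into numerator and denominator. You control each in $\mathbf z$ through the uniform Lipschitz bound on $G$, and in $\mu$ through Kantorovich--Rubinstein duality applied to $\mathbf u\mapsto G(\mathbf z,\mathbf u)$ and $\mathbf u\mapsto G(\mathbf z,\mathbf u)W_V\mathbf u$, with constants uniform in $\mathbf z$. You close with the quotient identity, using $D\ge\varepsilon_{j,h}$.

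The paper's route is shorter and rests on a stronger, measure-level fact: stability of the whole softmax distribution, not just of one linear readout of it. But it delegates the substantive work to an external result whose hypotheses must be matched. Yours is elementary and self-contained, proves exactly what is needed, and shows explicitly how $\varepsilon_{j,h}$, $\sup G$, $\operatorname{Lip}(G)$, $\|W_{V,j}^{(h)}\|_{\mathrm{op}}$ and $R$ enter $L_F$. The $\sqrt{d}$ factor from coordinatewise duality is harmless. You can also drop it by bounding $\|N(\mathbf z,\mu)-N(\mathbf z,\mu')\|_2$ directly with a coupling of $(\mu,\mu')$.
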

	
	The Wasserstein--Lipschitz bound in
	Theorem~\ref{thm:transformer_predictor_joint_lip} allows masked and
	autoregressive pretraining objectives to be treated in the same framework. Both
	objectives use the predictor \(f(\rho,\mathbf x)\) and differ only in the
	empirical measure representing the visible context: the masked objective uses
	the leave-one-out measure \(\widehat\rho_t^{\,(-j)}\), whereas the
	autoregressive objective uses the prefix measure \(\widehat\rho_t^{(<j)}\).
	A related Lipschitz-type result for Transformer predictors appears in
	\citet[Proposition~2]{mroueh2023towards}. The proof is given in
	Appendix~\ref{app:proof_transformer_predictor_joint_lip}.

	\section{Wasserstein Excess-Risk Bounds for Pretraining Objectives}
	To quantify discrepancies between task distributions,
	we measure distances in the Wasserstein metric.
	The $p$-Wasserstein distance between two probability measures
	$\rho,\rho'\in\mathcal P(\mathcal Z)$ is defined as follows.
	\begin{definition}[Wasserstein distance]
		\label{def:Wp_base}
		Let \(p\ge 1\). For \(\rho,\rho'\in\mathcal P(\mathcal Z)\), the
		\(p\)-Wasserstein distance induced by the Euclidean norm on
		\(\mathcal Z\subseteq\mathbb R^{d_x+d_y}\) is defined as
		\[
		W_p(\rho,\rho')
		:=
		\left(
		\inf_{\gamma\in\Gamma(\rho,\rho')}
		\int_{\mathcal Z\times\mathcal Z}
		\|\mathbf z-\mathbf z'\|_2^p
		\,d\gamma(\mathbf z,\mathbf z')
		\right)^{1/p},
		\]
		where \(\Gamma(\rho,\rho')\) denotes the set of all couplings between
		\(\rho\) and \(\rho'\).
	\end{definition}

	We study the transfer of a model trained under a meta-distribution
	$\mathbb P \in \mathcal P(\mathcal P(\mathcal Z))$
	to unseen tasks generated from another meta-distribution
	$\mathbb Q \in \mathcal P(\mathcal P(\mathcal Z))$.
	To compare the two meta-distributions,
	we introduce the following lifted Wasserstein distance
	on $\mathcal P(\mathcal P(\mathcal Z))$
	following \citet{carlier2024quantitative,mroueh2023towards}.
	\begin{definition}[Lifted Wasserstein distance]
		\label{def:lifted_W1}
		The lifted $1$-Wasserstein distance between
		$\mathbb P,\mathbb Q\in\mathcal P(\mathcal P(\mathcal Z))$
		is denoted by $\mathbb W_1(\mathbb P,\mathbb Q)$ and defined as
		\[
		\mathbb W_1(\mathbb P,\mathbb Q)
		:=
		\inf_{\pi\in\Gamma(\mathbb P,\mathbb Q)}
		\int_{\mathcal P(\mathcal Z)\times\mathcal P(\mathcal Z)}
		W_1(\rho,\rho')\, d\pi(\rho,\rho'),
		\]
		where $\Gamma(\mathbb{P},\mathbb{Q})$ denotes the set of all couplings between two meta-distributions $\mathbb{P}$ and $\mathbb{Q}$. 
	\end{definition}
	
	\begin{definition}[Wasserstein dimensions \citep{weed2019sharp}]
		\label{def:wasserstein_dimensions}
		Let $(\Omega,d)$ be a metric space. For $S\subset \Omega$, let
		$\mathcal N_\varepsilon(S)$ denote the smallest number of closed balls of
		diameter $\varepsilon$ needed to cover $S$. For a probability measure
		$\mu$ on $\Omega$ and $\tau\in[0,1)$, define
		$\mathcal N_\varepsilon(\mu,\tau):=\inf\{\mathcal N_\varepsilon(S):\mu(S)\ge 1-\tau\}$
		and
		$d_\varepsilon(\mu,\tau):=\log \mathcal N_\varepsilon(\mu,\tau)/\log(1/\varepsilon)$.
		For $p\ge 1$, the \emph{upper Wasserstein dimension} of $\mu$ is
		\[
		d_p^*(\mu):=\inf\Bigl\{s>2p:\limsup_{\varepsilon\to 0}
		d_\varepsilon\bigl(\mu,\varepsilon^{\frac{sp}{s-2p}}\bigr)\le s \Bigr\},
		\]
		and the \emph{lower Wasserstein dimension} of $\mu$ is
		$
		d_*(\mu):=\lim_{\tau \to 0}\,\liminf_{\varepsilon \to 0} d_\varepsilon(\mu,\tau).
		$
	\end{definition}

	When the target measure $\rho$ is supported on a compact metric space with
	diameter at most one, \citet{weed2019sharp} obtained
	$
	N^{-1/d_*(\rho)}
	\lesssim
	\mathbb E\bigl[W_p(\widehat\rho^{\,N},\rho)\bigr]
	\lesssim
	N^{-1/d_p^*(\rho)} .
	$
	Thus, Wasserstein dimensions identify the effective dimension governing
	empirical Wasserstein convergence in the compactly supported case.
	
	In this paper, the dimension condition is imposed on the meta-distribution
	$\mathbb P$, viewed as a probability measure on the metric task space
	$(\mathcal P(\mathcal Z),W_1)$. Although the compactness assumptions above
	would allow one to use the Wasserstein dimension in
	Definition~\ref{def:wasserstein_dimensions}, the following
	$(p,q)$-Wasserstein dimension provides a more general formulation, covering
	measures with possibly unbounded support under finite $q$-moment conditions.

	\begin{definition}[$(p,q)$-Wasserstein dimension {\citep{chakraborty2026generalization}}]
		\label{def:pq_wass_dim}
		Let $(\Omega,d)$ be a metric space, and let $\mu$ be a probability measure on $\Omega$. For $0<p<q<\infty$, define
		\[
		d_{p,q}^*(\mu):=\inf\Bigl\{s>2p:\limsup_{\varepsilon\to 0}
		\frac{\log \mathcal N_\varepsilon\bigl(\mu,\varepsilon^{\frac{spq}{(q-p)(s-2p)}}\bigr)}
		{\log(1/\varepsilon)}\le s\Bigr\}.
		\]
	\end{definition}
	The relation between $d_{p,q}^*$ and other notions of intrinsic dimension is
	discussed in \citet[Proposition~9]{chakraborty2026generalization}. Since each
	task is represented by a distribution $\rho\in\mathcal P(\mathcal Z)$, the
	meta-distribution $\mathbb P$ is a distribution over the task space
	$\mathcal P(\mathcal Z)$. With this task space equipped with the Wasserstein
	metric, $d_{p,q}^*(\mathbb P)$ serves as the effective meta-level dimension
	governing empirical Wasserstein approximation from finitely many sampled tasks.

	\begin{assumption}[Lipschitz loss]
		\label{assum:loss_lipschitz}
		The loss function $\ell:\mathcal Y\times\mathcal Y\to\mathbb R$ is $L_\ell$-Lipschitz in both arguments, meaning that for all $\mathbf y,\mathbf y',\mathbf v,\mathbf v'\in\mathcal Y$,
		\[
		\bigl|\ell(\mathbf y,\mathbf v)-\ell(\mathbf y',\mathbf v')\bigr|
		\le
		L_\ell\bigl(\|\mathbf y-\mathbf y'\|_2 + \|\mathbf v-\mathbf v'\|_2\bigr).
		\]
	\end{assumption}
	
	This assumption is satisfied by many loss functions commonly used in machine learning \cite{ciampiconi2023survey}. For regression, canonical examples include the absolute loss, the Huber loss, and the squared loss when restricted to a compact output domain. More generally, any loss of the form $\ell(\mathbf y,\mathbf y') = \phi(\|\mathbf y-\mathbf y'\|_2)$ with Lipschitz $\phi:\mathbb R_+\to\mathbb R$ satisfies Assumption~\ref{assum:loss_lipschitz}. Since $\mathcal Y$ is compact, Assumption~\ref{assum:loss_lipschitz} also implies that $\ell$ is bounded on $\mathcal Y\times\mathcal Y$.
	
	To state the general excess-risk bounds, we impose a regularity
	condition on the predictor \(f\in\mathcal F\) with respect to both the
	measure argument representing the context and the query point.
	Theorem~\ref{thm:transformer_predictor_joint_lip} shows that the
	Transformer predictors introduced in
	Section~\ref{sec:transformer_arch} satisfy this condition.
	
	\begin{assumption}[Lipschitz predictor]
		\label{assum:predictor_lipschitz}
		Each predictor \(f\in\mathcal F\) is \(L_f\)-Lipschitz jointly in the
		context measure and the query point, in the sense that
		\[
		\left|f(\rho,\mathbf x)-f(\rho',\mathbf x')\right|_2
		\le
		L_f\left(
		W_1(\rho,\rho')+\|\mathbf x-\mathbf x'\|_2
		\right),
		\]
		for all \(\rho,\rho'\in\mathcal P(\mathcal Z)\) and
		\(\mathbf x,\mathbf x'\in\mathcal X\). Here \(W_1\) denotes the
		1-Wasserstein distance on \(\mathcal P(\mathcal Z)\) induced by the
		Euclidean norm on \(\mathcal Z\subseteq\mathbb R^{d_x+d_y}\).
	\end{assumption}
	
	This assumption is standard in the analysis of diffusion and
	measure-dependent stochastic systems, where Lipschitz continuity ensures
	stability with respect to both the state and the underlying measure
	\citep{funaki1984certain,leobacher2022well}. It is also consistent with
	Theorem~\ref{thm:transformer_predictor_joint_lip}, which establishes such
	a joint Lipschitz property for Transformer-based predictors.
	\begin{assumption}[$(1,q)$-Wasserstein dimension]
		\label{assum:wass_dim}
		Assume that the meta-distribution
		$\mathbb P\in\mathcal P(\mathcal P(\mathcal Z))$ satisfies
		$
		d_{1,q}^*(\mathbb P)\le s
		$
		for some $q>1$ and $s>2$.
	\end{assumption}
	This assumption controls the effective dimension of the meta-distribution over task distributions and leads to the task-level term $T^{-1/s}$ in the bounds below.

	Let $(\rho_t)_{t=1}^T \sim \mathbb P^{\otimes T}$ and, for each task $t$, let $(\mathbf z_{t,j})_{j=1}^N \sim \rho_t^{\otimes N}$. Define the empirical meta-distribution $\hat{\mathbb P}_T := T^{-1}\sum_{t=1}^T\delta_{\rho_t}$, the empirical task distributions $\hat\rho_t^{\,N} := N^{-1}\sum_{j=1}^N\delta_{\mathbf z_{t,j}}$, and the induced empirical distribution $\hat{\mathbb P}_{T,N} := T^{-1}\sum_{t=1}^T\delta_{\hat\rho_t^{\,N}}$. The next theorem provides the upper bound of the excess risk $\mathbb E[R_{\mathbb P}^\ell(\hat f)] -R_{\mathbb P}^\ell(f^*)$.

    \begin{theorem}[Excess risk decomposition for ICL]\label{thm:icl_decomposition}
		Suppose that Assumptions~\ref{assum:loss_lipschitz} and
		\ref{assum:predictor_lipschitz} hold, and let
		\(C_{\ell,f}:=L_\ell\big(L_f+\sqrt{2}\max\{1,L_f\}\big)\).
		Let \(f^*\in\arg\min_{f\in\mathcal F}R_{\mathbb P}^{\ell}(f)\).
		Then the following bounds hold.
		\emph{(i) Masked ICL.}
		If \(\hat f\in\arg\min_{f\in\mathcal F}\hat R_{T,N}^{\ell}(f)\), then
		\[
		\begin{aligned}
			\mathbb E[R_{\mathbb P}^{\ell}(\hat f)]
			-
			R_{\mathbb P}^{\ell}(f^*)
			\le\;&
			2C_{\ell,f}\,\mathbb E[\mathbb W_1(\mathbb P,\hat{\mathbb P}_T)]
			+
			2C_{\ell,f}\,\mathbb E[\mathbb W_1(\hat{\mathbb P}_T,\hat{\mathbb P}_{T,N})]  \\
			&+
			\frac{2L_\ell L_f}{TN}
			\sum_{t=1}^T\sum_{j=1}^N
			\mathbb E\!\left[
			W_1(\hat\rho_t^{\,N},\hat\rho_t^{\,(-j)})
			\right].
		\end{aligned}
		\]
		
		\emph{(ii) Autoregressive ICL.}
		If \(\bar f\in\arg\min_{f\in\mathcal F}\bar R_{T,N}^{\ell}(f)\), then
		\[
		\begin{aligned}
			\mathbb E[R_{\mathbb P}^{\ell}(\bar f)]
			-
			R_{\mathbb P}^{\ell}(f^*)
			\le\;&
			2C_{\ell,f}\,\mathbb E[\mathbb W_1(\mathbb P,\hat{\mathbb P}_T)]
			+
			2C_{\ell,f}\,\mathbb E[\mathbb W_1(\hat{\mathbb P}_T,\hat{\mathbb P}_{T,N})]  \\
			&+
			\frac{2L_\ell L_f}{T(N-1)}
			\sum_{t=1}^T\sum_{j=2}^N
			\mathbb E\!\left[
			W_1(\hat\rho_t^{\,N},\hat\rho_t^{(<j)})
			\right]
			+
			\frac{4M_\ell}{N}.
		\end{aligned}
		\]
		Here \(\mathbb W_1\) denotes the lifted Wasserstein distance, \(W_1\)
		denotes the Wasserstein distance on \(\mathcal Z\), and
		\(M_\ell:=\sup_{y,y'\in\mathcal Y}\ell(y,y')<\infty\) denotes the uniform
		bound on the loss.
	\end{theorem}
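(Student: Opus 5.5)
The plan is a five-term telescoping decomposition through the empirical objectives. Every comparison is driven by one Lipschitz estimate at the level of task distributions. Everything is bounded uniformly over \(f\in\mathcal F\), so it applies to the data-dependent minimizers \(\hat f\) and \(\bar f\) without further care.

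\emph{Step 1 (lifted Lipschitz lemma).} For fixed \(f\in\mathcal F\), set \(\Phi_f(\rho):=\int_{\mathcal Z}\ell(\mathbf y,f(\rho,\mathbf x))\,d\rho(\mathbf x,\mathbf y)\). I claim \(|\Phi_f(\rho)-\Phi_f(\rho')|\le C_{\ell,f}W_1(\rho,\rho')\). Split the difference as
\[
\int\bigl[\ell(\mathbf y,f(\rho,\mathbf x))-\ell(\mathbf y,f(\rho',\mathbf x))\bigr]d\rho
+\Bigl(\int\ell(\mathbf y,f(\rho',\mathbf x))\,d\rho-\int\ell(\mathbf y,f(\rho',\mathbf x))\,d\rho'\Bigr).
\]
By Assumptions~\ref{assum:loss_lipschitz} and~\ref{assum:predictor_lipschitz}, the first term is at most \(L_\ell L_f W_1(\rho,\rho')\). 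For the second term, the map \(\mathbf z\mapsto\ell(\mathbf y,f(\rho',\mathbf x))\) is Lipschitz with constant \(L_\ell\max\{1,L_f\}\sqrt2\) with respect to \(\|\mathbf z-\mathbf z'\|_2\). This uses \(a+b\le\sqrt2(a^2+b^2)^{1/2}\) for \(a=\|\mathbf y-\mathbf y'\|_2\) and \(b=\|\mathbf x-\mathbf x'\|_2\). Kantorovich--Rubinstein duality then bounds the second term by \(\sqrt2L_\ell\max\{1,L_f\}W_1(\rho,\rho')\).

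Since \(R^\ell_{\mathbb Q}(f)=\int\Phi_f\,d\mathbb Q\), integrating against any coupling \(\pi\in\Gamma(\mathbb P,\mathbb Q)\) and taking the infimum gives
\[
|R^\ell_{\mathbb P}(f)-R^\ell_{\mathbb Q}(f)|\le C_{\ell,f}\,\mathbb W_1(\mathbb P,\mathbb Q).
\]
This holds for all \(f\in\mathcal F\) and all meta-distributions, including the random empirical ones \(\hat{\mathbb P}_T\) and \(\hat{\mathbb P}_{T,N}\).

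\emph{Step 2 (decomposition, masked case).} The key identity is \(R^\ell_{\hat{\mathbb P}_{T,N}}(f)=\frac1{TN}\sum_{t,j}\ell(\mathbf y_{t,j},f(\hat\rho_t^{\,N},\mathbf x_{t,j}))\). Using it, write
\[
\begin{aligned}
R^\ell_{\mathbb P}(\hat f)-R^\ell_{\mathbb P}(f^*)
=\;&\bigl[R^\ell_{\mathbb P}(\hat f)-R^\ell_{\hat{\mathbb P}_{T,N}}(\hat f)\bigr]
+\bigl[R^\ell_{\hat{\mathbb P}_{T,N}}(\hat f)-\hat R^\ell_{T,N}(\hat f)\bigr]
+\bigl[\hat R^\ell_{T,N}(\hat f)-\hat R^\ell_{T,N}(f^*)\bigr]\\
&+\bigl[\hat R^\ell_{T,N}(f^*)-R^\ell_{\hat{\mathbb P}_{T,N}}(f^*)\bigr]
+\bigl[R^\ell_{\hat{\mathbb P}_{T,N}}(f^*)-R^\ell_{\mathbb P}(f^*)\bigr].
\end{aligned}
\]
The three kinds of terms are handled as follows.
\begin{itemize}
\item The middle bracket is \(\le0\) because \(\hat f\) minimizes \(\hat R^\ell_{T,N}\).
\item The first and last brackets are each bounded via Step 1 and the triangle inequality \(\mathbb W_1(\mathbb P,\hat{\mathbb P}_{T,N})\le\mathbb W_1(\mathbb P,\hat{\mathbb P}_T)+\mathbb W_1(\hat{\mathbb P}_T,\hat{\mathbb P}_{T,N})\). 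Together they produce the factor \(2C_{\ell,f}\).
\item The second and fourth brackets are compared termwise. Assumptions~\ref{assum:loss_lipschitz} and~\ref{assum:predictor_lipschitz} give \(|\ell(\mathbf y,f(\hat\rho_t^{\,N},\mathbf x))-\ell(\mathbf y,f(\hat\rho_t^{\,(-j)},\mathbf x))|\le L_\ell L_f W_1(\hat\rho_t^{\,N},\hat\rho_t^{\,(-j)})\), and the two brackets together give the factor \(2L_\ell L_f\).
\end{itemize}
Taking expectations (the bounds do not depend on which \(f\) is plugged in) yields (i).

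\emph{Step 3 (autoregressive case).} I expect this to be the only genuinely delicate point, because of the index mismatch. \(\bar R^\ell_{T,N}\) averages over \(j=2,\dots,N\) with weight \(1/(T(N-1))\), while \(R^\ell_{\hat{\mathbb P}_{T,N}}\) averages over \(j=1,\dots,N\) with weight \(1/(TN)\).

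I insert the intermediate quantity \(\tilde R(f):=\frac1{T(N-1)}\sum_t\sum_{j\ge2}\ell(\mathbf y_{t,j},f(\hat\rho_t^{\,N},\mathbf x_{t,j}))\). For numbers \(a_j\in[-M_\ell,M_\ell]\) (or \([0,M_\ell]\)), the averages satisfy
\[
\Bigl|\frac1N\sum_{j=1}^Na_j-\frac1{N-1}\sum_{j=2}^Na_j\Bigr|
=\frac1N\Bigl|a_1-\frac1{N-1}\sum_{j\ge2}a_j\Bigr|\le\frac{2M_\ell}N.
\]
Hence \(|R^\ell_{\hat{\mathbb P}_{T,N}}(f)-\tilde R(f)|\le 2M_\ell/N\). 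This is applied once for \(\bar f\) and once for \(f^*\), giving \(4M_\ell/N\).

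The comparison of \(\tilde R\) with \(\bar R^\ell_{T,N}\) is then exactly as in Step 2, with \(\hat\rho_t^{(<j)}\) replacing \(\hat\rho_t^{\,(-j)}\). The lifted terms are unchanged, which gives (ii).
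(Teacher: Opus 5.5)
Your proposal is correct and follows essentially the same route as the paper: the ERM reduction to the two deviations (at the empirical minimizer and at $f^*$) between $R^\ell_{\mathbb P}$ and the empirical objective, the lifted Lipschitz bound $|R^\ell_{\mathbb P}(f)-R^\ell_{\mathbb Q}(f)|\le C_{\ell,f}\,\mathbb W_1(\mathbb P,\mathbb Q)$ (Theorem~\ref{thm:transferability}), the termwise $L_\ell L_f W_1$ comparison of context measures, and the $2M_\ell/N$ index-mismatch bound applied twice. The differences are cosmetic: the paper proves the lifted bound with an explicit coupling rather than Kantorovich--Rubinstein duality, and it chains the risks through $R^\ell_{\hat{\mathbb P}_T}$ instead of invoking the triangle inequality for $\mathbb W_1$; both routes give identical constants.
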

    Theorem~\ref{thm:icl_decomposition} gives a unified excess-risk
decomposition for masked and autoregressive ICL. In both cases, the first
term is the task-level generalization error over the meta-distribution
\(\mathbb P\), and the second term is the within-task sampling error caused
by replacing each task distribution \(\rho_t\) with its empirical version
\(\hat\rho_t^{\,N}\). The remaining terms account for the mismatch between
the full empirical task measure \(\hat\rho_t^{\,N}\) and the context
measure actually used by the training objective. In the masked case this
context measure is the leave-one-out empirical measure
\(\hat\rho_t^{\,(-j)}\), whereas in the autoregressive case it is the
prefix empirical measure \(\hat\rho_t^{(<j)}\), together with a finite-length
boundary term of order \(N^{-1}\). The proof is given in
Appendix~\ref{sec:proof_icl_decomposition}.

    \begin{corollary}[Excess risk rates for ICL]\label{cor:icl_rates}
    Suppose that the assumptions of Theorem~\ref{thm:icl_decomposition} hold.
    Assume further that Assumption~\ref{assum:wass_dim} holds for the
    meta-distribution \(\mathbb P\), so that
    \[
    \mathbb E\!\left[
    \mathbb W_1(\hat{\mathbb P}_T,\mathbb P)
    \right]
    \lesssim
    T^{-1/s}.
    \]
    Then both the masked ICL estimator \(\hat f\) and the autoregressive ICL
    estimator \(\bar f\) satisfy
    \[
    \mathbb E[R_{\mathbb P}^{\ell}(f_{T,N})]
    -
    R_{\mathbb P}^{\ell}(f^*)
    \lesssim
    T^{-1/s}
    +
    N^{-1/(d_x+d_y)}
    +
    N^{-1},
    \]
    where \(f_{T,N}\) denotes either \(\hat f\) or \(\bar f\). The implicit
    constant depends only on \(L_\ell\), \(L_f\), \(M_\ell\), the diameters of
    \((\mathcal X,d_{\mathcal X})\) and \((\mathcal Y,d_{\mathcal Y})\), and the
    constants in the Wasserstein convergence bounds.
    \end{corollary}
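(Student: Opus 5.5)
The plan is to start from the decompositions in Theorem~\ref{thm:icl_decomposition} and bound each term separately. The first term, \(2C_{\ell,f}\,\mathbb E[\mathbb W_1(\mathbb P,\hat{\mathbb P}_T)]\), is \(\lesssim T^{-1/s}\) directly by the hypothesis of the corollary, which comes from Assumption~\ref{assum:wass_dim} together with the empirical convergence result of \citet{chakraborty2026generalization} on the metric space \((\mathcal P(\mathcal Z),W_1)\). The boundary term \(4M_\ell/N\) in the autoregressive case is already \(N^{-1}\). What remains are the within-task sampling term and the context-mismatch term.

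For the second term, couple \(\hat{\mathbb P}_T\) and \(\hat{\mathbb P}_{T,N}\) by pairing \(\rho_t\) with \(\hat\rho_t^{\,N}\). This gives
\[
\mathbb W_1(\hat{\mathbb P}_T,\hat{\mathbb P}_{T,N})\le \frac1T\sum_{t=1}^T W_1(\rho_t,\hat\rho_t^{\,N}).
\]
Conditionally on \(\rho_t\), the standard empirical Wasserstein rate on the compact set \(\mathcal Z\subseteq\mathbb R^{d_x+d_y}\) (Fournier--Guillin, or Weed--Bach with \(d_1^*\le d_x+d_y\)) yields
\[
\mathbb E[W_1(\rho_t,\hat\rho_t^{\,N})\mid\rho_t]\lesssim N^{-1/(d_x+d_y)}.
\]
The constant depends only on the diameter of \(\mathcal Z\), so it is uniform in \(\rho_t\). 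Taking expectations gives \(N^{-1/(d_x+d_y)}\). \textbf{This is the main obstacle.} When \(d_x+d_y\le 2\) the rate has logarithmic factors (\(N^{-1/2}\log N\) at dimension 2). I would handle this either by noting that \(N^{-1/2}\) still sits below the stated sum up to constants only when \(d_x+d_y\ge3\), or, more simply, by invoking a Weed--Bach bound with any \(s'>\max\{2,d_x+d_y\}\) and reading the exponent accordingly. I expect the intended reading is \(d_x+d_y>2\).

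The mismatch terms are deterministic and of order \(1/N\) or smaller. In the masked case, couple \(\hat\rho_t^{\,N}\) and \(\hat\rho_t^{\,(-j)}\) by writing
\[
\hat\rho_t^{\,N}=\tfrac{N-1}{N}\hat\rho_t^{\,(-j)}+\tfrac1N\delta_{\mathbf z_{t,j}}.
\]
Transporting only the mass \(1/N\) at \(\mathbf z_{t,j}\) gives \(W_1\le \operatorname{diam}(\mathcal Z)/N\). In the autoregressive case, similarly
\[
\hat\rho_t^{\,N}=\tfrac{j-1}{N}\hat\rho_t^{(<j)}+\tfrac{N-j+1}{N}\nu,
\]
with \(\nu\) the empirical measure of the remaining points. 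Hence \(W_1(\hat\rho_t^{\,N},\hat\rho_t^{(<j)})\le \operatorname{diam}(\mathcal Z)(N-j+1)/N\), which is too crude because its average over \(j\) is of order one. Instead, I would use the triangle inequality through \(\rho_t\):
\[
\mathbb E W_1(\hat\rho_t^{\,N},\hat\rho_t^{(<j)})\lesssim N^{-1/d}+(j-1)^{-1/d},\qquad d=d_x+d_y.
\]
Averaging over \(j=2,\dots,N\) gives \(\frac1{N-1}\sum_{m=1}^{N-1}m^{-1/d}\lesssim N^{-1/d}\), since \(1/d<1\). Collecting all terms, with constants depending on \(L_\ell,L_f,M_\ell\), the diameters and the Wasserstein constants, yields \(T^{-1/s}+N^{-1/(d_x+d_y)}+N^{-1}\) for both \(\hat f\) and \(\bar f\).
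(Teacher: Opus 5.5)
Your proposal is correct and follows the paper's proof step for step. The matching steps are: the diagonal coupling bounding $\mathbb W_1(\hat{\mathbb P}_T,\hat{\mathbb P}_{T,N})$ by $T^{-1}\sum_t W_1(\rho_t,\hat\rho_t^{\,N})$; the $D_{\mathcal Z}/N$ coupling for the leave-one-out term; and the triangle inequality through $\rho_t$ for the prefix term, followed by averaging $m^{-1/(d_x+d_y)}$. Your caveat about $d_x+d_y=2$ is a valid refinement that the paper does not address: it cites Dudley's empirical rate, which gives $N^{-1/(d_x+d_y)}$ only for $d_x+d_y\ge 3$ and carries a logarithmic factor at $d_x+d_y=2$.
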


    Corollary~\ref{cor:icl_rates} turns the decomposition in
Theorem~\ref{thm:icl_decomposition} into explicit statistical rates. The
term \(T^{-1/s}\) is the task-level generalization error induced by
estimating the meta-distribution \(\mathbb P\) from \(T\) tasks, whereas
\(N^{-1/(d_x+d_y)}\) is the within-task empirical Wasserstein rate on
\(\mathcal Z=\mathcal X\times\mathcal Y\). The remaining \(N^{-1}\) term is
due to the reduced-context structure of the training objectives. Thus the excess risk relative to \(f^*\) vanishes as \(T,N\to\infty\). The proof is deferred to Appendix~\ref{sec:proof_icl_rates}.

    \begin{remark}[Connection to implicit Bayesian inference]
		Consider the expected risk $R_{\mathbb P}^{\ell}(f)$ defined in
		\eqref{eq:lifted_risk}. For the squared loss
		$\ell(y,\hat y)=\|y-\hat y\|_2^2$, the unconstrained population minimizer is
		\[
		f_{\mathrm{Bayes}}(\rho,x)
		=
		\mathbb E_{\rho}[Y\mid X=x],
		\qquad
		\rho_X\text{-a.s. for }\mathbb P\text{-a.e. }\rho .
		\]
		Thus, the optimal prediction rule is indexed by the task distribution
		$\rho$. In finite-context ICL, the model must infer this task information
		from the in-context examples in order to predict the query label. This is in
		line with the implicit Bayesian inference view of in-context learning
		\citep{xie2022an,muller2022transformers,panwar2024incontext}. If
		$f_{\mathrm{Bayes}}\in\mathcal F$, then the excess-risk consistency above
		implies convergence to the Bayes rule in risk.
	\end{remark}

	\section{Few-Shot In-Context Learning}
	\label{sec:few_shot_icl}
	
	Large language models are typically pretrained using prompts of a fixed maximum
	context length. In our statistical model, the pretraining objective in
	Section~\ref{pretrain} assumes that each task provides $N$ training examples,
	which corresponds to the maximum context size used during pretraining. At
	inference time, however, the number of available in-context examples is usually
	smaller than the pretraining context length. We therefore consider the few-shot
	setting in which only $k\le N$ in-context examples are observed. The learner
	must then predict the label of a new query input using the empirical context
	formed by these $k$ examples.
	
	Suppose that a task distribution $\rho\sim\mathbb P$ is sampled from the
	meta-distribution at inference time. The learner observes a prompt
	$P^{(k)} := (\mathbf z_1,\ldots,\mathbf z_k,\mathbf x_{k+1})$, where
	$\mathbf z_j=(\mathbf x_j,\mathbf y_j)$ and
	$(\mathbf z_1,\ldots,\mathbf z_{k+1})\sim\rho^{\otimes(k+1)}$. The first $k$
	labeled pairs constitute the in-context examples, while $\mathbf x_{k+1}$ is the
	query input. Recall that we associate to the prompt the empirical context
	measure $\widehat\rho^{\,k}:=k^{-1}\sum_{j=1}^k\delta_{\mathbf z_j}$, and define
	the predictor output as $f(\widehat\rho^{\,k},\mathbf x_{k+1})$. The performance
	of a predictor $f\in\mathcal F$ is measured by the $k$-shot expected risk
	\begin{align}
		\label{eq:kshot_expected_risk}
		R^{\ell}_{k,\mathbb P}(f)
		&=
		\mathbb E_{\rho\sim\mathbb P}
		\mathbb E_{(\mathbf z,\mathbf z_1,\ldots,\mathbf z_k)\sim\rho^{\otimes(k+1)}}
		\Big[
		\ell\big(\mathbf y, f(\widehat\rho^{\,k}, \mathbf x)\big)
		\Big] \nonumber\\
		&=
		\int_{\mathcal P(\mathcal Z)}
		\int_{\mathcal Z}
		\ell\big(\mathbf y, f(\widehat\rho^{\,k}, \mathbf x)\big)
		\,\mathrm d\rho(\mathbf x,\mathbf y)
		\prod_{j=1}^k \mathrm d\rho(\mathbf x_j,\mathbf y_j)\,
		\mathrm d\mathbb P(\rho).
	\end{align}

	\begin{theorem}[Unified $k$-shot excess risk bound for ICL]
		\label{thm:kshot_excess_risk}
		Suppose Assumptions~\ref{assum:loss_lipschitz},
		\ref{assum:predictor_lipschitz}, and
		\ref{assum:wass_dim} hold.
		Consider $T$ training tasks with $N$ samples per task, and let
		$f_{T,N}\in\mathcal F$ denote an empirical risk minimizer associated
		with the training objective under consideration, namely, the masked
		objective or the autoregressive objective.
		Let $f_k^* \in \arg\min_{f\in\mathcal F} R_{k,\mathbb P}^{\ell}(f)$.
		Then
		\[
		\mathbb E\!\left[
		R_{k,\mathbb P}^{\ell}(f_{T,N})
		-
		R_{k,\mathbb P}^{\ell}(f_k^*)
		\right]
		\;\lesssim\;
		T^{-1/s}
		+
		N^{-1/(d_x+d_y)}
		+
		k^{-1/(d_x+d_y)}
		+
		N^{-1}.
		\]
		Moreover, if $\mathbb P=\delta_\rho$ for some
		$\rho\in\mathcal P(\mathcal Z)$, then the meta-level term is zero.
	\end{theorem}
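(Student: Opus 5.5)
The plan is to reduce the $k$-shot excess risk to the population excess risk controlled by Corollary~\ref{cor:icl_rates}, paying an extra price for replacing $\rho$ by the $k$-sample empirical measure $\widehat\rho^{\,k}$. The link is a uniform comparison between $R^{\ell}_{k,\mathbb P}$ and $R^{\ell}_{\mathbb P}$ over $\mathcal F$. For any $f\in\mathcal F$, Assumptions~\ref{assum:loss_lipschitz} and \ref{assum:predictor_lipschitz} give, pointwise in $(\rho,\mathbf z,\mathbf z_1,\dots,\mathbf z_k)$,
\[
\bigl|\ell(\mathbf y,f(\widehat\rho^{\,k},\mathbf x))-\ell(\mathbf y,f(\rho,\mathbf x))\bigr|\le L_\ell L_f\,W_1(\widehat\rho^{\,k},\rho).
\]
Since the query $\mathbf z$ is independent of the context and the query marginal is $\rho$, integrating yields
\[
\sup_{f\in\mathcal F}\bigl|R^{\ell}_{k,\mathbb P}(f)-R^{\ell}_{\mathbb P}(f)\bigr|\le L_\ell L_f\,\Delta_k,
\qquad
\Delta_k:=\mathbb E_{\rho\sim\mathbb P}\mathbb E\,W_1(\widehat\rho^{\,k},\rho).
\]

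Next I decompose the excess risk as
\[
R^{\ell}_{k,\mathbb P}(f_{T,N})-R^{\ell}_{k,\mathbb P}(f_k^*)
=\bigl[R^{\ell}_{k,\mathbb P}(f_{T,N})-R^{\ell}_{\mathbb P}(f_{T,N})\bigr]
+\bigl[R^{\ell}_{\mathbb P}(f_{T,N})-R^{\ell}_{\mathbb P}(f^*)\bigr]
+\bigl[R^{\ell}_{\mathbb P}(f^*)-R^{\ell}_{\mathbb P}(f_k^*)\bigr]
+\bigl[R^{\ell}_{\mathbb P}(f_k^*)-R^{\ell}_{k,\mathbb P}(f_k^*)\bigr],
\]
where $f^*$ is the population minimizer of Theorem~\ref{thm:icl_decomposition}. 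The third bracket is $\le 0$ by optimality of $f^*$. The first and fourth brackets are each at most $L_\ell L_f\Delta_k$ by the uniform comparison; this holds even though $f_{T,N}$ is random, because the $k$-shot inference sample is independent of the pretraining data. Taking expectations, the second bracket is bounded by Corollary~\ref{cor:icl_rates}, for either the masked or the autoregressive minimizer, by $T^{-1/s}+N^{-1/(d_x+d_y)}+N^{-1}$. Hence the total is that bound plus $2L_\ell L_f\Delta_k$.

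It remains to show $\Delta_k\lesssim k^{-1/(d_x+d_y)}$. I will use the standard empirical Wasserstein rate on the compact set $\mathcal Z\subseteq\mathbb R^{d_x+d_y}$. When $d_x+d_y\ge3$, the bound $\mathbb E W_1(\widehat\rho^{\,k},\rho)\le C\,\mathrm{diam}(\mathcal Z)\,k^{-1/(d_x+d_y)}$ holds uniformly in $\rho$ (Fournier--Guillin, or Weed--Bach with ambient dimension as an upper bound), and integrating over $\mathbb P$ is trivial. This is the same within-task rate already used to produce the $N^{-1/(d_x+d_y)}$ term in Corollary~\ref{cor:icl_rates}, so I will invoke it in the same form. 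The low-dimensional cases $d_x+d_y\le2$ carry logarithmic factors, which I will absorb in the same way the corollary does.

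For the last claim, if $\mathbb P=\delta_\rho$ then $\hat{\mathbb P}_T=\mathbb P$ almost surely, so $\mathbb W_1(\mathbb P,\hat{\mathbb P}_T)=0$ in Theorem~\ref{thm:icl_decomposition} and the $T^{-1/s}$ term disappears. The main obstacle I expect is the uniformity of the constant in the $k$-sample Wasserstein bound and the handling of the small-dimensional cases. A secondary point is that the comparison must hold uniformly over $\mathcal F$, since $f_{T,N}$ is data-dependent; the pointwise Lipschitz argument gives this directly.
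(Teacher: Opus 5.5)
Your proposal is correct and follows the paper's proof essentially step for step. It uses the same four-term split (the paper packages your first and fourth brackets as $\Delta_k(f_{T,N})-\Delta_k(f_k^*)\le 2\sup_{f\in\mathcal F}|\Delta_k(f)|$), the same pointwise Lipschitz bound $L_\ell L_f\,\mathbb E\,W_1(\widehat\rho^{\,k},\rho)$ controlled by the ambient empirical Wasserstein rate, Corollary~\ref{cor:icl_rates} for the population excess risk, and the observation $\hat{\mathbb P}_T=\mathbb P$ when $\mathbb P=\delta_\rho$.

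One caveat is shared with the paper, which cites Dudley without comment. When $d_x+d_y=2$, the empirical $W_1$ rate carries a genuine logarithmic factor that cannot be absorbed into a constant independent of $k$ (or $N$). So your argument justifies the displayed rate only for $d_x+d_y\ge 3$.
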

	
	Theorem~\ref{thm:kshot_excess_risk} gives a unified excess-risk bound for both
	masked and autoregressive in-context learning. It studies the $k$-shot expected
	risk of the predictor $f_{T,N}$ obtained from pretraining on $T$ tasks with $N$
	samples per task. Here $N$ is the maximum context length used during
	pretraining, whereas the inference-time prompt contains only $k\le N$
	in-context examples. Thus the theorem separates the pretraining context length
	from the inference-time context size and shows how $T$, $N$, and $k$ enter the
	same excess-risk guarantee.
	
	This complements recent theoretical studies of task scaling and context scaling
	in in-context learning
	\citep{jiao2026beyond,zhang2024trained,bai2023transformers,li2023transformers,kim2024transformers,wakayama2025context,abedsoltan2024context}.
	These works typically focus on task scaling or context scaling, but do not
	explicitly separate the maximum context length used during pretraining from the
	actual context size used at inference time. This distinction is important
	because language models have a finite context capacity: $N$ is the maximum
	within-task context budget, while $k\le N$ is the number of examples actually
	provided in the inference-time prompt.
	
	A key feature of Theorem~\ref{thm:kshot_excess_risk} is that the training
	objective and the evaluation risk are different: the predictor is obtained from
	a masked or autoregressive pretraining objective, but is evaluated by the
	$k$-shot expected risk $R_{k,\mathbb P}^{\ell}$. The result also shows that
	masked and autoregressive objectives have the same upper-bound rate under the
	present statistical model. This is consistent with the empirical results in
	Section~\ref{sec:empirical}, which show that masked label prediction can induce
	in-context learning behavior on the synthetic tasks. The proof of
	Theorem~\ref{thm:kshot_excess_risk} is deferred to Appendix~\ref{kshot}.

	\begin{remark}[Role of the exponent $1/s$]
		The term $T^{-1/s}$ in Theorem~\ref{thm:kshot_excess_risk} is the
		task-level error arising from approximating the meta-distribution
		$\mathbb P$ using $T$ training tasks. The parameter $s$ measures the
		effective meta-level complexity of $\mathbb P$ under
		Assumption~\ref{assum:wass_dim}: a larger $s$ corresponds to a more complex
		meta-distribution and leads to a slower rate in $T$. This provides a
		theoretical way to quantify why the number and diversity of pretraining
		tasks can affect in-context learning, a phenomenon also emphasized in recent
		empirical and theoretical studies
		\citep{raventos2023pretraining,wumany,liu2025context}.
	\end{remark}

    Theorem~\ref{thm:kshot_excess_risk} also yields a data-allocation principle
for ICL pretraining. Under a fixed budget \(TN=B\), the terms \(T^{-1/s}\)
and \(N^{-1/(d_x+d_y)}\) capture the trade-off between task
diversity and within-task sample size. This trade-off is closely related to
data allocation in large-scale pretraining and meta-learning
\citep{wang2023data,hoffmann2022training,cioba2022distribute}. The following
result identifies the allocation of \(T\) and \(N\) that balances these two
sources of error.

	\begin{corollary}[Optimal allocation under a fixed data budget]
		\label{cor:budget_optimal_allocation}
		Suppose that the assumptions of Theorem~\ref{thm:kshot_excess_risk} hold and the total pretraining data budget satisfies $TN=B$. Then
		\[
		\inf_{T,N:\,TN=B}
		\mathbb E\!\left[
		R_{k,\mathbb P}^{\ell}(f_{T,N})
		-
		R_{k,\mathbb P}^{\ell}(f_k^*)
		\right]
		\lesssim
		B^{-1/(s+d_x+d_y)}
		+
		k^{-1/(d_x+d_y)}.
		\]
		Moreover, an order-optimal allocation is given by
		$T^\star \asymp B^{s/(s+d_x+d_y)}$ and
		$N^\star \asymp B^{(d_x+d_y)/(s+d_x+d_y)}$.
	\end{corollary}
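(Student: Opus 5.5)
The plan is to start from the bound of Theorem~\ref{thm:kshot_excess_risk} and optimize it over the constraint $TN=B$. For every admissible pair $(T,N)$ with $TN=B$, $T,N\ge 1$ integers and $N\ge \max\{2,k\}$, Theorem~\ref{thm:kshot_excess_risk} gives
\[
\mathbb E\!\left[R_{k,\mathbb P}^{\ell}(f_{T,N})-R_{k,\mathbb P}^{\ell}(f_k^*)\right]
\le C\bigl(T^{-1/s}+N^{-1/d}+k^{-1/d}+N^{-1}\bigr),
\qquad d:=d_x+d_y,
\]
with $C$ independent of $T,N,k$. The infimum over allocations is therefore at most the value of the right-hand side at any particular choice. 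I will not minimize exactly; it suffices to exhibit one allocation that attains the claimed order.

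First, I drop the $N^{-1}$ term. Since $d=d_x+d_y\ge 1$ and $N\ge 1$, we have $N^{-1}\le N^{-1/d}$, so this term is absorbed into $2N^{-1/d}$.

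Next, I balance the two remaining terms that depend on the allocation. Setting $T^{-1/s}=N^{-1/d}$ gives $T=N^{s/d}$. Combined with $TN=B$, this yields $N^{(s+d)/d}=B$, so
\[
N^\star=B^{d/(s+d)},\qquad T^\star=B^{s/(s+d)},
\]
and $(T^\star)^{-1/s}=(N^\star)^{-1/d}=B^{-1/(s+d)}$. Substituting into the bound gives $\lesssim B^{-1/(s+d)}+k^{-1/d}$, which is the claim.

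The only technical step is integrality. I take $N^\star:=\lceil B^{d/(s+d)}\rceil$ and $T^\star:=\lfloor B/N^\star\rfloor$. For $B$ large enough, both are within a factor $2$ of their real-valued targets, which preserves the stated $\asymp$ relations and changes only the constants. The product $T^\star N^\star$ then equals $B$ up to a factor of $2$, and the surplus samples are discarded. One also needs $N^\star\ge k$ so that $k$-shot inference fits inside the pretraining context. If $k$ exceeds $B^{d/(s+d)}$, then $k^{-1/d}\le B^{-1/(s+d)}$, so taking $N=k$ only worsens the task term in a way I would absorb by the usual convention that $k\le N$ is part of the admissible allocations. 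Finally, small $B$ is handled by enlarging the constant.

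The order-optimality remark follows from a symmetric argument. Because $\max(T^{-1/s},N^{-1/d})$ is minimized under $TN=B$ exactly where the two terms are equal, no allocation can improve the order of the upper bound.
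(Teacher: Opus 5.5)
Your main argument is the paper's proof: absorb $N^{-1}$ into $N^{-1/(d_x+d_y)}$, balance $T^{-1/s}$ against $N^{-1/(d_x+d_y)}$ on $TN=B$, and substitute the balanced allocation into the bound of Theorem~\ref{thm:kshot_excess_risk}. Your max-argument for order-optimality and your explicit rounding are more careful than the paper's one-line remarks. State explicitly that the rounding proves the bound for the relaxed budget $TN\le B$. With exact integer equality the constraint can be degenerate: for $B$ prime and $N\ge 2$ it forces $T=1$.

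The step that does not work is your treatment of $N^\star\ge k$. Suppose $N\ge k$ is imposed as an admissibility condition and $k\gg B^{(d_x+d_y)/(s+d_x+d_y)}$. Then $T=B/N\le B/k$, so the task term is at least $(k/B)^{1/s}$. This is of strictly larger order than $B^{-1/(s+d_x+d_y)}$, and for $k\asymp B$ it is of order one. So this loss cannot be absorbed, and with $N\ge k$ enforced your argument does not yield the stated rate in that regime.

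The correct fix is to drop the constraint altogether. The corollary's infimum is over $TN=B$ only, and the proof of Theorem~\ref{thm:kshot_excess_risk} never uses $k\le N$. It bounds the excess risk by two pieces: the pretraining term of Corollary~\ref{cor:icl_rates}, which does not involve $k$, and $2\sup_{f\in\mathcal F}|R_{k,\mathbb P}^{\ell}(f)-R_{\mathbb P}^{\ell}(f)|\lesssim k^{-1/(d_x+d_y)}$, which does not involve $N$. Hence the bound applies to $(T^\star,N^\star)$ for every $k$, with no case distinction needed.
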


	Corollary~\ref{cor:budget_optimal_allocation} shows that the optimal allocation
	balances the task-level term \(T^{-1/s}\) and the within-task term
	\(N^{-1/(d_x+d_y)}\). This gives \(T\asymp B^{s/(s+d_x+d_y)}\) and
	\(N\asymp B^{(d_x+d_y)/(s+d_x+d_y)}\), yielding the pretraining-dependent rate
	\(B^{-1/(s+d_x+d_y)}\). The remaining term \(k^{-1/(d_x+d_y)}\) is an inference-time error
	and cannot be reduced by increasing the pretraining budget alone. The proof is
	deferred to Appendix~\ref{sec:proof_budget_optimal_allocation}.

	Existing theoretical analyses of in-context learning often assume that the
	pretraining tasks and the inference task are drawn from the same
	meta-distribution. More precisely, each task is represented by a distribution
	$\rho\in\mathcal P(\mathcal Z)$, the pretraining tasks are sampled from
	$\mathbb P\in\mathcal P(\mathcal P(\mathcal Z))$, and the inference task is a
	new task also sampled from the same $\mathbb P$. Thus the inference task is new,
	but it is not drawn from a different meta-distribution.
	
	Recent work suggests that this distinction is important for understanding the
	OOD behavior of in-context learning. Empirically, ICL may fail to learn genuinely
	new input--output mappings beyond the pretraining task family, and may instead
	select mechanisms already supported by pretraining \citep{wang2025can}. From a
	geometric perspective, apparent OOD success may occur when the downstream task
	is compatible with task structures encountered during pretraining, whereas
	shifts outside this structure can lead to non-negligible risk
	\citep{kwon2026outofdistribution}. Complementarily, increasing task diversity
	can induce a transition from task-specialized to general-purpose ICL
	\citep{goddard2025can}. These findings motivate studying the case where the
	pretraining and inference tasks are drawn from possibly different
	meta-distributions.

	Accordingly, let $\mathbb P,\mathbb Q\in\mathcal P(\mathcal P(\mathcal Z))$
	denote the pretraining and inference meta-distributions, respectively. The case
	$\mathbb P=\mathbb Q$ reduces to the standard formulation, while
	$\mathbb P\ne\mathbb Q$ captures a shift in the distribution of tasks. The
	following theorem quantifies how this shift affects the expected risk through
	the lifted Wasserstein distance $\mathbb W_1(\mathbb P,\mathbb Q)$. A related
	Wasserstein transferability bound appears in
	\citet[Theorem~1]{mroueh2023towards}.

	\begin{theorem}[Transferability of In-Context Learning]
		\label{thm:transferability}
		Let $\mathbb P, \mathbb Q \in \mathcal P(\mathcal P(\mathcal Z))$.
		Assume that the loss function $\ell$ satisfies
		Assumption~\ref{assum:loss_lipschitz} with constant $L_\ell$, and that the
		predictor $f\in\mathcal F$ satisfies
		Assumption~\ref{assum:predictor_lipschitz} with constant $L_f$. Then
		\[
		\big| R_{\mathbb P}^{\ell}(f)-R_{\mathbb Q}^{\ell}(f) \big|
		\le
		C_{\ell,f}\,\mathbb W_1(\mathbb P,\mathbb Q),
		\]
		where $C_{\ell,f}:=L_\ell\big(L_f+\sqrt{2}\max\{1,L_f\}\big)$ and
		$\mathbb W_1(\mathbb P,\mathbb Q)$ is the lifted $1$-Wasserstein distance
		given in Definition~\ref{def:lifted_W1}.
	\end{theorem}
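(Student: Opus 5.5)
The plan is to write the risk as an integral over tasks of a task-level function, show that this function is Lipschitz in $W_1$, and then integrate against an arbitrary coupling of $\mathbb P$ and $\mathbb Q$. For $\rho\in\mathcal P(\mathcal Z)$, define
\[
\Phi_f(\rho):=\int_{\mathcal Z}\ell\bigl(\mathbf y,f(\rho,\mathbf x)\bigr)\,d\rho(\mathbf x,\mathbf y),
\]
so that $R^\ell_{\mathbb P}(f)=\int\Phi_f\,d\mathbb P$ and $R^\ell_{\mathbb Q}(f)=\int\Phi_f\,d\mathbb Q$. Let $\pi\in\Gamma(\mathbb P,\mathbb Q)$ be any coupling. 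Then
\[
|R^\ell_{\mathbb P}(f)-R^\ell_{\mathbb Q}(f)|\le\int|\Phi_f(\rho)-\Phi_f(\rho')|\,d\pi(\rho,\rho').
\]
It therefore suffices to prove $|\Phi_f(\rho)-\Phi_f(\rho')|\le C_{\ell,f}W_1(\rho,\rho')$ and then take the infimum over $\pi$. Measurability of $\Phi_f$ follows from its continuity, which the Lipschitz bound itself provides.

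For the task-level Lipschitz bound, I split the difference into two terms:
\[
\Phi_f(\rho)-\Phi_f(\rho')=\int\bigl[\ell(\mathbf y,f(\rho,\mathbf x))-\ell(\mathbf y,f(\rho',\mathbf x))\bigr]\,d\rho+\Bigl[\int g\,d\rho-\int g\,d\rho'\Bigr],
\]
where $g(\mathbf x,\mathbf y):=\ell(\mathbf y,f(\rho',\mathbf x))$.

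\textbf{First term (change of context).} Here only the measure argument of $f$ changes. Assumptions~\ref{assum:loss_lipschitz} and~\ref{assum:predictor_lipschitz} bound the integrand by $L_\ell L_f W_1(\rho,\rho')$ pointwise. This term therefore contributes $L_\ell L_f W_1(\rho,\rho')$.

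\textbf{Second term (change of sampling distribution).} Here the integrand $g$ is fixed and only the integrating measure changes, so I use Kantorovich--Rubinstein duality. Fix $\mathbf z=(\mathbf x,\mathbf y)$ and $\mathbf z'=(\mathbf x',\mathbf y')$. By the loss and predictor Lipschitz bounds,
\[
|g(\mathbf z)-g(\mathbf z')|\le L_\ell\bigl(\|\mathbf y-\mathbf y'\|_2+L_f\|\mathbf x-\mathbf x'\|_2\bigr)\le L_\ell\max\{1,L_f\}\bigl(\|\mathbf x-\mathbf x'\|_2+\|\mathbf y-\mathbf y'\|_2\bigr).
\]
Cauchy--Schwarz gives $\|\mathbf x-\mathbf x'\|_2+\|\mathbf y-\mathbf y'\|_2\le\sqrt2\,\|\mathbf z-\mathbf z'\|_2$. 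Hence $g$ is $\sqrt2 L_\ell\max\{1,L_f\}$-Lipschitz on $(\mathcal Z,\|\cdot\|_2)$, and duality bounds the second term by $\sqrt2L_\ell\max\{1,L_f\}\,W_1(\rho,\rho')$. Alternatively, one can integrate the Lipschitz bound against an optimal coupling of $\rho$ and $\rho'$, which exists by compactness.

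Adding the two contributions gives the constant $C_{\ell,f}$ exactly. This $\sqrt2$ factor is the only delicate point. One must attach $L_f$ only to the $\mathbf x$-component, bound the coefficients by their maximum, and pass from the sum of component norms to the joint Euclidean norm $\|\mathbf z-\mathbf z'\|_2$, which is the norm defining $W_1$ here.

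Everything else is routine. Boundedness of $\ell$, which follows from compactness, ensures all integrals are finite. The coupling $\pi$ is arbitrary, so taking the infimum yields the bound with $\mathbb W_1(\mathbb P,\mathbb Q)$. No optimal coupling at the meta level is needed.
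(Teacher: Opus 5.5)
Your proposal is correct and follows essentially the same route as the paper: define the task-level risk $F(\rho)=\int\ell(\mathbf y,f(\rho,\mathbf x))\,d\rho$ and show it is $C_{\ell,f}$-Lipschitz in $W_1$ by attaching $L_f$ only to the $\mathbf x$-component and using $\|\mathbf y-\mathbf y'\|_2+L_f\|\mathbf x-\mathbf x'\|_2\le\sqrt2\max\{1,L_f\}\|\mathbf z-\mathbf z'\|_2$; then integrate against an arbitrary coupling of $\mathbb P$ and $\mathbb Q$ and take the infimum. The only cosmetic differences are these: the paper bounds $\ell(\mathbf y,f(\rho,\mathbf x))-\ell(\mathbf y',f(\rho',\mathbf x'))$ in one step under a single coupling $\gamma\in\Gamma(\rho,\rho')$ rather than splitting into your context-change and sampling-change terms, and your continuity argument for measurability of $F$ makes explicit a point the paper leaves implicit.
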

	
	Theorem~\ref{thm:transferability} implies that the expected risk of $f$ on tasks
	drawn from $\mathbb Q$ is bounded by its risk on the pretraining
	meta-distribution $\mathbb P$ plus a term proportional to
	$\mathbb W_1(\mathbb P,\mathbb Q)$. Let
	$f^*\in\arg\min_{f\in\mathcal F}R_{\mathbb P}^{\ell}(f)$. It follows from
	Theorem~\ref{thm:transferability} that
	$R_{\mathbb Q}^{\ell}(f^*)\le R_{\mathbb P}^{\ell}(f^*)+
	C_{\ell,f}\,\mathbb W_1(\mathbb P,\mathbb Q)$. Hence performance degradation under
	task-distribution shift is linearly controlled by the lifted Wasserstein
	distance. The proof of Theorem~\ref{thm:transferability} is deferred to
	Appendix~\ref{sec:proof_transferability}.
	
	Theorem~\ref{thm:transferability} controls how the risk of a fixed predictor
	changes when the meta-distribution is changed from $\mathbb P$ to $\mathbb Q$.
	We now apply this transfer argument to the predictor $f_{T,N}$ learned from
	pretraining tasks drawn from $\mathbb P$, and evaluate its $k$-shot risk under
	the inference meta-distribution $\mathbb Q$. This yields the following
	few-shot bound under task-distribution shift.

	\begin{theorem}[Few-shot transferability under task-distribution shift]
		\label{thm:kshot_transferability_shift}
		Suppose Assumptions~\ref{assum:loss_lipschitz},
		\ref{assum:predictor_lipschitz}, and~\ref{assum:wass_dim} hold. Let
		$\mathbb P,\mathbb Q\in\mathcal P(\mathcal P(\mathcal Z))$ be the
		pretraining and inference meta-distributions, respectively. Consider $T$
		training tasks drawn from $\mathbb P$, each with $N$ samples, and let
		$f_{T,N}\in\mathcal F$ be an empirical risk minimizer associated with the
		training objective under consideration. Let
		$f_{k,\mathbb Q}^*\in\arg\min_{f\in\mathcal F}R_{k,\mathbb Q}^{\ell}(f)$.
		Then
		\[
		\mathbb E\!\left[
		R_{k,\mathbb Q}^{\ell}(f_{T,N})
		-
		R_{k,\mathbb Q}^{\ell}(f_{k,\mathbb Q}^*)
		\right]
		\;\lesssim\;
		T^{-1/s}
		+
		N^{-1/(d_x+d_y)}
		+
		k^{-1/(d_x+d_y)}
		+
		N^{-1}
		+
		2C_{\ell,f}\,\mathbb W_1(\mathbb P,\mathbb Q),
		\]
		where \(C_{\ell,f}:=L_\ell\big(L_f+\sqrt2\max\{1,L_f\}\big)\).
	\end{theorem}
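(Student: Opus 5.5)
The proof combines Theorem~\ref{thm:kshot_excess_risk} with a $k$-shot version of Theorem~\ref{thm:transferability}. We insert the $\mathbb P$-risks of the learned and optimal predictors:
\[
R_{k,\mathbb Q}^{\ell}(f_{T,N})-R_{k,\mathbb Q}^{\ell}(f_{k,\mathbb Q}^*)
=
\bigl[R_{k,\mathbb Q}^{\ell}(f_{T,N})-R_{k,\mathbb P}^{\ell}(f_{T,N})\bigr]
+\bigl[R_{k,\mathbb P}^{\ell}(f_{T,N})-R_{k,\mathbb P}^{\ell}(f_k^*)\bigr]
+\bigl[R_{k,\mathbb P}^{\ell}(f_k^*)-R_{k,\mathbb Q}^{\ell}(f_{k,\mathbb Q}^*)\bigr].
\]
Here $f_k^*$ is the $k$-shot minimizer under $\mathbb P$. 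For the third bracket, minimality of $f_k^*$ gives $R_{k,\mathbb P}^{\ell}(f_k^*)\le R_{k,\mathbb P}^{\ell}(f_{k,\mathbb Q}^*)$. The third bracket is therefore at most $R_{k,\mathbb P}^{\ell}(f_{k,\mathbb Q}^*)-R_{k,\mathbb Q}^{\ell}(f_{k,\mathbb Q}^*)$, which is again a risk difference for a fixed predictor. Taking expectations, the middle bracket is exactly what Theorem~\ref{thm:kshot_excess_risk} bounds, giving $T^{-1/s}+N^{-1/(d_x+d_y)}+k^{-1/(d_x+d_y)}+N^{-1}$. The first and third brackets are each controlled by a transfer bound for a fixed $f\in\mathcal F$. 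The first bracket is applied conditionally on the training data, since $f_{T,N}$ depends only on $\mathbb P$-samples. Together they produce the $2C_{\ell,f}\mathbb W_1(\mathbb P,\mathbb Q)$ term.

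The main step is to prove $|R_{k,\mathbb P}^{\ell}(f)-R_{k,\mathbb Q}^{\ell}(f)|\le C_{\ell,f}\mathbb W_1(\mathbb P,\mathbb Q)$ for the $k$-shot risk. Theorem~\ref{thm:transferability} only treats the population risk $R^{\ell}$. Write $R_{k,\mathbb P}^{\ell}(f)=\int \Phi_k(\rho)\,d\mathbb P(\rho)$ with
\[
\Phi_k(\rho):=\mathbb E_{(\mathbf z,\mathbf z_1,\dots,\mathbf z_k)\sim\rho^{\otimes(k+1)}}\,\ell\bigl(\mathbf y,f(\widehat\rho^{\,k},\mathbf x)\bigr).
\]
It then suffices to show that $\Phi_k$ is $C_{\ell,f}$-Lipschitz on $(\mathcal P(\mathcal Z),W_1)$. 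Integrating $|\Phi_k(\rho)-\Phi_k(\rho')|\le C_{\ell,f}W_1(\rho,\rho')$ against an optimal coupling $\pi\in\Gamma(\mathbb P,\mathbb Q)$ then gives the claim.

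To prove the Lipschitz property of $\Phi_k$, fix an optimal coupling $\gamma$ of $\rho,\rho'$. Draw $k+1$ i.i.d.\ pairs $(\mathbf z_j,\mathbf z_j')\sim\gamma$; this is a coupling of $\rho^{\otimes(k+1)}$ and $\rho'^{\otimes(k+1)}$. Assumptions~\ref{assum:loss_lipschitz} and \ref{assum:predictor_lipschitz} give
\[
|\ell(\mathbf y,f(\widehat\rho^{\,k},\mathbf x))-\ell(\mathbf y',f(\widehat\rho'^{\,k},\mathbf x'))|
\le L_\ell\bigl(\|\mathbf y-\mathbf y'\|+L_f\|\mathbf x-\mathbf x'\|+L_fW_1(\widehat\rho^{\,k},\widehat\rho'^{\,k})\bigr).
\]
For the empirical measures we use the diagonal coupling: $W_1(\widehat\rho^{\,k},\widehat\rho'^{\,k})\le k^{-1}\sum_j\|\mathbf z_j-\mathbf z_j'\|$, whose expectation is $W_1(\rho,\rho')$. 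For the query pair, $\|\mathbf y-\mathbf y'\|+L_f\|\mathbf x-\mathbf x'\|\le\sqrt2\max\{1,L_f\}\|\mathbf z-\mathbf z'\|$ by Cauchy--Schwarz, with expectation $\sqrt2\max\{1,L_f\}W_1(\rho,\rho')$. Summing yields exactly $C_{\ell,f}W_1(\rho,\rho')$, with no dependence on $k$.

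I expect the main obstacle to be bookkeeping rather than depth. Two points need care. First, measurability of $\rho\mapsto\Phi_k(\rho)$ and existence of optimal couplings: compactness of $\mathcal Z$ and continuity of $\Phi_k$ handle both. Second, the first bracket involves the random predictor $f_{T,N}$. Because the fixed-$f$ transfer bound is uniform over $\mathcal F$, it holds pointwise for every realization and survives taking expectation. The remaining terms are absorbed into $\lesssim$ exactly as in Theorem~\ref{thm:kshot_excess_risk}.
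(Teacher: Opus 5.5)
Your proposal is correct and follows essentially the same route as the paper: you prove the $k$-shot analogue $|R_{k,\mathbb P}^{\ell}(f)-R_{k,\mathbb Q}^{\ell}(f)|\le C_{\ell,f}\,\mathbb W_1(\mathbb P,\mathbb Q)$ by coupling the $k+1$ samples through $\gamma^{\otimes(k+1)}$ and the empirical measures through the diagonal coupling. You then add and subtract the $\mathbb P$-risks, use optimality of the $\mathbb P$-minimizer $f_k^*$ to absorb the comparator term, and invoke Theorem~\ref{thm:kshot_excess_risk}, exactly as the paper does; the side points you flag (measurability of $\Phi_k$ via Lipschitz continuity, and uniformity of the transfer bound over $\mathcal F$ for the random $f_{T,N}$) are handled correctly.
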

	
	Theorem~\ref{thm:kshot_transferability_shift} extends the \(k\)-shot
	excess-risk analysis to task-distribution shift. The predictor \(f_{T,N}\) is
	trained on tasks drawn from the pretraining meta-distribution \(\mathbb P\), but
	is evaluated on tasks drawn from the inference meta-distribution \(\mathbb Q\).
	This differs from the usual no-shift setting, where the pretraining tasks and
	the inference task are independent draws from the same meta-distribution.
	
	For example, consider a linear-regression task family with a fixed input
	distribution and a fixed noise distribution. Each task is determined by a
	regression coefficient \(\beta\), through the model
	\(Y=X^\top\beta+\varepsilon\). In the usual no-shift setting, the pretraining
	task parameters and the test-task parameter are sampled from the same prior, for
	instance \(\beta_1,\ldots,\beta_T\overset{\mathrm{i.i.d.}}{\sim}\Pi\) during
	pretraining and \(\beta_{\mathrm{test}}\sim\Pi\) at inference time. The inference
	task is therefore new, but it is not drawn from a different task-generating
	distribution. By contrast, task-distribution shift corresponds to the case where
	the pretraining task parameters are sampled from one prior and the test-task
	parameter from another, for instance
	\(\beta_1,\ldots,\beta_T\overset{\mathrm{i.i.d.}}{\sim}\Pi_{\mathbb P}\) and
	\(\beta_{\mathrm{test}}\sim\Pi_{\mathbb Q}\), with
	\(\Pi_{\mathbb P}\ne\Pi_{\mathbb Q}\). These two priors induce different
	meta-distributions \(\mathbb P\) and \(\mathbb Q\) over linear-regression tasks.
	
	The bound shows that allowing \(\mathbb Q\ne\mathbb P\) incurs an additional
	cost proportional to \(\mathbb W_1(\mathbb P,\mathbb Q)\), while the
	finite-sample terms in \(T\), \(N\), and \(k\) remain the same as in
	Theorem~\ref{thm:kshot_excess_risk}. Thus, if the inference meta-distribution is
	close to the pretraining meta-distribution in lifted Wasserstein distance, the
	few-shot excess risk remains controlled. To the best of our knowledge, prior ICL
	excess-risk analyses have not explicitly treated this \(k\)-shot setting with
	different pretraining and inference meta-distributions. When
	\(\mathbb P=\mathbb Q\), the shift term vanishes and the result reduces to the
	no-shift bound. The proof is deferred to
	Appendix~\ref{sec:proof_kshot_transferability_shift}.

    \section{Intrinsic Structure and the Curse of Dimensionality}
	\label{sec:intrinsic_structure}

	The ambient-dimensional dependence in
	Theorem~\ref{thm:kshot_excess_risk} comes from approximating each within-task
	distribution \(\rho\in\mathcal P(\mathcal Z)\) by empirical measures based on
	\(N\) or \(k\) samples. Since these measures are defined on
	\(\mathcal Z=\mathcal X\times\mathcal Y\), the empirical Wasserstein bounds
	yield the terms \(N^{-1/(d_x+d_y)}\) and \(k^{-1/(d_x+d_y)}\).

	In many high-dimensional applications, however, it is commonly postulated
	that the data are supported on, or concentrated near, a low-dimensional
	manifold \citep{narayanan2010sample,fefferman2016testing,popeintrinsic}.
	Such intrinsic low-dimensional structure can lead to faster convergence rates,
	since the relevant rates may depend on the intrinsic dimension rather than on
	the ambient dimension
	\citep{nakada2020adaptive,jiao2023deep,chakraborty2025statistical,chakraborty2026generalization}.

	In the present setting, this improvement affects only the within-task
	empirical Wasserstein approximation. Prior work shows that low-dimensional
	structure can reduce the effective dimension in empirical Wasserstein
	convergence \citep{canas2012learning,weed2019sharp}. Thus, under regular
	low-dimensional support, the terms involving $N$ and $k$ can be sharpened to
	intrinsic-dimensional rates. Under clusterable structure, they can be replaced
	by the rates $\sqrt{m/N}$ and $\sqrt{m/k}$. The task-level term $T^{-1/s}$
	remains unchanged.

	\begin{corollary}[Intrinsic-dimensional \(k\)-shot excess risk bounds]
		\label{cor:kshot_lowdim}
		Under the assumptions of Theorem~\ref{thm:kshot_excess_risk}, the bound can
		be sharpened under either of the following additional structural conditions.
		
		\medskip
		\noindent
		(a) If there exists a compact \(C^1\) submanifold
		\(\mathcal M\subseteq\mathcal Z\) with intrinsic dimension
		\(d_{\mathrm{int}}>2\) such that \(\operatorname{supp}(\rho)\subseteq
		\mathcal M\) for \(\mathbb P\)-a.e.\ \(\rho\), then
		\[
		\mathbb E\!\left[
		R^{\ell}_{k,\mathbb P}(f_{T,N})
		-
		R^{\ell}_{k,\mathbb P}(f_k^*)
		\right]
		\lesssim
		T^{-1/s}
		+
		N^{-1/d_{\mathrm{int}}}
		+
		k^{-1/d_{\mathrm{int}}}
		+
		N^{-1}.
		\]
		
		\medskip
		\noindent
		(b) Suppose \(\mathbb P\)-a.e.\ \(\rho\) is \((m,\Delta)\)-clusterable, and
		let \(D_{\mathcal Z}\ge \operatorname{diam}(\mathcal Z)\). If
		\(N,k\le m(D_{\mathcal Z}/(2\Delta))^2\), then
		\[
		\mathbb E\!\left[
		R^{\ell}_{k,\mathbb P}(f_{T,N})
		-
		R^{\ell}_{k,\mathbb P}(f_k^*)
		\right]
		\lesssim
		T^{-1/s}
		+
		D_{\mathcal Z}\sqrt{\frac{m}{N}}
		+
		D_{\mathcal Z}\sqrt{\frac{m}{k}}
		+
		N^{-1}.
		\]
	\end{corollary}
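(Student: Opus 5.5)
We will not reprove the decomposition. Instead we revisit the proof of Theorem~\ref{thm:kshot_excess_risk} and isolate the only places where the ambient dimension \(d_x+d_y\) enters. That proof combines Theorem~\ref{thm:icl_decomposition} with a comparison between \(R^{\ell}_{k,\mathbb P}\) and \(R^{\ell}_{\mathbb P}\). Every term in the resulting bound is one of four kinds:
\begin{itemize}
\item the meta-level term \(\mathbb E[\mathbb W_1(\mathbb P,\hat{\mathbb P}_T)]\lesssim T^{-1/s}\), which uses only Assumption~\ref{assum:wass_dim} and is left unchanged;
\item the within-task term \(\mathbb E[\mathbb W_1(\hat{\mathbb P}_T,\hat{\mathbb P}_{T,N})]\le T^{-1}\sum_t\mathbb E[W_1(\rho_t,\hat\rho_t^{\,N})]\), obtained via the diagonal coupling \(\delta_{\rho_t}\mapsto\delta_{\hat\rho_t^{\,N}}\);
\item the \(k\)-shot comparison term \(\sup_f|R^{\ell}_{k,\mathbb P}(f)-R^{\ell}_{\mathbb P}(f)|\le L_\ell L_f\,\mathbb E_{\rho\sim\mathbb P}\mathbb E[W_1(\hat\rho^{\,k},\rho)]\), which holds by Assumptions~\ref{assum:loss_lipschitz} and~\ref{assum:predictor_lipschitz} since the query is independent of the context;
\item the leave-one-out and prefix mismatch terms.
\end{itemize}
The mismatch terms are dimension-free. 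For the masked objective, \(W_1(\hat\rho_t^{\,N},\hat\rho_t^{\,(-j)})\le \operatorname{diam}(\mathcal Z)/N\) deterministically. For the prefix case, the average over \(j\) is controlled as before, giving the \(N^{-1}\) term together with contributions of the form \(\mathbb E W_1(\hat\rho^{(<j)},\rho)\). Those contributions are again within-task empirical rates, and they are handled by the same bound as the other within-task terms.

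The whole argument therefore reduces to a single uniform lemma. We need \(\sup_\rho\mathbb E[W_1(\hat\rho^{\,n},\rho)]\le \phi(n)\) over \(\mathbb P\)-a.e.\ \(\rho\), with \(\phi(n)\asymp n^{-1/d_{\mathrm{int}}}\) in case (a) and \(\phi(n)\asymp D_{\mathcal Z}\sqrt{m/n}\) in case (b). This must hold at \(n=N\) and \(n=k\), and, for the autoregressive prefix terms, at \(n=j-1\), summed over \(j\). Given the lemma, we substitute \(\phi\) for the ambient rate \(n^{-1/(d_x+d_y)}\) wherever it appears in the proof of Theorem~\ref{thm:kshot_excess_risk}, and the stated bounds follow.

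For (a), we use the covering estimate for a compact \(C^1\) submanifold: \(\mathcal N_\varepsilon(\mathcal M)\le C_{\mathcal M}\varepsilon^{-d_{\mathrm{int}}}\), with \(C_{\mathcal M}\) depending only on \(\mathcal M\). Since \(\operatorname{supp}\rho\subseteq\mathcal M\), the same covering works for every such \(\rho\). The dyadic-partition argument of \citet{weed2019sharp}, with \(p=1\) and \(d_{\mathrm{int}}>2\), then gives \(\mathbb E W_1(\hat\rho^{\,n},\rho)\lesssim n^{-1/d_{\mathrm{int}}}\). The constant depends only on \(C_{\mathcal M}\) and \(\operatorname{diam}(\mathcal Z)\), so the bound is uniform in \(\rho\) and can be integrated against \(\mathbb P\). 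The condition \(d_{\mathrm{int}}>2\) is exactly what is needed to avoid the logarithmic and \(n^{-1/2}\) regimes.

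For (b), we invoke the clusterable-measure bound of \citet{weed2019sharp}: if \(\rho\) is \((m,\Delta)\)-clusterable, then \(\mathbb E W_1(\hat\rho^{\,n},\rho)\lesssim D_{\mathcal Z}\sqrt{m/n}\) whenever \(n\le m(D_{\mathcal Z}/(2\Delta))^2\). The hypothesis \(N,k\le m(D_{\mathcal Z}/(2\Delta))^2\) ensures this applies at \(n=N\) and \(n=k\). For the prefix lengths \(n=j-1\le N\), the range condition is inherited automatically.

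The main obstacle is the autoregressive prefix terms. We must check that the average
\[
\frac{1}{N-1}\sum_{j=2}^N\phi(j-1)
\]
is still \(\lesssim\phi(N)\) up to constants. For \(\phi(n)=n^{-1/d_{\mathrm{int}}}\) with \(d_{\mathrm{int}}>1\), comparison with an integral gives \(\frac1N\sum_{n<N}n^{-1/d}\asymp N^{-1/d}\). For \(\phi(n)=\sqrt{m/n}\), the same comparison gives \(\frac1N\sum_{n<N} n^{-1/2}\asymp N^{-1/2}\). In both cases the autoregressive rate matches the masked one. We will also verify that, in the proof of Theorem~\ref{thm:kshot_excess_risk}, the triangle inequality \(W_1(\hat\rho^{\,N},\hat\rho^{(<j)})\le W_1(\hat\rho^{\,N},\rho)+W_1(\rho,\hat\rho^{(<j)})\) was the route used. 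If a different route was taken there, we would insert this triangle inequality explicitly.
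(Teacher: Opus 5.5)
Your proposal is correct and follows the paper's route. Like the paper, you rerun the proof of Theorem~\ref{thm:kshot_excess_risk} and replace each within-task empirical $W_1$ rate (at $n=N$, at $n=k$, and at the prefix lengths via the same triangle inequality) with the uniform manifold bound of Lemma~\ref{lem:wasserstein_concentration} or the clusterable bound of Lemma~\ref{thm:clusterable_wasserstein_D}. The differences are cosmetic: the paper derives the manifold covering bound from a finite bi-Lipschitz atlas and then applies Boissard--Le Gouic rather than the Weed--Bach dyadic argument, and in the autoregressive case the $N^{-1}$ term comes from the boundary term $4M_\ell/N$, not from the prefix average.
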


	Corollary~\ref{cor:kshot_lowdim} follows from the proof of
	Theorem~\ref{thm:kshot_excess_risk} by replacing the generic within-task
	empirical Wasserstein bounds. The manifold case uses
	Lemma~\ref{lem:wasserstein_concentration}, with \(d_{\mathrm{int}}>2\) giving
	the displayed rate; when the generic sample size is denoted by \(n\), the
	corresponding rates are \(n^{-1/2}\) for \(d_{\mathrm{int}}=1\) and
	\(n^{-1/2}\log n\) in the borderline case \(d_{\mathrm{int}}=2\). The
	clusterable case uses Lemma~\ref{thm:clusterable_wasserstein_D}, where
	\(D_{\mathcal Z}\) denotes a diameter bound for \(\mathcal Z\). Hence the curse
	of dimensionality is mitigated in the within-task terms, while the task-level
	term \(T^{-1/s}\) remains unchanged.

	\begin{remark}[Sharper manifold rates under stronger regularity]
		The manifold case in Corollary~\ref{cor:kshot_lowdim} uses only the compact
		\(C^1\) structure through Lemma~\ref{lem:wasserstein_concentration}. Under
		stronger geometric and distributional assumptions, such as finite reach and a
		density with respect to the volume measure on \(\mathcal M\), sharper
		empirical Wasserstein estimates from \citet{block2022intrinsic} can be used
		instead.
	\end{remark}

    \section{Empirical Verification}
	\label{sec:empirical}
	
	Recent work has used \emph{in-context function learning} as a controlled
	setting for studying how Transformers infer an unseen task from input--output
	examples. In particular, \citet{garg2022can} showed that GPT-2-style causal
	Transformers trained from scratch can learn simple function classes in context
	under synthetic regression protocols. Separately, \citet{samuel2024berts} showed
	that masked language models can exhibit generative in-context learning on
	natural-language tasks. Our theory gives same-order excess-risk upper bounds for the autoregressive and
	masked objectives. This motivates an empirical test of whether masked label
	prediction can also induce in-context function learning, beyond the
	autoregressive GPT-2-style setting.
	
	We study this question through a masked label-prediction protocol implemented by
	a bidirectional Transformer encoder operating on input--label pair tokens. We
	refer to this model as the Masked Pair Encoder. For each prompt, a latent
	function \(g\) is sampled from a task distribution. Given context examples
	\(D_n=\{(\mathbf x_i,\mathbf y_i)\}_{i=1}^n\), where
	\(\mathbf y_i=g(\mathbf x_i)\), and a query input \(\mathbf x_{n+1}\), the model
	predicts the missing label \(\mathbf y_{n+1}=g(\mathbf x_{n+1})\). During
	training, labels in the prompt are masked and the loss is computed only on
	masked positions. At evaluation time, all support labels are observed and only
	the query label is masked. Thus the model cannot access \(\mathbf y_{n+1}\) and
	must infer it from the in-context examples. We compare the Masked Pair Encoder
	against a GPT-2-style causal Transformer trained from scratch, following the
	synthetic function-class protocol of \citet{garg2022can}, and include
	task-specific baselines when applicable. Further details on the experimental setup are provided in
	Appendix~\ref{app:synthetic-icl}.
	
	\paragraph{Linear regression.}
	We first consider dense linear regression with \(d_{\mathrm{in}}=20\),
	\(w\sim \mathcal N(0,I_{d_{\mathrm{in}}})\),
	\(x_i\sim \mathcal N(0,I_{d_{\mathrm{in}}})\), and \(y_i=w^\top x_i\).
	This is the canonical setting where in-context prediction can be compared with
	ordinary least squares. As shown in Figure~\ref{fig:empirical}(a), the Masked
	Pair Encoder is nearly indistinguishable from the GPT-2 causal baseline and
	closely matches OLS once \(n>d_{\mathrm{in}}\), indicating that masked label
	prediction can implement in-context linear regression.
	
	\paragraph{Noisy linear regression.}
	We next evaluate the same noiselessly trained linear-regression checkpoints on
	noisy labels \(y_i^{\rm raw}=w^\top x_i+\epsilon_i\), where
	\(\epsilon_i\sim \mathcal N(0,1)\). The released noisy-linear curve uses
	population label renormalization and reports mean squared error divided by
	\(d_{\mathrm{in}}=20\). As shown in Figure~\ref{fig:empirical}(b), the Masked
	Pair Encoder remains close to the GPT-2 causal baseline and follows the
	least-squares trend away from the interpolation threshold. Near
	\(n=d_{\mathrm{in}}\), OLS exhibits the expected interpolation spike, consistent
	with double descent \citep{belkin2019reconciling}. This suggests that the
	Masked Pair Encoder learns a least-squares-like in-context estimator rather than
	merely memorizing noiseless prompts.
	
	\paragraph{Decision trees.}
	We then consider random depth-4 regression trees. Each internal node branches on
	the sign of a randomly selected input coordinate, and leaf values are drawn from
	a standard normal distribution. Figure~\ref{fig:empirical}(c) shows that the
	Masked Pair Encoder tracks the GPT-2 causal baseline and improves over
	nearest-neighbor, greedy-tree, and sign-preprocessed tree baselines in the
	large-context regime, suggesting that masked label prediction can support
	nonlinear in-context function learning.
	
	\paragraph{Two-layer ReLU networks.}
	Finally, we evaluate a neural function class. Each prompt samples a random
	two-layer ReLU network
	\[
	g(x)=\sqrt{2/h}\sum_{j=1}^h a_j\mathrm{ReLU}(u_j^\top x),
	\]
	with \(d_{\mathrm{in}}=20\), \(h=100\),
	\(u_j\sim \mathcal N(0,I_{d_{\mathrm{in}}})\), and
	\(a_j\sim \mathcal N(0,1)\). As shown in Figure~\ref{fig:empirical}(d), the
	Masked Pair Encoder closely tracks both the GPT-2 causal baseline and a
	per-prompt two-layer neural-network reference fit on the in-context examples.
	This provides a stronger nonlinear test because the task distribution itself is
	a randomly sampled neural network.
	
	\begin{figure}[t]
		\centering
		\begin{minipage}[t]{0.48\textwidth}
			\centering
			\includegraphics[width=\linewidth]{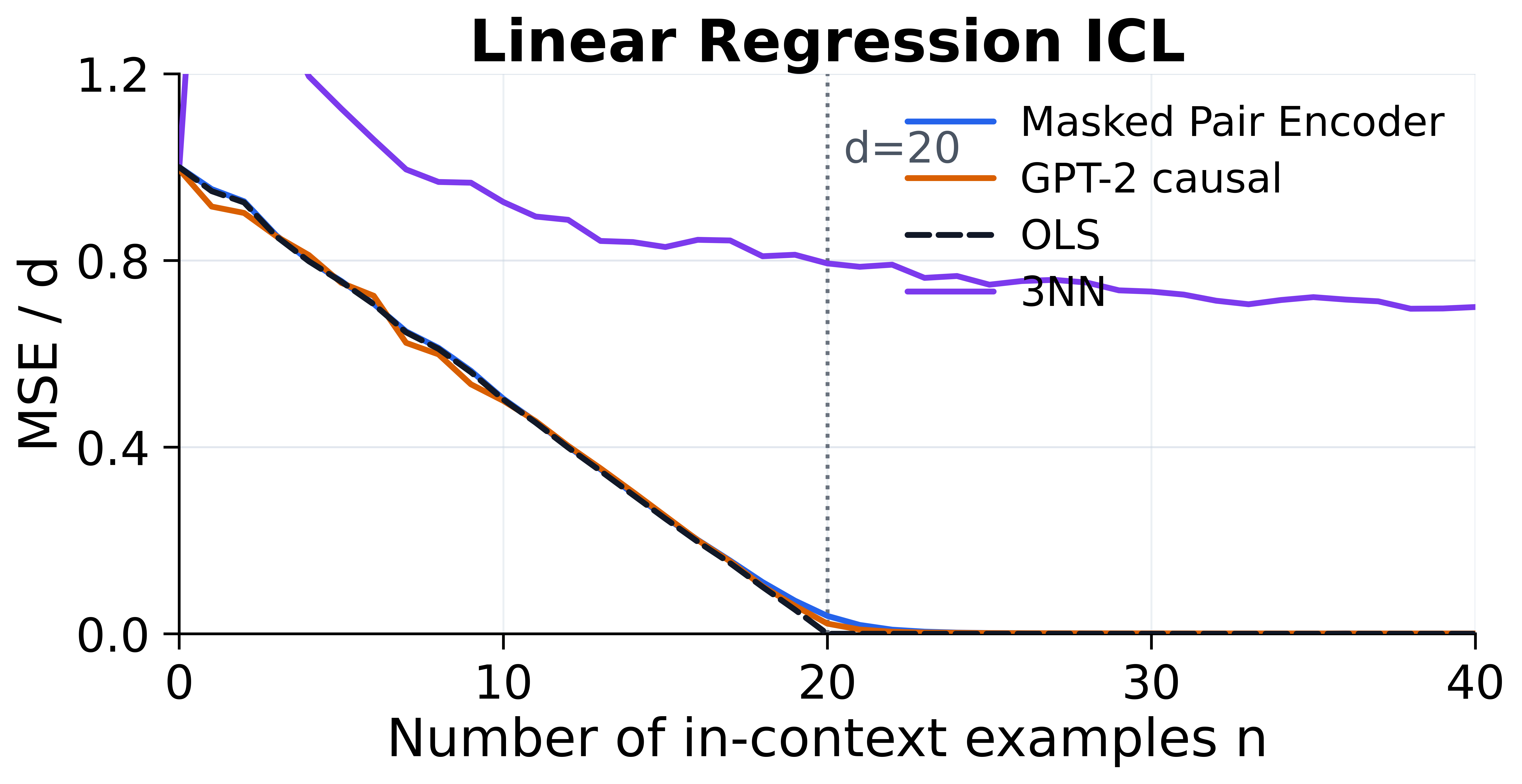}\\[-0.35em]
			{\small (a) Linear regression}
		\end{minipage}
		\hfill
		\begin{minipage}[t]{0.48\textwidth}
			\centering
			\includegraphics[width=\linewidth]{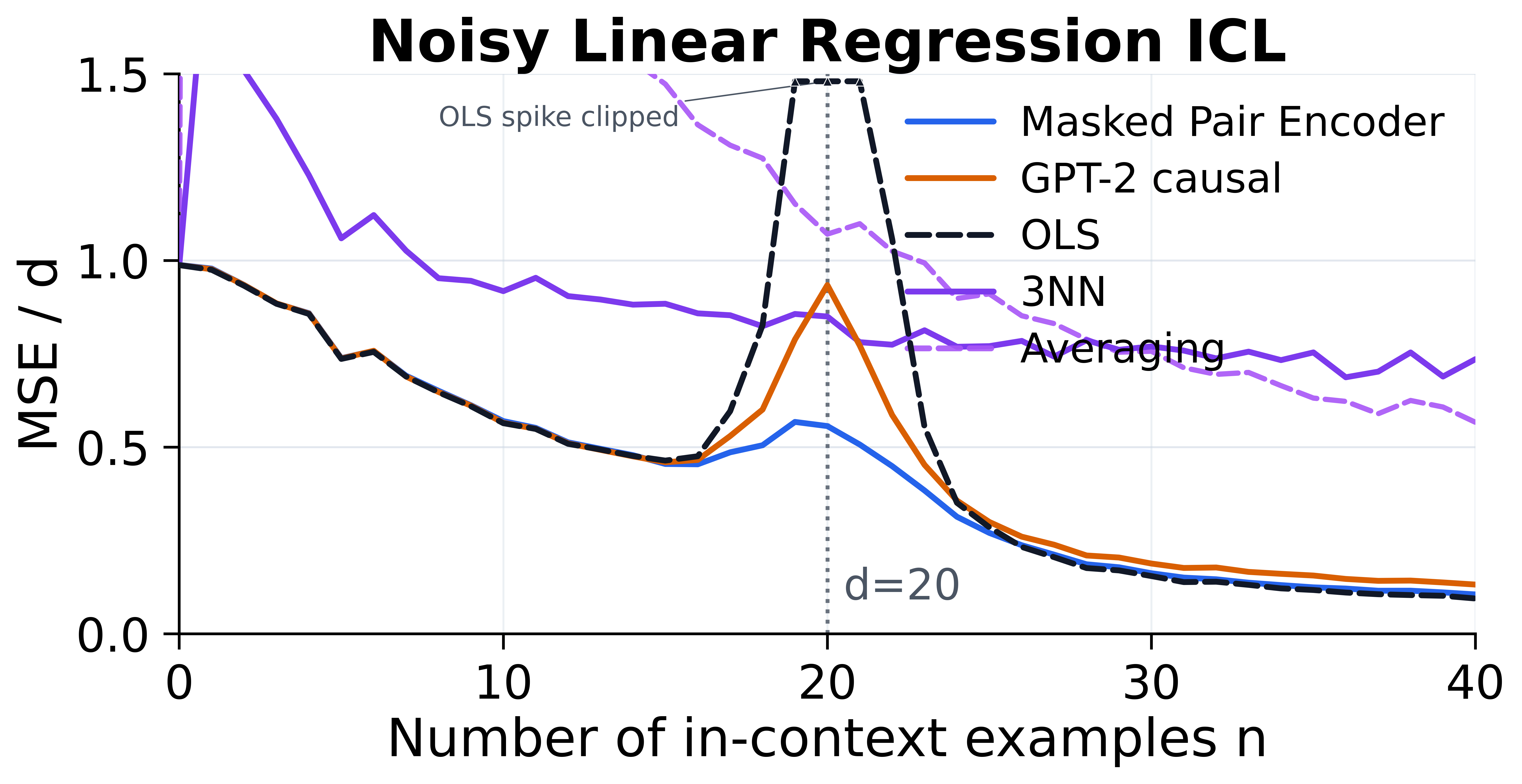}\\[-0.35em]
			{\small (b) Noisy linear regression}
		\end{minipage}
		
		\vspace{0.6em}
		
		\begin{minipage}[t]{0.48\textwidth}
			\centering
			\includegraphics[width=\linewidth]{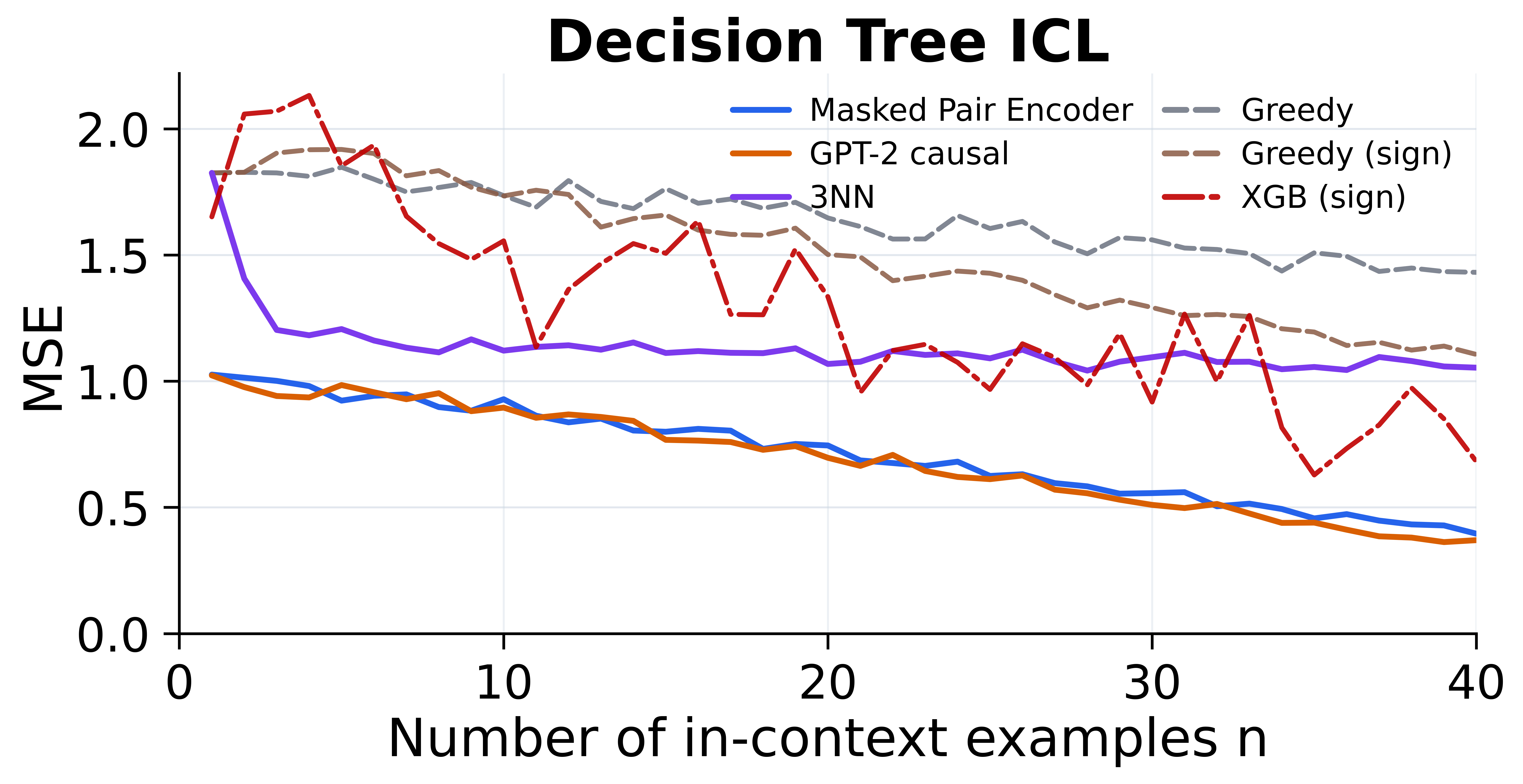}\\[-0.35em]
			{\small (c) Decision trees}
		\end{minipage}
		\hfill
		\begin{minipage}[t]{0.48\textwidth}
			\centering
			\includegraphics[width=\linewidth]{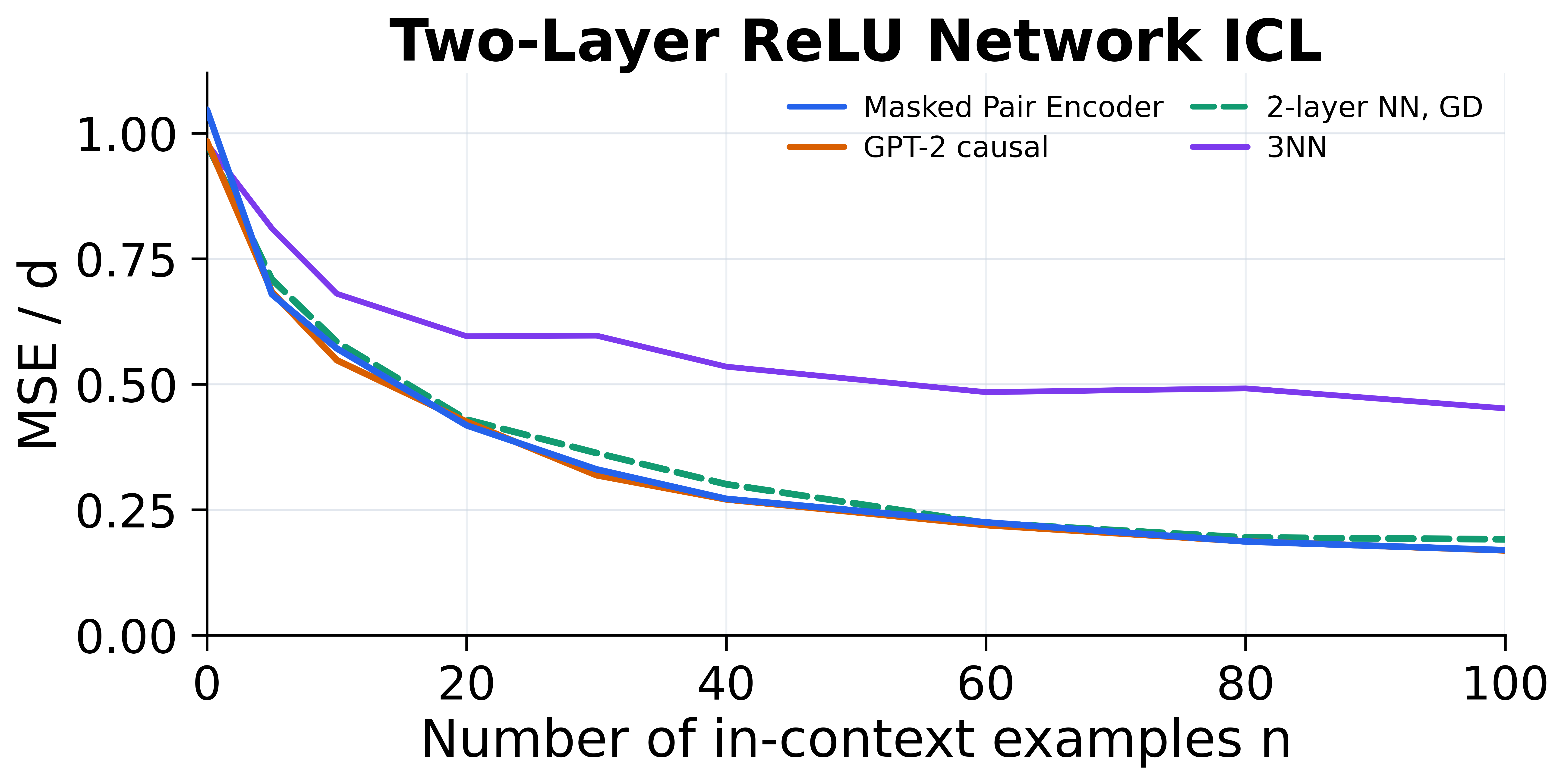}\\[-0.35em]
			{\small (d) Two-layer ReLU networks}
		\end{minipage}
		
		\vspace{0.3em}
		\caption{Synthetic in-context function learning.
			The Masked Pair Encoder is compared with a GPT-2-style causal Transformer
			trained from scratch and task-specific reference methods when available.
			Panels (a), (b), and (d) report mean squared error divided by the input
			dimension \(d_{\mathrm{in}}\), while panel (c) reports mean squared error.
			The four panels correspond to linear regression, noisy linear regression,
			depth-4 decision trees, and two-layer ReLU networks, respectively.}
		\label{fig:empirical}
	\end{figure}
	
	Together, these simulations provide evidence that masked label prediction can
	induce in-context function learning in controlled synthetic settings. The
	Masked Pair Encoder learns to exploit the observed input--output examples in the
	prompt to predict the masked query label, and its performance is comparable to
	that of a GPT-2-style causal Transformer across the four synthetic function
	classes. These results suggest that in-context function learning is not
	restricted to causal language models: masked language models can also acquire
	comparable in-context learning ability when trained with an appropriate masked objective.

	\section{Conclusion}
	
	This paper studied in-context learning by representing the context examples as
	an empirical measure and modeling prediction as $f(\rho,\mathbf x)$. Within
	this formulation, autoregressive and masked pretraining differ in how the
	context for each prediction is constructed: autoregressive prediction uses a
	prefix context, whereas masked prediction uses a leave-one-out context.
	
	The analysis shows that this difference does not change the main statistical
	rates under the proposed framework. For both objectives, the resulting bounds
	describe how the number of training tasks, the number of examples per task, and
	the number of in-context examples affect the final $k$-shot risk. The same
	framework also yields a data-allocation rule under a fixed pretraining budget,
	a transferability bound under task-distribution shift, and improved rates when
	the task distributions have low-dimensional or clusterable structure.
	
	The synthetic experiments further suggest that masked language models can also
	exhibit in-context learning behavior, indicating that ICL is not exclusive to
	causal language models.

	\textbf{Limitations.}
	This paper studies idealized autoregressive and masked objectives rather than
	full models of causal and masked language-model pretraining. The analysis also
	treats low-dimensional or clusterable structure as an explicit assumption,
	whereas modern neural networks may discover such structure adaptively during
	training. Finally, our experiments are limited to controlled synthetic
	function-learning tasks; broader evaluation on more diverse task families and
	natural-language benchmarks is left for future work.
	\newpage
	
	\appendix
	\section{Technical Lemmas and Proofs of Theoretical Results}
	\subsection{Proof of Theorem~\ref{thm:transformer_predictor_joint_lip}}
\label{app:proof_transformer_predictor_joint_lip}

We first record the attention stability estimate used in the proof.

\begin{lemma}[Attention stability]
	\label{lem:attention_stability}
	Under the standing regularity conditions in Section~\ref{sec:transformer_arch}, for each layer \(j\in[D]\) there exists \(L_{A,j}>0\) such that, for all \(\rho,\rho'\in\mathcal P(\mathcal Z)\) and \(\mathbf z,\mathbf z'\in\mathcal Z\),
	\[
	\|\mathcal A_{j,\rho}(\mathbf z)-\mathcal A_{j,\rho'}(\mathbf z')\|_2
	\le
	L_{A,j}\bigl(\|\mathbf z-\mathbf z'\|_2+W_1(\rho,\rho')\bigr).
	\]
\end{lemma}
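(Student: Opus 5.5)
The plan is to prove the estimate one head at a time and then sum over heads with the output matrices. Since \(\mathcal A_{j,\rho}=\sum_h W_{O,j}^{(h)}\mathcal A^{(h)}_{j,\rho}\), it suffices to find, for each head, a constant \(L_{j,h}\) with \(\|\mathcal A^{(h)}_{j,\rho}(\mathbf z)-\mathcal A^{(h)}_{j,\rho'}(\mathbf z')\|_2\le L_{j,h}(\|\mathbf z-\mathbf z'\|_2+W_1(\rho,\rho'))\). We can then take \(L_{A,j}:=\sum_h\|W_{O,j}^{(h)}\|_{\mathrm{op}}L_{j,h}\). For a fixed head, drop indices and write \(N_\rho(\mathbf z):=\int G(\mathbf z,\mathbf u)W_V\mathbf u\,d\rho(\mathbf u)\) and \(D_\rho(\mathbf z):=\int G(\mathbf z,\mathbf u)\,d\rho(\mathbf u)\), so that \(\mathcal A_\rho=N_\rho/D_\rho\).

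First, record uniform bounds on the compact set \(\mathcal Z\), which lies in the ball of radius \(R\). The kernel \(G\) is continuous and hence bounded above by some \(M_G\), and it is bounded below by \(\varepsilon>0\). Therefore
\[
\varepsilon\le D_\rho\le M_G,\qquad \|N_\rho\|_2\le M_G\|W_V\|_{\mathrm{op}}R .
\]

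Next, split the difference by the triangle inequality into a query part and a measure part:
\[
\|\mathcal A_\rho(\mathbf z)-\mathcal A_{\rho'}(\mathbf z')\|_2\le\|\mathcal A_\rho(\mathbf z)-\mathcal A_\rho(\mathbf z')\|_2+\|\mathcal A_\rho(\mathbf z')-\mathcal A_{\rho'}(\mathbf z')\|_2 .
\]
For either part, use the quotient identity
\[
\frac{N}{D}-\frac{N'}{D'}=\frac{N-N'}{D}+N'\,\frac{D'-D}{DD'} .
\]
Together with the uniform bounds, this reduces everything to Lipschitz bounds on \(N\) and \(D\). The factors are \(1/\varepsilon\) for the first term and \(M_G\|W_V\|R/\varepsilon^2\) for the second.

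\textbf{Query part.} Let \(L_1:=\sup_{\mathbf u}\mathrm{Lip}(G(\cdot,\mathbf u))\). Then \(|D_\rho(\mathbf z)-D_\rho(\mathbf z')|\le L_1\|\mathbf z-\mathbf z'\|_2\) and \(\|N_\rho(\mathbf z)-N_\rho(\mathbf z')\|_2\le L_1\|W_V\|R\|\mathbf z-\mathbf z'\|_2\). Both bounds hold uniformly in \(\rho\), because the integrand difference is bounded pointwise in \(\mathbf u\).

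\textbf{Measure part.} This is the main step. Use Kantorovich--Rubinstein duality: for any \(L\)-Lipschitz \(\phi\) on \(\mathcal Z\), \(|\int\phi\,d\rho-\int\phi\,d\rho'|\le L\,W_1(\rho,\rho')\). For the denominator, take \(\phi=G(\mathbf z',\cdot)\), which has Lipschitz constant at most \(L_2:=\sup_{\mathbf z}\mathrm{Lip}(G(\mathbf z,\cdot))\). For the numerator, which is vector valued, apply duality coordinatewise, or against a unit vector \(\mathbf e\), to \(\phi(\mathbf u)=G(\mathbf z',\mathbf u)\,\mathbf e^\top W_V\mathbf u\). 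Its Lipschitz constant follows from the product rule on the compact set:
\[
\mathrm{Lip}(\phi)\le L_2\|W_V\|R+M_G\|W_V\| .
\]
Taking the supremum over \(\mathbf e\) gives
\[
\|N_\rho(\mathbf z')-N_{\rho'}(\mathbf z')\|_2\le(L_2R+M_G)\|W_V\|_{\mathrm{op}}\,W_1(\rho,\rho') .
\]

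\textbf{Main obstacle.} The difficulty is the vector-valued duality step. One must check that testing against unit vectors \(\mathbf e\) is legitimate, which uses \(\|\mathbf v\|_2=\sup_{\|\mathbf e\|=1}\mathbf e^\top\mathbf v\), and that all constants stay uniform in \(\mathbf z'\). Both follow from compactness of \(\mathcal Z\) and the uniform Lipschitz hypotheses on \(G\). Collecting the constants then gives \(L_{j,h}\) and hence \(L_{A,j}\).
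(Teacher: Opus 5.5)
Your proof is correct, but it takes a different route from the paper's.

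\textbf{The paper's route.} For each head, the paper introduces the softmax probability measure \(\Psi_{j,h}^{\rho}(\mathbf z)(d\mathbf u)\propto G_j^{(h)}(\mathbf z,\mathbf u)\,\rho(d\mathbf u)\). It imports from Proposition~20 of \citet{vuckovic2021regularity} that \((\rho,\mathbf z)\mapsto\Psi_{j,h}^{\rho}(\mathbf z)\) is \(W_1\)-Lipschitz in each argument. It then writes the head output as the mean of \(W_{V,j}^{(h)}\mathbf u\) under this measure, so coupling the two softmax measures yields the bound with factor \(\|W_{V,j}^{(h)}\|_{\mathrm{op}}\).

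\textbf{Your route.} You work directly with the quotient \(N_\rho/D_\rho\). You use the lower bound \(\varepsilon\) and an upper bound \(M_G\); the latter follows legitimately from continuity, because the two uniform one-sided Lipschitz conditions make \(G\) jointly Lipschitz. The query part uses pointwise integrand bounds. The measure part uses the elementary direction of Kantorovich--Rubinstein, namely integrating \(\phi(\mathbf u)-\phi(\mathbf u')\) against an arbitrary coupling. Your calculation is essentially the proof of the cited proposition, specialized to the bounded test functions \(\mathbf u\mapsto G(\mathbf z',\mathbf u)\,\mathbf e^\top W_V\mathbf u\).

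\textbf{What each buys.} Your version is self-contained and gives explicit constants in \(\varepsilon\), \(M_G\), \(L_1\), \(L_2\), \(R\), and \(\|W_V\|_{\mathrm{op}}\). The paper's factorization through \(\Psi\) gives a more reusable intermediate result: Lipschitz continuity of the attention measure itself, valid for any Lipschitz value map. It also matches the measure-theoretic viewpoint of the section, at the cost of relying on an external result whose hypotheses must be matched to the standing conditions.

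The step you flag as the main obstacle is routine. The identity \(\|\mathbf v\|_2=\sup_{\|\mathbf e\|_2=1}\mathbf e^\top\mathbf v\) holds, and your constant \((L_2R+M_G)\|W_V\|_{\mathrm{op}}\) is independent of both \(\mathbf e\) and \(\mathbf z'\). Nothing further is needed.
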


\begin{proof}
	Fix \(j\in[D]\) and \(h\in[H]\). Let \(\Psi_{j,h}^{\rho}(\mathbf z)\) be the softmax probability measure associated with \(G_j^{(h)}(\mathbf z,\cdot)\) and \(\rho\), namely
	\(\Psi_{j,h}^{\rho}(\mathbf z)(d\mathbf u):=
	G_j^{(h)}(\mathbf z,\mathbf u)\rho(d\mathbf u)/
	\int_{\mathcal Z}G_j^{(h)}(\mathbf z,\mathbf v)\,d\rho(\mathbf v)\).
	Then \(\mathcal A_{j,\rho}^{(h)}(\mathbf z)=
	\int_{\mathcal Z}W_{V,j}^{(h)}\mathbf u\,
	\Psi_{j,h}^{\rho}(\mathbf z)(d\mathbf u)\). By Proposition~20 of
	\citet{vuckovic2021regularity}, applied with \(E=\mathcal Z\) and
	\(G=G_j^{(h)}\), there exist constants \(C_{j,h}^{z},C_{j,h}^{m}>0\) such that
	\(W_1(\Psi_{j,h}^{\rho}(\mathbf z),\Psi_{j,h}^{\rho}(\mathbf z'))
	\le C_{j,h}^{z}\|\mathbf z-\mathbf z'\|_2\) and
	\(W_1(\Psi_{j,h}^{\rho}(\mathbf z'),\Psi_{j,h}^{\rho'}(\mathbf z'))
	\le C_{j,h}^{m}W_1(\rho,\rho')\). Hence, with
	\(C_{j,h}:=\max\{C_{j,h}^{z},C_{j,h}^{m}\}\), the triangle inequality gives
	\(W_1(\Psi_{j,h}^{\rho}(\mathbf z),\Psi_{j,h}^{\rho'}(\mathbf z'))
	\le C_{j,h}(\|\mathbf z-\mathbf z'\|_2+W_1(\rho,\rho'))\).
	
	Let \(\pi\) be any coupling of \(\Psi_{j,h}^{\rho}(\mathbf z)\) and
	\(\Psi_{j,h}^{\rho'}(\mathbf z')\). Since these two measures are the marginals
	of \(\pi\),
	\[
	\mathcal A_{j,\rho}^{(h)}(\mathbf z)-\mathcal A_{j,\rho'}^{(h)}(\mathbf z')
	=
	\int_{\mathcal Z\times\mathcal Z}
	W_{V,j}^{(h)}(\mathbf u-\mathbf u')\,d\pi(\mathbf u,\mathbf u').
	\]
	Therefore
	\[
	\|\mathcal A_{j,\rho}^{(h)}(\mathbf z)-\mathcal A_{j,\rho'}^{(h)}(\mathbf z')\|_2
	\le
	\|W_{V,j}^{(h)}\|_{\mathrm{op}}C_{j,h}
	\bigl(\|\mathbf z-\mathbf z'\|_2+W_1(\rho,\rho')\bigr),
	\]
	where we have taken the infimum over all such couplings. Since
	\(\mathcal A_{j,\rho}(\mathbf z)=\sum_{h=1}^H
	W_{O,j}^{(h)}\mathcal A_{j,\rho}^{(h)}(\mathbf z)\), the desired bound follows
	with \(L_{A,j}:=\sum_{h=1}^H
	\|W_{O,j}^{(h)}\|_{\mathrm{op}}\|W_{V,j}^{(h)}\|_{\mathrm{op}}C_{j,h}\).
\end{proof}

\begin{lemma}[Layerwise Transformer block stability]
	\label{lem:block_stability}
	Under the assumptions of Lemma~\ref{lem:attention_stability}, for each layer
	\(j\in[D]\) there exist constants \(C_{z,j},C_{\rho,j},C_{p,j}>0\) such that, for all \(\rho,\rho'\in\mathcal P(\mathcal Z)\) and \(\mathbf z,\mathbf z'\in\mathcal Z\),
	\[
	\|B_j(\rho,\mathbf z)-B_j(\rho',\mathbf z')\|_2
	\le
	C_{z,j}\|\mathbf z-\mathbf z'\|_2+C_{\rho,j}W_1(\rho,\rho'),
	\]
	and
	\[
	W_1\bigl((B_j(\rho,\cdot))_\#\rho,(B_j(\rho',\cdot))_\#\rho'\bigr)
	\le
	C_{p,j}W_1(\rho,\rho').
	\]
\end{lemma}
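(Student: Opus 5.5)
The plan is to treat the pointwise estimate and the pushforward estimate separately, using Lemma~\ref{lem:attention_stability} throughout.

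\emph{Pointwise estimate.} Write \(\mathbf w:=\mathbf z+\mathcal A_{j,\rho}(\mathbf z)\) and \(\mathbf w':=\mathbf z'+\mathcal A_{j,\rho'}(\mathbf z')\). Then
\[
B_j(\rho,\mathbf z)-B_j(\rho',\mathbf z')=(\mathbf w-\mathbf w')+\bigl(g_j(\mathbf w)-g_j(\mathbf w')\bigr).
\]
By Lemma~\ref{lem:attention_stability},
\[
\|\mathbf w-\mathbf w'\|_2\le(1+L_{A,j})\|\mathbf z-\mathbf z'\|_2+L_{A,j}W_1(\rho,\rho').
\]
The feedforward map \(g_j\) is Lipschitz with constant \(L_{g,j}:=\|W_j^{(2)}\|_{\mathrm{op}}\|W_j^{(1)}\|_{\mathrm{op}}\). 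Combining the two gives the first inequality with
\[
C_{z,j}:=(1+L_{g,j})(1+L_{A,j}),\qquad C_{\rho,j}:=(1+L_{g,j})L_{A,j}.
\]

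\emph{Pushforward estimate.} Split the distance with the triangle inequality through the intermediate measure \((B_j(\rho',\cdot))_\#\rho\):
\[
W_1\bigl((B_j(\rho,\cdot))_\#\rho,(B_j(\rho',\cdot))_\#\rho'\bigr)
\le
W_1\bigl((B_j(\rho,\cdot))_\#\rho,(B_j(\rho',\cdot))_\#\rho\bigr)
+W_1\bigl((B_j(\rho',\cdot))_\#\rho,(B_j(\rho',\cdot))_\#\rho'\bigr).
\]
For the first term, use the coupling \((B_j(\rho,\cdot),B_j(\rho',\cdot))_\#\rho\). Applying the pointwise bound with \(\mathbf z=\mathbf z'\), the first term is at most \(C_{\rho,j}W_1(\rho,\rho')\).

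For the second term, the map \(B_j(\rho',\cdot)\) is \(C_{z,j}\)-Lipschitz, again by the pointwise bound. Take an optimal coupling \(\gamma\) of \(\rho\) and \(\rho'\), which exists by compactness, and push it forward by \(B_j(\rho',\cdot)\times B_j(\rho',\cdot)\). This bounds the second term by \(C_{z,j}W_1(\rho,\rho')\).

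Hence \(C_{p,j}:=C_{z,j}+C_{\rho,j}\) works. The invariance of \(\mathcal Z\) under the blocks ensures that all pushforwards remain in \(\mathcal P(\mathcal Z)\), so \(W_1\) is well defined throughout.

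The main obstacle is minor. It is to keep the two roles of \(\rho\) separate in the pushforward: \(\rho\) acts both as the measure fixing the attention coefficients and as the measure being transported. The intermediate measure \((B_j(\rho',\cdot))_\#\rho\) is chosen precisely to decouple these two roles, and everything else is routine composition of Lipschitz bounds.
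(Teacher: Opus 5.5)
Your proof is correct and essentially matches the paper's: the pointwise step is identical, with the same constants \(C_{z,j}=(1+L_{g,j})(1+L_{A,j})\) and \(C_{\rho,j}=(1+L_{g,j})L_{A,j}\). For the pushforward, the paper does not pass through the intermediate measure \((B_j(\rho',\cdot))_\#\rho\); it pushes an arbitrary \(\pi\in\Gamma(\rho,\rho')\) forward by \((B_j(\rho,\cdot),B_j(\rho',\cdot))\) and applies the joint pointwise bound inside the integral, which gives the same \(C_{p,j}=C_{z,j}+C_{\rho,j}\) in one step.
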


\begin{proof}
	Let \(L_{g,j}\) be a Lipschitz constant of \(g_j\); by the ReLU feedforward
	definition, one may take \(L_{g,j}\le
	\|W_j^{(2)}\|_{\mathrm{op}}\|W_j^{(1)}\|_{\mathrm{op}}\). Define
	\(S_j(\rho,\mathbf z):=\mathbf z+\mathcal A_{j,\rho}(\mathbf z)\). By
	Lemma~\ref{lem:attention_stability},
	\(\|S_j(\rho,\mathbf z)-S_j(\rho',\mathbf z')\|_2
	\le (1+L_{A,j})\|\mathbf z-\mathbf z'\|_2+L_{A,j}W_1(\rho,\rho')\).
	Since \(B_j(\rho,\mathbf z)=S_j(\rho,\mathbf z)+g_j(S_j(\rho,\mathbf z))\), it follows that
	\[
	\|B_j(\rho,\mathbf z)-B_j(\rho',\mathbf z')\|_2
	\le
	(1+L_{g,j})\|S_j(\rho,\mathbf z)-S_j(\rho',\mathbf z')\|_2 .
	\]
	Thus the token-level estimate holds with
	\(C_{z,j}:=(1+L_{g,j})(1+L_{A,j})\) and
	\(C_{\rho,j}:=(1+L_{g,j})L_{A,j}\).
	
	It remains to prove the pushforward estimate. Let \(\pi\in\Gamma(\rho,\rho')\)
	be any coupling of \(\rho\) and \(\rho'\). Then
	\((B_j(\rho,\cdot),B_j(\rho',\cdot))_\#\pi\) is a coupling of
	\((B_j(\rho,\cdot))_\#\rho\) and \((B_j(\rho',\cdot))_\#\rho'\). Hence
	\[
	\begin{aligned}
		&W_1\bigl((B_j(\rho,\cdot))_\#\rho,(B_j(\rho',\cdot))_\#\rho'\bigr) \\
		&\qquad\le
		\int_{\mathcal Z\times\mathcal Z}
		\|B_j(\rho,\mathbf z)-B_j(\rho',\mathbf z')\|_2\,d\pi(\mathbf z,\mathbf z') \\
		&\qquad\le
		C_{z,j}\int_{\mathcal Z\times\mathcal Z}\|\mathbf z-\mathbf z'\|_2\,d\pi(\mathbf z,\mathbf z')
		+
		C_{\rho,j}W_1(\rho,\rho').
	\end{aligned}
	\]
	Taking the infimum over \(\pi\in\Gamma(\rho,\rho')\) gives the claim with
	\(C_{p,j}:=C_{z,j}+C_{\rho,j}\).
\end{proof}

\begin{proof}[Proof of Theorem~\ref{thm:transformer_predictor_joint_lip}]
	Let \(\mu_0:=\rho\), \(\mu_0':=\rho'\),
	\(\mathbf z_0:=\widetilde{\mathbf z}(\mathbf x)\), and
	\(\mathbf z_0':=\widetilde{\mathbf z}(\mathbf x')\). For \(j=1,\ldots,D\),
	define \(\mathbf z_j:=B_j(\mu_{j-1},\mathbf z_{j-1})\) and
	\(\mu_j:=(B_j(\mu_{j-1},\cdot))_\#\mu_{j-1}\), and define
	\(\mathbf z_j'\) and \(\mu_j'\) analogously from \(\mu_0'\) and \(\mathbf z_0'\).
	
	By Lemma~\ref{lem:block_stability},
	\(W_1(\mu_j,\mu_j')\le C_{p,j}W_1(\mu_{j-1},\mu_{j-1}')\). Iterating this
	inequality gives \(W_1(\mu_j,\mu_j')\le P_jW_1(\rho,\rho')\), where
	\(P_0:=1\) and \(P_j:=\prod_{r=1}^j C_{p,r}\). The token-level estimate in
	Lemma~\ref{lem:block_stability} further gives
	\[
	\|\mathbf z_j-\mathbf z_j'\|_2
	\le
	C_{z,j}\|\mathbf z_{j-1}-\mathbf z_{j-1}'\|_2
	+
	C_{\rho,j}P_{j-1}W_1(\rho,\rho'),
	\qquad j=1,\ldots,D .
	\]
	Iterating this recursion, there exists \(C_D>0\), depending only on the
	layerwise constants, such that
	\[
	\|\mathcal T_\rho^{(D)}(\mathbf z_0)-\mathcal T_{\rho'}^{(D)}(\mathbf z_0')\|_2
	\le
	C_D\bigl(\|\mathbf z_0-\mathbf z_0'\|_2+W_1(\rho,\rho')\bigr).
	\]
	Since the mask component is fixed, the product Euclidean metric gives
	\(\|\mathbf z_0-\mathbf z_0'\|_2=\|\mathbf x-\mathbf x'\|_2\). Therefore, by
	linearity of \(W_{\mathrm{out}}\),
	\[
	\begin{aligned}
		\|f(\rho,\mathbf x)-f(\rho',\mathbf x')\|_2
		&\le
		\|W_{\mathrm{out}}\|_{\mathrm{op}}
		\|\mathcal T_\rho^{(D)}(\mathbf z_0)-\mathcal T_{\rho'}^{(D)}(\mathbf z_0')\|_2 \\
		&\le
		\|W_{\mathrm{out}}\|_{\mathrm{op}}C_D
		\bigl(\|\mathbf x-\mathbf x'\|_2+W_1(\rho,\rho')\bigr).
	\end{aligned}
	\]
	The result follows with \(L_F:=\|W_{\mathrm{out}}\|_{\mathrm{op}}C_D\).
\end{proof}

	\subsection{Proof of Theorem~\ref{thm:icl_decomposition}}
	\label{sec:proof_icl_decomposition}
	
	\begin{proof}
		Throughout the proof, write
		\(C_{\ell,f}:=L_\ell\big(L_f+\sqrt{2}\max\{1,L_f\}\big)\).
		We first record an ERM reduction common to the two objectives. Let
		\(R_{T,N}^{\circ,\ell}\) denote either \(\widehat R_{T,N}^{\ell}\) or
		\(\bar R_{T,N}^{\ell}\), and let
		\(f^\circ\in\arg\min_{f\in\mathcal F}R_{T,N}^{\circ,\ell}(f)\). Since
		\(f^*\in\arg\min_{f\in\mathcal F}R_{\mathbb P}^{\ell}(f)\), we have
		\[
		\begin{aligned}
			R_{\mathbb P}^{\ell}(f^\circ)-R_{\mathbb P}^{\ell}(f^*)
			=&\,
			\bigl(R_{\mathbb P}^{\ell}(f^\circ)-R_{T,N}^{\circ,\ell}(f^\circ)\bigr)
			+
			\bigl(R_{T,N}^{\circ,\ell}(f^\circ)-R_{T,N}^{\circ,\ell}(f^*)\bigr) \\
			&+
			\bigl(R_{T,N}^{\circ,\ell}(f^*)-R_{\mathbb P}^{\ell}(f^*)\bigr) \\
			\le&\,
			\bigl(R_{\mathbb P}^{\ell}(f^\circ)-R_{T,N}^{\circ,\ell}(f^\circ)\bigr)
			+
			\bigl(R_{T,N}^{\circ,\ell}(f^*)-R_{\mathbb P}^{\ell}(f^*)\bigr) \\
			\le&\,
			2\sup_{f\in\mathcal F}
			\bigl|R_{\mathbb P}^{\ell}(f)-R_{T,N}^{\circ,\ell}(f)\bigr|,
		\end{aligned}
		\]
		where the first inequality follows from
		\(R_{T,N}^{\circ,\ell}(f^\circ)\le R_{T,N}^{\circ,\ell}(f^*)\). Hence
		\[
		\mathbb E[R_{\mathbb P}^{\ell}(f^\circ)]-R_{\mathbb P}^{\ell}(f^*)
		\le
		2\,\mathbb E\!\left[
		\sup_{f\in\mathcal F}
		\bigl|R_{\mathbb P}^{\ell}(f)-R_{T,N}^{\circ,\ell}(f)\bigr|
		\right].
		\]
		
		Define \(R_{\hat{\mathbb P}_T}^{\ell}(f):=
		T^{-1}\sum_{t=1}^T\int_{\mathcal Z}
		\ell\bigl(\mathbf y,f(\rho_t,\mathbf x)\bigr)\,
		d\rho_t(\mathbf x,\mathbf y)\) and
		\(\hat{\mathbb P}_{T,N}:=T^{-1}\sum_{t=1}^T\delta_{\hat\rho_t^{\,N}}\).
		For any \(f\in\mathcal F\),
		\[
		\bigl|R_{\mathbb P}^{\ell}(f)-R_{T,N}^{\circ,\ell}(f)\bigr|
		\le
		\bigl|R_{\mathbb P}^{\ell}(f)-R_{\hat{\mathbb P}_T}^{\ell}(f)\bigr|
		+
		\bigl|R_{\hat{\mathbb P}_T}^{\ell}(f)-R_{T,N}^{\circ,\ell}(f)\bigr|.
		\]
		By Theorem~\ref{thm:transferability},
		\[
		\sup_{f\in\mathcal F}
		\bigl|R_{\mathbb P}^{\ell}(f)-R_{\hat{\mathbb P}_T}^{\ell}(f)\bigr|
		\le
		C_{\ell,f}\,\mathbb W_1(\mathbb P,\hat{\mathbb P}_T).
		\]
		Moreover, \(R_{\hat{\mathbb P}_{T,N}}^{\ell}(f)
		=(TN)^{-1}\sum_{t=1}^T\sum_{j=1}^N
		\ell\bigl(\mathbf y_{t,j},f(\hat\rho_t^{\,N},\mathbf x_{t,j})\bigr)\).
		Another application of Theorem~\ref{thm:transferability} gives
		\[
		\sup_{f\in\mathcal F}
		\bigl|R_{\hat{\mathbb P}_T}^{\ell}(f)
		-
		R_{\hat{\mathbb P}_{T,N}}^{\ell}(f)\bigr|
		\le
		C_{\ell,f}\,\mathbb W_1(\hat{\mathbb P}_T,\hat{\mathbb P}_{T,N}).
		\]
		
		\emph{Masked ICL.}
		Now take \(R_{T,N}^{\circ,\ell}=\widehat R_{T,N}^{\ell}\) and
		\(f^\circ=\hat f\). 
		Since
		\[
		\widehat R_{T,N}^{\ell}(f)
		=
		\frac1{TN}\sum_{t=1}^T\sum_{j=1}^N
		\ell\bigl(\mathbf y_{t,j},
		f(\hat\rho_t^{\,(-j)},\mathbf x_{t,j})\bigr),
		\]
		the Lipschitz assumptions imply
		\[
		\begin{aligned}
			\bigl|R_{\hat{\mathbb P}_{T,N}}^{\ell}(f)
			-
			\widehat R_{T,N}^{\ell}(f)\bigr|
			&\le
			\frac1{TN}\sum_{t=1}^T\sum_{j=1}^N
			\Bigl|
			\ell\bigl(\mathbf y_{t,j},f(\hat\rho_t^{\,N},\mathbf x_{t,j})\bigr)
			-
			\ell\bigl(\mathbf y_{t,j},f(\hat\rho_t^{\,(-j)},\mathbf x_{t,j})\bigr)
			\Bigr| \\
			&\le
			\frac{L_\ell L_f}{TN}
			\sum_{t=1}^T\sum_{j=1}^N
			W_1(\hat\rho_t^{\,N},\hat\rho_t^{\,(-j)}).
		\end{aligned}
		\]
		Combining the preceding bounds, taking the supremum over \(f\in\mathcal F\),
		and then taking expectations yields
		\[
		\begin{aligned}
			\mathbb E[R_{\mathbb P}^{\ell}(\hat f)]
			-
			R_{\mathbb P}^{\ell}(f^*)
			\le\;&
			2C_{\ell,f}\,\mathbb E[\mathbb W_1(\mathbb P,\hat{\mathbb P}_T)]
			+
			2C_{\ell,f}\,\mathbb E[\mathbb W_1(\hat{\mathbb P}_T,\hat{\mathbb P}_{T,N})] \\
			&+
			\frac{2L_\ell L_f}{TN}
			\sum_{t=1}^T\sum_{j=1}^N
			\mathbb E\!\left[
			W_1(\hat\rho_t^{\,N},\hat\rho_t^{\,(-j)})
			\right].
		\end{aligned}
		\]
		
		\emph{Autoregressive ICL.}
		Next take \(R_{T,N}^{\circ,\ell}=\bar R_{T,N}^{\ell}\) and
		\(f^\circ=\bar f\). Since
		\(\bar R_{T,N}^{\ell}(f)
		=[T(N-1)]^{-1}\sum_{t=1}^T\sum_{j=2}^N
		\ell\bigl(\mathbf y_{t,j},f(\hat\rho_t^{(<j)},\mathbf x_{t,j})\bigr)\),
		adding and subtracting
		\([T(N-1)]^{-1}\sum_{t=1}^T\sum_{j=2}^N
		\ell\bigl(\mathbf y_{t,j},f(\hat\rho_t^{\,N},\mathbf x_{t,j})\bigr)\)
		gives
		\[
		\begin{aligned}
			\bigl|R_{\hat{\mathbb P}_{T,N}}^{\ell}(f)-\bar R_{T,N}^{\ell}(f)\bigr|
			\le\;&
			\frac{L_\ell L_f}{T(N-1)}
			\sum_{t=1}^T\sum_{j=2}^N
			W_1(\hat\rho_t^{\,N},\hat\rho_t^{(<j)}) \\
			&+
			\left|
			\frac1{TN}\sum_{t=1}^T\sum_{j=1}^N a_{t,j}
			-
			\frac1{T(N-1)}\sum_{t=1}^T\sum_{j=2}^N a_{t,j}
			\right|,
		\end{aligned}
		\]
		where \(a_{t,j}:=\ell\bigl(\mathbf y_{t,j},f(\hat\rho_t^{\,N},\mathbf x_{t,j})\bigr)\).
		Since \(0\le a_{t,j}\le M_\ell\),
		\[
		\begin{aligned}
			\left|
			\frac1{TN}\sum_{t=1}^T\sum_{j=1}^N a_{t,j}
			-
			\frac1{T(N-1)}\sum_{t=1}^T\sum_{j=2}^N a_{t,j}
			\right|
			&=
			\left|
			\frac1T\sum_{t=1}^T
			\left\{
			\frac{a_{t,1}}{N}
			-
			\frac1{N(N-1)}\sum_{j=2}^N a_{t,j}
			\right\}
			\right| \\
			&\le
			\frac{2M_\ell}{N}.
		\end{aligned}
		\]
		Thus
		\[
		\bigl|R_{\hat{\mathbb P}_{T,N}}^{\ell}(f)-\bar R_{T,N}^{\ell}(f)\bigr|
		\le
		\frac{L_\ell L_f}{T(N-1)}
		\sum_{t=1}^T\sum_{j=2}^N
		W_1(\hat\rho_t^{\,N},\hat\rho_t^{(<j)})
		+
		\frac{2M_\ell}{N}.
		\]
		Combining the preceding bounds, taking the supremum over \(f\in\mathcal F\),
		and then taking expectations yields
		\[
		\begin{aligned}
			\mathbb E[R_{\mathbb P}^{\ell}(\bar f)]
			-
			R_{\mathbb P}^{\ell}(f^*)
			\le\;&
			2C_{\ell,f}\,\mathbb E[\mathbb W_1(\mathbb P,\hat{\mathbb P}_T)]
			+
			2C_{\ell,f}\,\mathbb E[\mathbb W_1(\hat{\mathbb P}_T,\hat{\mathbb P}_{T,N})] \\
			&+
			\frac{2L_\ell L_f}{T(N-1)}
			\sum_{t=1}^T\sum_{j=2}^N
			\mathbb E\!\left[
			W_1(\hat\rho_t^{\,N},\hat\rho_t^{(<j)})
			\right]
			+
			\frac{4M_\ell}{N}.
		\end{aligned}
		\]
		The two bounds prove the theorem.
	\end{proof}
	
	\subsection{Proof of Corollary~\ref{cor:icl_rates}}
	\label{sec:proof_icl_rates}
	
	\begin{proof}
		Let \(D_{\mathcal Z}:=\operatorname{diam}(\mathcal Z)<\infty\). We bound
		the terms in Theorem~\ref{thm:icl_decomposition}.
		
		By Assumption~\ref{assum:wass_dim} and
		Lemma~\ref{lem:wass_convergence},
		\[
		\mathbb E\!\left[
		\mathbb W_1(\mathbb P,\hat{\mathbb P}_T)
		\right]
		\lesssim
		T^{-1/s}.
		\]
		
		Next, write \(\hat{\mathbb P}_T=T^{-1}\sum_{t=1}^T\delta_{\rho_t}\) and
		\(\hat{\mathbb P}_{T,N}=T^{-1}\sum_{t=1}^T
		\delta_{\hat\rho_t^{\,N}}\). The diagonal coupling between these two
		empirical meta-distributions gives
		\[
		\mathbb W_1(\hat{\mathbb P}_T,\hat{\mathbb P}_{T,N})
		\le
		\frac1T\sum_{t=1}^T W_1(\rho_t,\hat\rho_t^{\,N}).
		\]
		Taking expectations and applying the empirical \(W_1\) convergence rate on
		\(\mathcal Z=\mathcal X\times\mathcal Y\subseteq\mathbb R^{d_x+d_y}\)
		\citep{dudley1969speed} yields
		\[
		\mathbb E\!\left[
		\mathbb W_1(\hat{\mathbb P}_T,\hat{\mathbb P}_{T,N})
		\right]
		\lesssim
		N^{-1/(d_x+d_y)}.
		\]
		
		We now control the masked context term. Fix \(t\in[T]\). Since
		\(\hat\rho_t^{\,N}=N^{-1}\sum_{i=1}^N\delta_{\mathbf z_{t,i}}\) and
		\(\hat\rho_t^{\,(-1)}=(N-1)^{-1}\sum_{i=2}^N
		\delta_{\mathbf z_{t,i}}\), couple each common atom
		\(\mathbf z_{t,i}\), \(i\ge2\), to itself with mass \(1/N\), and send
		the remaining mass \(1/N\) at \(\mathbf z_{t,1}\) uniformly to
		\(\mathbf z_{t,2},\ldots,\mathbf z_{t,N}\). This coupling has the correct
		marginals, and hence
		\[
		W_1(\hat\rho_t^{\,N},\hat\rho_t^{\,(-1)})
		\le
		\frac1{N(N-1)}
		\sum_{i=2}^N \|\mathbf z_{t,1}-\mathbf z_{t,i}\|_2
		\le
		\frac{D_{\mathcal Z}}{N}.
		\]
		By relabeling, the same bound holds for every \(j\in[N]\). Therefore,
		\[
		\frac1{TN}\sum_{t=1}^T\sum_{j=1}^N
		\mathbb E\!\left[
		W_1(\hat\rho_t^{\,N},\hat\rho_t^{\,(-j)})
		\right]
		\lesssim
		N^{-1}.
		\]
		
		For the autoregressive context term, fix \(t\in[T]\) and
		\(j\in\{2,\ldots,N\}\). By the triangle inequality,
		\(W_1(\hat\rho_t^{\,N},\hat\rho_t^{(<j)})
		\le W_1(\hat\rho_t^{\,N},\rho_t)
		+ W_1(\rho_t,\hat\rho_t^{(<j)})\). The empirical \(W_1\) convergence rate
		gives \(\mathbb E[W_1(\hat\rho_t^{\,N},\rho_t)]
		\lesssim N^{-1/(d_x+d_y)}\). Moreover, since
		\(\hat\rho_t^{(<j)}=(j-1)^{-1}\sum_{i=1}^{j-1}
		\delta_{\mathbf z_{t,i}}\) is based on \(j-1\) i.i.d. samples from
		\(\rho_t\), the same bound gives
		\(\mathbb E[W_1(\rho_t,\hat\rho_t^{(<j)})]
		\lesssim (j-1)^{-1/(d_x+d_y)}\). Averaging over \(j\), we obtain
		\[
		\frac1{T(N-1)}
		\sum_{t=1}^T\sum_{j=2}^N
		\mathbb E\!\left[
		W_1(\hat\rho_t^{\,N},\hat\rho_t^{(<j)})
		\right]
		\lesssim
		N^{-1/(d_x+d_y)}
		+
		\frac1{N-1}\sum_{m=1}^{N-1}m^{-1/(d_x+d_y)}
		\lesssim
		N^{-1/(d_x+d_y)},
		\]
		where the last step uses \(d_x+d_y>1\). The finite-length term in
		Theorem~\ref{thm:icl_decomposition} contributes \(4M_\ell/N=O(N^{-1})\).
		
		Substituting these bounds into Theorem~\ref{thm:icl_decomposition} gives,
		for both \(f_{T,N}=\hat f\) and \(f_{T,N}=\bar f\),
		\[
		\mathbb E[R_{\mathbb P}^{\ell}(f_{T,N})]
		-
		R_{\mathbb P}^{\ell}(f^*)
		\lesssim
		T^{-1/s}
		+
		N^{-1/(d_x+d_y)}
		+
		N^{-1}.
		\]
		This proves the claim.
	\end{proof}

	\subsection{Proof of Theorem~\ref{thm:kshot_excess_risk}}
	\label{kshot}
	
	\begin{proof}
		Write \(f_{T,N}=\hat f\) for the masked objective and
		\(f_{T,N}=\bar f\) for the autoregressive objective. Let
		\(f^*\in\arg\min_{f\in\mathcal F}R_{\mathbb P}^{\ell}(f)\) and
		\(f_k^*\in\arg\min_{f\in\mathcal F}R_{k,\mathbb P}^{\ell}(f)\). For
		\(f\in\mathcal F\), define
		\(\Delta_k(f):=R_{k,\mathbb P}^{\ell}(f)-R_{\mathbb P}^{\ell}(f)\).
		Since \(f^*\) minimizes \(R_{\mathbb P}^{\ell}\) over \(\mathcal F\),
		\[
		\begin{aligned}
			R_{k,\mathbb P}^{\ell}(f_{T,N})
			-
			R_{k,\mathbb P}^{\ell}(f_k^*)
			&=
			\bigl[
			R_{\mathbb P}^{\ell}(f_{T,N})
			-
			R_{\mathbb P}^{\ell}(f^*)
			\bigr]
			+
			\bigl[
			R_{\mathbb P}^{\ell}(f^*)
			-
			R_{\mathbb P}^{\ell}(f_k^*)
			\bigr]
			+
			\Delta_k(f_{T,N})
			-
			\Delta_k(f_k^*)  \\
			&\le
			R_{\mathbb P}^{\ell}(f_{T,N})
			-
			R_{\mathbb P}^{\ell}(f^*)
			+
			2\sup_{f\in\mathcal F}|\Delta_k(f)|.
		\end{aligned}
		\]
		Taking expectations gives
		\[
		\mathbb E\!\left[
		R_{k,\mathbb P}^{\ell}(f_{T,N})
		-
		R_{k,\mathbb P}^{\ell}(f_k^*)
		\right]
		\le
		\mathbb E\!\left[
		R_{\mathbb P}^{\ell}(f_{T,N})
		-
		R_{\mathbb P}^{\ell}(f^*)
		\right]
		+
		2\sup_{f\in\mathcal F}|\Delta_k(f)|.
		\]
		
		It remains to bound the second term. For any \(f\in\mathcal F\), let
		\(\hat\rho_k:=k^{-1}\sum_{i=1}^k\delta_{\mathbf Z_i}\), where
		\(\mathbf Z_i=(\mathbf X_i,\mathbf Y_i)\) are i.i.d. from \(\rho\). By
		the definitions of \(R_{k,\mathbb P}^{\ell}\) and
		\(R_{\mathbb P}^{\ell}\), together with the Lipschitz assumptions on
		\(\ell\) and \(f\),
		\[
		\begin{aligned}
			|\Delta_k(f)|
			&\le
			\mathbb E_{\rho\sim\mathbb P}
			\mathbb E_{\mathbf Z_{1:k}\sim\rho^{\otimes k}}
			\mathbb E_{(\mathbf x,\mathbf y)\sim\rho}
			\Bigl|
			\ell\bigl(\mathbf y,f(\hat\rho_k,\mathbf x)\bigr)
			-
			\ell\bigl(\mathbf y,f(\rho,\mathbf x)\bigr)
			\Bigr|  \\
			&\le
			L_\ell L_f\,
			\mathbb E_{\rho\sim\mathbb P}
			\mathbb E_{\mathbf Z_{1:k}\sim\rho^{\otimes k}}
			W_1(\hat\rho_k,\rho)
			\lesssim
			k^{-1/(d_x+d_y)}.
		\end{aligned}
		\]
		The last inequality follows from the empirical \(W_1\) convergence rate on
		\(\mathcal Z=\mathcal X\times\mathcal Y\subseteq\mathbb R^{d_x+d_y}\)
		\citep{dudley1969speed}. Hence
		\[
		\sup_{f\in\mathcal F}
		\bigl|
		R_{k,\mathbb P}^{\ell}(f)
		-
		R_{\mathbb P}^{\ell}(f)
		\bigr|
		\lesssim
		k^{-1/(d_x+d_y)}.
		\]
		
		On the other hand, Corollary~\ref{cor:icl_rates} gives, for both the
		masked and autoregressive empirical minimizers,
		\[
		\mathbb E\!\left[
		R_{\mathbb P}^{\ell}(f_{T,N})
		-
		R_{\mathbb P}^{\ell}(f^*)
		\right]
		\lesssim
		T^{-1/s}
		+
		N^{-1/(d_x+d_y)}
		+
		N^{-1}.
		\]
		Combining the last two displays yields
		\[
		\mathbb E\!\left[
		R_{k,\mathbb P}^{\ell}(f_{T,N})
		-
		R_{k,\mathbb P}^{\ell}(f_k^*)
		\right]
		\lesssim
		T^{-1/s}
		+
		N^{-1/(d_x+d_y)}
		+
		k^{-1/(d_x+d_y)}
		+
		N^{-1}.
		\]
		
		Finally, if \(\mathbb P=\delta_\rho\), then
		\(\hat{\mathbb P}_T=\mathbb P\) almost surely, so the meta-level term
		\(\mathbb W_1(\mathbb P,\hat{\mathbb P}_T)\) in the preceding bound is
		zero. Hence the term \(T^{-1/s}\) vanishes.
	\end{proof}

    \subsection{Proof of Corollary~\ref{cor:budget_optimal_allocation}}
	\label{sec:proof_budget_optimal_allocation}
	\begin{proof}
		By Theorem~\ref{thm:kshot_excess_risk},
		\[
		\mathbb E\!\left[
		R_{k,\mathbb P}^{\ell}(f_{T,N})
		-
		R_{k,\mathbb P}^{\ell}(f_k^*)
		\right]
		\lesssim
		T^{-1/s}
		+
		N^{-1/(d_x+d_y)}
		+
		k^{-1/(d_x+d_y)}
		+
		N^{-1}.
		\]
		Since \(d_x+d_y\ge 1\), we have \(N^{-1}\le N^{-1/(d_x+d_y)}\)
		for \(N\ge 1\). Hence it suffices to minimize
		\(T^{-1/s}+N^{-1/(d_x+d_y)}\) subject to \(TN=B\). Writing
		\(N=B/T\), we obtain
		\[
		T^{-1/s}+N^{-1/(d_x+d_y)}
		=
		T^{-1/s}+B^{-1/(d_x+d_y)}T^{1/(d_x+d_y)} .
		\]
		Balancing the two terms yields
		\(T^{-1/s}\asymp B^{-1/(d_x+d_y)}T^{1/(d_x+d_y)}\), and therefore
		\(T\asymp B^{s/(s+d_x+d_y)}\). Consequently,
		\(N=B/T\asymp B^{(d_x+d_y)/(s+d_x+d_y)}\), and
		\(T^{-1/s}\asymp N^{-1/(d_x+d_y)}
		\asymp B^{-1/(s+d_x+d_y)}\). Substituting this into the bound gives
		\[
		\inf_{T,N:\,TN=B}
		\mathbb E\!\left[
		R_{k,\mathbb P}^{\ell}(f_{T,N})
		-
		R_{k,\mathbb P}^{\ell}(f_k^*)
		\right]
		\lesssim
		B^{-1/(s+d_x+d_y)}
		+
		k^{-1/(d_x+d_y)} .
		\]
		The integer constraint on \(T\) and \(N\) affects only the constants.
	\end{proof}
    
    \subsection{Proof of Theorem~\ref{thm:transferability}}
	\label{sec:proof_transferability}
	
	\begin{proof}
		For \(\rho\in\mathcal P(\mathcal Z)\), define
		\(F(\rho):=\int_{\mathcal Z}\ell(\mathbf y,f(\rho,\mathbf x))\,
		d\rho(\mathbf x,\mathbf y)\). Then
		\(R_{\mathbb P}^{\ell}(f)=\int F(\rho)\,d\mathbb P(\rho)\) and
		\(R_{\mathbb Q}^{\ell}(f)=\int F(\rho)\,d\mathbb Q(\rho)\).
		
		Fix \(\rho,\rho'\in\mathcal P(\mathcal Z)\) and
		\(\gamma\in\Gamma(\rho,\rho')\). Writing
		\(\mathbf z=(\mathbf x,\mathbf y)\) and
		\(\mathbf z'=(\mathbf x',\mathbf y')\), the coupling representation gives
		\[
		F(\rho)-F(\rho')
		=
		\int_{\mathcal Z\times\mathcal Z}
		\{\ell(\mathbf y,f(\rho,\mathbf x))
		-\ell(\mathbf y',f(\rho',\mathbf x'))\}
		\,d\gamma(\mathbf z,\mathbf z').
		\]
		By Assumptions~\ref{assum:loss_lipschitz} and
		\ref{assum:predictor_lipschitz},
		\[
		|F(\rho)-F(\rho')|
		\le
		L_\ell\int
		\{\|\mathbf y-\mathbf y'\|_2+L_f\|\mathbf x-\mathbf x'\|_2\}
		\,d\gamma
		+
		L_\ell L_f W_1(\rho,\rho').
		\]
		Since
		\(\|\mathbf y-\mathbf y'\|_2+L_f\|\mathbf x-\mathbf x'\|_2
		\le \sqrt2\max\{1,L_f\}\|\mathbf z-\mathbf z'\|_2\), we obtain
		\[
		|F(\rho)-F(\rho')|
		\le
		L_\ell\sqrt2\max\{1,L_f\}
		\int\|\mathbf z-\mathbf z'\|_2\,d\gamma
		+
		L_\ell L_f W_1(\rho,\rho').
		\]
		Taking the infimum over \(\gamma\in\Gamma(\rho,\rho')\) yields
		\[
		|F(\rho)-F(\rho')|
		\le
		C_{\ell,f}W_1(\rho,\rho'),
		\qquad
		C_{\ell,f}:=L_\ell\bigl(L_f+\sqrt2\max\{1,L_f\}\bigr).
		\]
		
		Now let \(\pi\in\Gamma(\mathbb P,\mathbb Q)\). Then
		\[
		\begin{aligned}
			|R_{\mathbb P}^{\ell}(f)-R_{\mathbb Q}^{\ell}(f)|
			&=
			\left|\int_{\mathcal P(\mathcal Z)\times\mathcal P(\mathcal Z)}
			\{F(\rho)-F(\rho')\}\,d\pi(\rho,\rho')\right| \\
			&\le
			C_{\ell,f}
			\int_{\mathcal P(\mathcal Z)\times\mathcal P(\mathcal Z)}
			W_1(\rho,\rho')\,d\pi(\rho,\rho').
		\end{aligned}
		\]
		Taking the infimum over \(\pi\in\Gamma(\mathbb P,\mathbb Q)\) and using
		Definition~\ref{def:lifted_W1} gives
		\[
		|R_{\mathbb P}^{\ell}(f)-R_{\mathbb Q}^{\ell}(f)|
		\le
		C_{\ell,f}\,\mathbb W_1(\mathbb P,\mathbb Q).
		\]\end{proof}

	\subsection{Proof of Theorem~\ref{thm:kshot_transferability_shift}}
	\label{sec:proof_kshot_transferability_shift}
	
	\begin{proof}
		Write \(C_{\ell,f}:=L_\ell\big(L_f+\sqrt2\max\{1,L_f\}\big)\).
		We first prove the \(k\)-shot analogue of
		Theorem~\ref{thm:transferability}. Fix any \(f\in\mathcal F\), and define
		\(F_k(\rho):=
		\mathbb E_{(\mathbf z,\mathbf z_1,\ldots,\mathbf z_k)\sim\rho^{\otimes(k+1)}}
		[\ell(\mathbf y,f(\hat\rho_k,\mathbf x))]\), where
		\(\hat\rho_k:=k^{-1}\sum_{i=1}^k\delta_{\mathbf z_i}\). Then
		\(R_{k,\mathbb P}^{\ell}(f)=\int F_k(\rho)\,d\mathbb P(\rho)\) and
		\(R_{k,\mathbb Q}^{\ell}(f)=\int F_k(\rho)\,d\mathbb Q(\rho)\).
		
		Fix \(\rho,\rho'\in\mathcal P(\mathcal Z)\) and
		\(\gamma\in\Gamma(\rho,\rho')\). Let
		\(((\mathbf z,\mathbf z'),(\mathbf z_1,\mathbf z_1'),\ldots,
		(\mathbf z_k,\mathbf z_k'))\sim\gamma^{\otimes(k+1)}\), and set
		\(\hat\rho_k':=k^{-1}\sum_{i=1}^k\delta_{\mathbf z_i'}\). By the Lipschitz
		conditions on \(\ell\) and \(f\),
		\[
		|F_k(\rho)-F_k(\rho')|
		\le
		L_\ell\,
		\mathbb E\!\left[
		\|\mathbf y-\mathbf y'\|_2
		+
		L_f\|\mathbf x-\mathbf x'\|_2
		+
		L_f W_1(\hat\rho_k,\hat\rho_k')
		\right].
		\]
		Moreover,
		\(W_1(\hat\rho_k,\hat\rho_k')\le k^{-1}\sum_{i=1}^k
		\|\mathbf z_i-\mathbf z_i'\|_2\), and hence
		\(\mathbb E W_1(\hat\rho_k,\hat\rho_k')
		\le \int\|\mathbf z-\mathbf z'\|_2\,d\gamma(\mathbf z,\mathbf z')\).
		Also,
		\(\|\mathbf y-\mathbf y'\|_2+L_f\|\mathbf x-\mathbf x'\|_2
		\le \sqrt2\max\{1,L_f\}\|\mathbf z-\mathbf z'\|_2\). Therefore,
		\[
		|F_k(\rho)-F_k(\rho')|
		\le
		C_{\ell,f}
		\int\|\mathbf z-\mathbf z'\|_2\,d\gamma(\mathbf z,\mathbf z').
		\]
		Taking the infimum over \(\gamma\in\Gamma(\rho,\rho')\) gives
		\(|F_k(\rho)-F_k(\rho')|\le C_{\ell,f}W_1(\rho,\rho')\). Integrating this
		bound over any \(\pi\in\Gamma(\mathbb P,\mathbb Q)\) and then taking the
		infimum over \(\pi\) yields
		\[
		\big|R_{k,\mathbb P}^{\ell}(f)-R_{k,\mathbb Q}^{\ell}(f)\big|
		\le
		C_{\ell,f}\,\mathbb W_1(\mathbb P,\mathbb Q).
		\]
		
		Now let
		\(f_{k,\mathbb P}^*\in\arg\min_{f\in\mathcal F}R_{k,\mathbb P}^{\ell}(f)\).
		By adding and subtracting the corresponding \(\mathbb P\)-risks, and using
		the optimality of \(f_{k,\mathbb P}^*\) under \(R_{k,\mathbb P}^{\ell}\),
		\[
		\begin{aligned}
			R_{k,\mathbb Q}^{\ell}(f_{T,N})
			-
			R_{k,\mathbb Q}^{\ell}(f_{k,\mathbb Q}^*)
			&\le
			R_{k,\mathbb P}^{\ell}(f_{T,N})
			-
			R_{k,\mathbb P}^{\ell}(f_{k,\mathbb P}^*)  \\
			&\quad+
			2C_{\ell,f}\,\mathbb W_1(\mathbb P,\mathbb Q).
		\end{aligned}
		\]
		Taking expectation and applying Theorem~\ref{thm:kshot_excess_risk} under
		the pretraining distribution \(\mathbb P\) gives
		\[
		\mathbb E\!\left[
		R_{k,\mathbb Q}^{\ell}(f_{T,N})
		-
		R_{k,\mathbb Q}^{\ell}(f_{k,\mathbb Q}^*)
		\right]
		\lesssim
		T^{-1/s}
		+
		N^{-1/(d_x+d_y)}
		+
		k^{-1/(d_x+d_y)}
		+
		N^{-1}
		+
		2C_{\ell,f}\,\mathbb W_1(\mathbb P,\mathbb Q).
		\]
		This proves the theorem.
	\end{proof}

	\section{Helper Wasserstein Concentration Inequalities}
  
    The following empirical Wasserstein bound is a restatement of
    \citep[Theorem~10]{chakraborty2026generalization}.
    \begin{lemma}\label{lem:wass_convergence}
    Let $\rho$ be a Borel probability measure on $\mathcal Z$, and let
    $\hat\rho^{\,N}$ denote the corresponding empirical measure. Assume that
    $\rho$ has finite $q$-th moment and let $0<p<q$. Then, for any
    $d>d_{p,q}^*(\rho)$, there exist constants $N_0\in\mathbb N$ and $c>0$,
    possibly depending on $d,\rho,p$ and $q$, such that, for all $N\ge N_0$,
    \[
    \mathbb E\,W_p^p(\rho,\hat\rho^{\,N})
    \le
    cN^{-p/d}.
    \]
    \end{lemma}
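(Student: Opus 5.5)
This lemma is stated as a restatement of \citet[Theorem~10]{chakraborty2026generalization}, so the plan is to check that its hypotheses match the present setting and then reproduce the dyadic-partition argument behind such bounds. The argument works on a general metric space $(\Omega,d)$, so that it applies both on $\mathcal Z$ and at the meta level on $(\mathcal P(\mathcal Z),W_1)$. The latter is the case used in Corollary~\ref{cor:icl_rates} with $p=1$.

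The first step is a truncation controlled by the moment condition. Fix $s$ with $d_{p,q}^*(\rho)<s<d$. By Definition~\ref{def:pq_wass_dim}, for all sufficiently small $\varepsilon$ there is a set $S_\varepsilon$ with $\rho(S_\varepsilon^c)\le \varepsilon^{spq/((q-p)(s-2p))}$ that is covered by at most $\varepsilon^{-s}$ balls of diameter $\varepsilon$. The mass outside $S_\varepsilon$ contributes to $W_p^p$ at most $\int_{S_\varepsilon^c} d(z,z_0)^p\,d\rho$. By H\"older's inequality this is at most $M_q^{p/q}\rho(S_\varepsilon^c)^{1-p/q}$, which equals $M_q^{p/q}\,\varepsilon^{sp/(s-2p)}$. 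The exponent in the definition is chosen exactly so that this tail term matches the covering-scale contribution.

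The second step builds nested dyadic partitions at scales $\varepsilon_m=2^{-m}\varepsilon_0$ inside $S_{\varepsilon_M}$. I would then apply the standard multiscale transport bound:
\[
W_p^p(\rho,\hat\rho^{\,N})\lesssim \varepsilon_M^p+\sum_{m=1}^M \varepsilon_{m-1}^p\sum_{Q\in\mathcal Q_m}|\rho(Q)-\hat\rho^{\,N}(Q)|+\text{tail}.
\]
Taking expectations and using $\mathbb E|\rho(Q)-\hat\rho^{\,N}(Q)|\le\sqrt{\rho(Q)/N}$ together with Cauchy--Schwarz, each inner sum is bounded by $\sqrt{|\mathcal Q_m|/N}\le \varepsilon_m^{-s/2}N^{-1/2}$. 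The resulting geometric series is dominated by its last term, because $s>2p$, and is of order $\varepsilon_M^{p-s/2}N^{-1/2}$. This is the same mechanism as in \citet{weed2019sharp}.

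The final step balances the scales. Choosing $\varepsilon_M\asymp N^{-1/s}$ makes the fine-scale term, the series and the tail each $O(N^{-p/s})$ up to constants. Since $s<d$, this gives $cN^{-p/d}$ for all $N\ge N_0$, where $N_0$ ensures that $\varepsilon_M$ is below the threshold at which the $\limsup$ in the definition takes effect.

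I expect the main obstacle to be the tail bookkeeping. The covering sets $S_\varepsilon$ need not be nested across scales, so the partitions and the excluded mass must be handled consistently, for example by taking a single $S=S_{\varepsilon_M}$ and a single uncovered-mass term. That term must then be verified to be of order at most $N^{-p/s}$ under the specific exponent $spq/((q-p)(s-2p))$.
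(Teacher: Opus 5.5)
The paper does not prove this lemma; it imports it as Theorem~10 of \citet{chakraborty2026generalization}, so your reconstruction is a different, self-contained route. Its overall structure (H\"older truncation, multiscale dyadic bound, balancing of scales) is the right one. Your remark that the argument must hold on a general metric space is also apt, since Corollary~\ref{cor:icl_rates} applies it to $\mathbb P$ on $(\mathcal P(\mathcal Z),W_1)$.

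\textbf{The gap.} The middle step does not go through. Put $a:=spq/((q-p)(s-2p))$. The hypothesis only says that at scale $\varepsilon$ all but mass $\varepsilon^{a}$ can be covered by $\varepsilon^{-s}$ balls, and this allowed discarded mass shrinks with $\varepsilon$.
\begin{itemize}
\item If you partition inside the single set $S_{\varepsilon_M}$, the hypothesis bounds its covering number at a coarser scale $\varepsilon_m$ by nothing better than $\mathcal N_{\varepsilon_m}(S_{\varepsilon_M})\le\mathcal N_{\varepsilon_M}(S_{\varepsilon_M})\le\varepsilon_M^{-s}$. So your claim $|\mathcal Q_m|\le\varepsilon_m^{-s}$ is unjustified. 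With the bound actually available, the series is of order $\varepsilon_0^{p}\varepsilon_M^{-s/2}N^{-1/2}\asymp 1$ when $\varepsilon_M\asymp N^{-1/s}$.
\item Intersecting the sets $S_{\varepsilon_m}$ over all $m$ fails the other way: the discarded mass $\sum_m\varepsilon_m^{a}$ then does not vanish as $N\to\infty$.
\end{itemize}
The real obstacle is therefore the coarse-scale cell count, not the tail. In fact your tail $M_q^{p/q}\varepsilon_M^{sp/(s-2p)}\asymp N^{-p/(s-2p)}$ is far below $N^{-p/s}$. This shows you are truncating too aggressively, and that the exponent in Definition~\ref{def:pq_wass_dim} is not there to match the tail at the finest scale.

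\textbf{The fix.} The missing idea is a crossover scale $\varepsilon_c:=\varepsilon_M^{(s-2p)/s}$, chosen so that $\varepsilon_c^{a}=\varepsilon_M^{pq/(q-p)}$.
\begin{itemize}
\item Let $S$ be the intersection of the sets $S_{\varepsilon_m}$ over the dyadic scales $\varepsilon_M\le\varepsilon_m\le\varepsilon_c$ only.
\item Its discarded mass is a geometric series, hence $\lesssim\varepsilon_c^{a}=\varepsilon_M^{pq/(q-p)}$. H\"older then gives a tail $\lesssim M_q^{p/q}\varepsilon_M^{p}$.
\item Its covering number is at most $\varepsilon_m^{-s}$ at the fine scales $\varepsilon_m\le\varepsilon_c$. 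At every coarser scale it is at most a constant times $\varepsilon_c^{-s}=\varepsilon_M^{-(s-2p)}$.
\item The fine scales contribute $\lesssim\varepsilon_M^{p-s/2}N^{-1/2}$, and the coarse ones $\lesssim\varepsilon_0^{p}\varepsilon_M^{-(s-2p)/2}N^{-1/2}$.
\item With $\varepsilon_M\asymp N^{-1/s}$, these two terms, the residual $\varepsilon_M^p$ and the tail are all $O(N^{-p/s})\le O(N^{-p/d})$.
\end{itemize}
This is exactly where the two factors in the exponent of the definition come from: $q/(q-p)$ from H\"older and $s/(s-2p)$ from the crossover. Nested partitions of this single $S$ with these cell counts follow from the usual bottom-up grouping of covers. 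Finally, $N_0$ must be chosen so that $\varepsilon_c$, not merely $\varepsilon_M$, lies below the threshold where the $\limsup$ takes effect.
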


	\begin{lemma}[Concentration of Wasserstein metric on a manifold]
		\label{lem:wasserstein_concentration}
		Let \(\mathcal M\subseteq\mathcal Z\) be a compact \(C^1\) Riemannian
		submanifold of intrinsic dimension \(d_{\mathrm{int}}>2\), equipped with
		the metric inherited from the Euclidean norm on \(\mathcal Z\). Let \(\rho\)
		be a Borel probability measure on \(\mathcal M\), and let
		\(\widehat\rho^{\,N}\) denote the empirical measure based on \(N\) i.i.d.\
		samples from \(\rho\). Then, for every \(\epsilon>0\) and \(N\in\mathbb N\),
		\begin{equation}
			\label{eq:w1_mcdiarmid}
			\mathbb P\!\left(
			\left|
			W_1(\widehat\rho^{\,N},\rho)
			-
			\mathbb E\!\left[W_1(\widehat\rho^{\,N},\rho)\right]
			\right|
			\ge \epsilon
			\right)
			\le
			2\exp\!\left(
			-\frac{2N\epsilon^2}{\operatorname{diam}(\mathcal M)^2}
			\right).
		\end{equation}
		Moreover, there exists a constant \(C_{\mathcal M}>0\) such that
		\begin{equation}
			\label{eq:wasserstein_expectation}
			\mathbb E\!\left[
			W_1(\widehat\rho^{\,N},\rho)
			\right]
			\le
			C_{\mathcal M}\operatorname{diam}(\mathcal M)N^{-1/d_{\mathrm{int}}}.
		\end{equation}
	\end{lemma}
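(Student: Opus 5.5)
The plan splits the lemma into its two displays. The concentration inequality \eqref{eq:w1_mcdiarmid} follows from McDiarmid's bounded-differences inequality. The expectation bound \eqref{eq:wasserstein_expectation} follows from a covering-number argument on the compact manifold, run through the Kantorovich--Rubinstein dual and a dyadic (Dudley-type) partition bound.

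\emph{Concentration.} View \(W_1(\widehat\rho^{\,N},\rho)\) as a function \(\Phi(\mathbf z_1,\ldots,\mathbf z_N)\) of the i.i.d.\ samples, which lie in \(\mathcal M\). Replacing one sample \(\mathbf z_i\) by \(\mathbf z_i'\) changes \(\widehat\rho^{\,N}\) by moving mass \(1/N\) from \(\mathbf z_i\) to \(\mathbf z_i'\). The coupling that fixes every other atom and transports that single atom costs at most \(\|\mathbf z_i-\mathbf z_i'\|_2/N\le\operatorname{diam}(\mathcal M)/N\). By the triangle inequality for \(W_1\), \(\Phi\) therefore has bounded differences \(c_i=\operatorname{diam}(\mathcal M)/N\). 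McDiarmid's inequality then gives a two-sided tail \(2\exp(-2\epsilon^2/\sum_i c_i^2)=2\exp(-2N\epsilon^2/\operatorname{diam}(\mathcal M)^2)\), which is exactly \eqref{eq:w1_mcdiarmid}.

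\emph{Expectation.} First use compactness and the \(C^1\) structure. Cover \(\mathcal M\) by finitely many charts that are bi-Lipschitz with respect to the ambient Euclidean metric. This yields a covering-number bound \(\mathcal N_\varepsilon(\mathcal M)\le C(\operatorname{diam}(\mathcal M)/\varepsilon)^{d_{\mathrm{int}}}\) for all \(\varepsilon\le\operatorname{diam}(\mathcal M)\), with \(C\) depending only on \(\mathcal M\).

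Next, build a nested dyadic partition of \(\mathcal M\) at scales \(\delta_m=\operatorname{diam}(\mathcal M)2^{-m}\), \(m=0,\ldots,M\), with \(C2^{md_{\mathrm{int}}}\) cells at level \(m\). The standard multiscale bound (Dereich--Scheutzow--Schottstedt, Weed--Bach Proposition~5) gives
\[
W_1(\widehat\rho^{\,N},\rho)\le \delta_M+\sum_{m=1}^{M}\delta_{m-1}\sum_{Q\in\mathcal Q_m}|\widehat\rho^{\,N}(Q)-\rho(Q)|.
\]
Take expectations and apply Cauchy--Schwarz across the cells: \(\mathbb E\sum_Q|\widehat\rho^{\,N}(Q)-\rho(Q)|\le\sqrt{|\mathcal Q_m|/N}\). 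Each level then contributes at most \(\operatorname{diam}(\mathcal M)2^{-m}\sqrt{C2^{md_{\mathrm{int}}}/N}\). Because \(d_{\mathrm{int}}>2\), this geometric series is dominated by its last term, giving a total of order \(\operatorname{diam}(\mathcal M)2^{M(d_{\mathrm{int}}/2-1)}N^{-1/2}\). Choosing \(2^{M}\asymp N^{1/d_{\mathrm{int}}}\) balances this against \(\delta_M\) and yields \(C_{\mathcal M}\operatorname{diam}(\mathcal M)N^{-1/d_{\mathrm{int}}}\).

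\emph{Main obstacle.} The delicate step is the uniform covering bound for a merely \(C^1\) compact submanifold under the ambient metric. This requires choosing finitely many charts whose Lipschitz constants are uniformly controlled, together with a Lebesgue number that guarantees small balls lie inside single charts. Those constants are exactly what get absorbed into \(C_{\mathcal M}\). The alternative, citing Weed--Bach directly with the Minkowski dimension of \(\mathcal M\), also requires verifying that this dimension equals \(d_{\mathrm{int}}\), which reduces to the same chart argument.
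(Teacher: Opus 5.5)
Your proposal is correct and follows essentially the paper's route: concentration comes from McDiarmid's inequality with bounded differences $\operatorname{diam}(\mathcal M)/N$, which is exactly what the paper's cited Proposition~20 of Weed--Bach provides, and the expectation bound comes from a polynomial covering estimate $\mathcal N_\varepsilon(\mathcal M)\lesssim(\operatorname{diam}(\mathcal M)/\varepsilon)^{d_{\mathrm{int}}}$ obtained from a finite bi-Lipschitz atlas. The only difference is that you prove the covering-to-rate step yourself with the dyadic multiscale bound, using $d_{\mathrm{int}}>2$ to sum the geometric series; the paper instead cites Boissard--Le Gouic (Lemma~\ref{lem:mean_Wp}) and first passes through the Assouad dimension, a detour your direct covering argument shows is unnecessary.
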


	\begin{proof}
		Let \(D_{\mathcal M}:=\operatorname{diam}(\mathcal M)\).

		\paragraph{Step 1: \(\dim_A(\mathcal M)=d_{\mathrm{int}}\).}
		For each \(z\in\mathcal M\), choose a chart \((U_z,\phi_z)\) such that
		\(\phi_z:U_z\to \phi_z(U_z)\subset\mathbb R^{d_{\mathrm{int}}}\) is a
		\(C^1\) diffeomorphism. Choose an open set \(V_z\) with \(z\in V_z\) and
		\(\overline V_z\subset U_z\). Since \(\overline V_z\) is compact and
		\(\phi_z,\phi_z^{-1}\) are \(C^1\), both maps are Lipschitz on the relevant
		compact sets. Hence \(\phi_z\) is bi-Lipschitz on \(\overline V_z\). By
		Lemma~\ref{lem:assouad_properties}(v),
		\[
		\dim_A(\overline V_z)
		=
		\dim_A\!\big(\phi_z(\overline V_z)\big).
		\]
		Since \(\phi_z(V_z)\) is open in \(\mathbb R^{d_{\mathrm{int}}}\), there is
		\(r_z>0\) such that \(B(\phi_z(z),r_z)\subset \phi_z(V_z)\). By
		Lemma~\ref{lem:assouad_properties}(i) and (iii),
		\[
		d_{\mathrm{int}}
		=
		\dim_A\!\big(B(\phi_z(z),r_z)\big)
		\le
		\dim_A\!\big(\phi_z(\overline V_z)\big)
		\le
		\dim_A\!\big(\phi_z(U_z)\big)
		=
		d_{\mathrm{int}}.
		\]
		Therefore \(\dim_A(\overline V_z)=d_{\mathrm{int}}\). By compactness,
		finitely many sets \(V_{z_1},\ldots,V_{z_K}\) cover \(\mathcal M\). Since
		\(\mathcal M=\bigcup_{k=1}^K\overline V_{z_k}\), Lemma~\ref{lem:assouad_properties}(ii)
		gives \(\dim_A(\mathcal M)=d_{\mathrm{int}}\).
		
		\paragraph{Step 2: Expected Wasserstein rate.}
		The finite bi-Lipschitz atlas in Step~1 implies that there exists a constant
		\(K_{\mathcal M}\ge 1\), depending only on \(\mathcal M\), such that
		\(N_{\mathcal M}^{\mathrm{cov}}(r)\le
		K_{\mathcal M}(D_{\mathcal M}/r)^{d_{\mathrm{int}}}\) for all
		\(0<r\le D_{\mathcal M}\).
		Here the compactness of \(\mathcal M\) is essential: the asymptotic control on
		covering numbers provided by the Assouad dimension is extended to all scales
		by taking the maximum of the finite per-chart bi-Lipschitz constants, yielding
		a uniform bound on the whole manifold.
		Since \(d_{\mathrm{int}}>2\), applying
		Lemma~\ref{lem:mean_Wp} with \(E=\mathcal M\), \(p=1\),
		\(\alpha=d_{\mathrm{int}}\), \(D_E=D_{\mathcal M}\), and
		\(K_E=K_{\mathcal M}\) yields
		\[
		\mathbb E\!\left[
		W_1(\widehat\rho^{\,N},\rho)
		\right]
		\le
		c\left(\frac{2}{d_{\mathrm{int}}-2}\right)^{2/d_{\mathrm{int}}}
		D_{\mathcal M}K_{\mathcal M}^{1/d_{\mathrm{int}}}
		N^{-1/d_{\mathrm{int}}}.
		\]
		Absorbing
		\(c(2/(d_{\mathrm{int}}-2))^{2/d_{\mathrm{int}}}
		K_{\mathcal M}^{1/d_{\mathrm{int}}}\) into \(C_{\mathcal M}\) proves
		\eqref{eq:wasserstein_expectation}.

		\paragraph{Step 3: Concentration.}
		Since \(D_{\mathcal M}<\infty\) and \(\rho\) is a Borel probability measure
		on the compact, hence Polish, space \(\mathcal M\), Proposition~20 of
		\citet{weed2019sharp} applies. Consequently, for every \(\epsilon>0\),
		\[
		\mathbb P\!\left(
		\left|
		W_1(\widehat\rho^{\,N},\rho)
		-
		\mathbb E\!\left[W_1(\widehat\rho^{\,N},\rho)\right]
		\right|
		\ge \epsilon
		\right)
		\le
		2\exp\!\left(
		-\frac{2N\epsilon^2}{D_{\mathcal M}^2}
		\right),
		\]
		which proves \eqref{eq:w1_mcdiarmid}.
	\end{proof}

	\begin{lemma}[Properties of the Assouad dimension {\citep[Lem.~9.6]{robinson2010dimensions}}]
		\label{lem:assouad_properties}
		Let \((E,d)\) be a metric space and let \(A,B\subseteq E\). Denote by
		\(\dim_A(\cdot)\) the Assouad dimension. Then:
		\begin{enumerate}
			\item[(i)] If \(A\subseteq B\), then \(\dim_A(A)\le \dim_A(B)\).
			\item[(ii)] \(\dim_A(A\cup B)=\max\{\dim_A(A),\dim_A(B)\}\).
			\item[(iii)] If \(U\) is a nonempty open subset of \(\mathbb R^q\), then
			\(\dim_A(U)=q\).
			\item[(iv)] If \(E\) is compact, then \(\dim_B(E)\le\dim_A(E)\), where
			\(\dim_B(\cdot)\) denotes the upper box-counting dimension.
			\item[(v)] The Assouad dimension is invariant under bi-Lipschitz mappings.
		\end{enumerate}
	\end{lemma}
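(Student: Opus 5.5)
The statement to prove is Lemma~\ref{lem:assouad_properties}, a list of standard properties of the Assouad dimension cited from \citet[Lem.~9.6]{robinson2010dimensions}. I would argue directly from the definition: \(\dim_A(A)\) is the infimum of \(s\ge0\) for which there is \(C>0\) such that
\[
N_r\bigl(B(x,R)\cap A\bigr)\le C(R/r)^s\quad\text{for all }x\in A,\ 0<r<R,
\]
where \(N_r\) is the minimal number of \(r\)-balls centred in the set needed to cover it. Each item then reduces to a covering-number comparison, and I would prove them in the order (i), (ii), (v), (iii), (iv).

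For (i), take \(x\in A\). Cover \(B(x,R)\cap B\) with \(r/2\)-balls centred in \(B\). Each such ball that meets \(A\) lies inside an \(r\)-ball centred at a point of \(A\). Hence \(N_r(B(x,R)\cap A)\le C2^s(R/r)^s\), so any admissible exponent for \(B\) is admissible for \(A\).

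For (ii), the lower bound \(\ge\) follows from (i). For the upper bound, fix \(x\in A\cup B\) and treat two cases.
\begin{itemize}
\item If \(B(x,R)\cap B=\emptyset\), or \(x\) lies in the set being covered, the estimate is immediate.
\item Otherwise, pick a point \(y\in B(x,R)\cap B\). Then \(B(x,R)\cap B\subseteq B(y,2R)\cap B\).
\end{itemize}
Summing the two cover bounds gives \(2C2^{s}(R/r)^{s}\) with \(s=\max\) of the two admissible exponents.

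For (v), let \(\phi\) be bi-Lipschitz with constants \(L^{-1}\le\cdot\le L\). Then \(\phi\) maps \(B(x,R)\cap A\) into \(B(\phi x,LR)\cap\phi(A)\). Also, the preimage of an \(r\)-ball centred in \(\phi(A)\) lies in an \(Lr\)-ball centred in \(A\). This yields \(N_{Lr}(\cdot)\le C(L^2R/r)^s\), which is the same exponent up to a constant. Symmetry in \(\phi^{-1}\) gives equality.

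For (iii), the upper bound is a volume argument. A maximal \(r\)-separated subset of \(B(x,R)\subset\mathbb R^q\) has at most \((1+2R/r)^q\le 3^q(R/r)^q\) points, and it forms an \(r\)-net; this gives \(\dim_A(\mathbb R^q)\le q\), hence \(\dim_A(U)\le q\) by (i). For the lower bound, I take a small ball \(B(x_0,R_0)\subseteq U\). Any cover of it by \(r\)-balls needs at least \((R_0/r)^q\) balls by comparing Lebesgue measures. If an exponent \(s<q\) were admissible, this would force \((R_0/r)^q\le C(R_0/r)^s\) for all small \(r\), which fails as \(r\to0\).

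For (iv), since \(E\) is compact it has finite diameter, so it is contained in a single ball \(B(x,R_0)\). The Assouad bound then gives \(N_r(E)\le C(R_0/r)^s\) for every admissible \(s\). Taking \(\limsup_{r\to0}\log N_r(E)/\log(1/r)\le s\) and then the infimum over admissible \(s\) yields \(\dim_B(E)\le\dim_A(E)\).

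The main obstacle is bookkeeping rather than ideas, and it sits mostly in (v) and (iii): the definition requires cover centres to lie in the set, so each comparison has to pass through the halving/doubling step used in (i). I would isolate that step as a preliminary observation, namely that using centres in the ambient space changes \(N_r\) only by replacing \(r\) with \(r/2\), which leaves all exponents unchanged.
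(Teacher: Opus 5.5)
Your proof is correct. Each item reduces to the right covering comparison: halving the radius to move centres into the subset, re-centring at a point of the set at the cost of replacing $R$ by $2R$, volume counting in $\mathbb R^q$, and pulling covers back through $\phi$.

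The paper itself gives no proof of this lemma. It is imported from \citet[Lem.~9.6]{robinson2010dimensions} and used as a black box in Step~1 of the proof of Lemma~\ref{lem:wasserstein_concentration}. So the real difference is a citation versus your self-contained argument.

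What your route adds is quantitative information that the lemma, being a statement about infima, does not record. You actually show the following.
\begin{itemize}
\item Every subset of $\mathbb R^q$ satisfies the homogeneity bound with exponent exactly $q$, with a constant depending only on $q$.
\item A bi-Lipschitz image keeps exponent $s$, with the constant inflated by $L^{2s}$.
\item A finite union keeps exponent $s$, with only a multiplicative loss in the constant ($2^{s+1}$ for two sets).
\end{itemize}

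This matters downstream. Since $\dim_A$ is an infimum, the equality $\dim_A(\mathcal M)=d_{\mathrm{int}}$ by itself only gives covering bounds with exponents $s>d_{\mathrm{int}}$. The bound $N^{\mathrm{cov}}_{\mathcal M}(r)\le K_{\mathcal M}(D_{\mathcal M}/r)^{d_{\mathrm{int}}}$ asserted in Step~2 of that proof therefore follows from your quantitative versions of (ii), (iii) and (v) applied to the finite atlas. It does not follow from the dimension equalities as cited.

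A minor point: your argument for (iv) uses only boundedness of $E$, not compactness.
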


	\begin{lemma}[Mean Wasserstein-\(p\) rate under polynomial covering]
		\label{lem:mean_Wp}
		Let \((E,d)\) be a Polish metric space with
		\(D_E:=\operatorname{diam}(E)<\infty\), and let \(\rho\) be a Borel
		probability measure on \(E\). Fix \(p\ge 1\). Assume that there exist
		constants \(K_E>0\) and \(\alpha>2p\) such that
		\(N_E^{\mathrm{cov}}(r)\le K_E(D_E/r)^\alpha\) for all
		\(0<r\le D_E\). Let \(\widehat\rho^{\,N}\) denote the empirical measure
		based on \(N\) i.i.d.\ samples drawn from \(\rho\). Then there exists a
		universal constant \(c\le 64/3\) such that
		\[
		\mathbb E\!\left[
		W_p(\widehat\rho^{\,N},\rho)
		\right]
		\le
		c\left(\frac{2p}{\alpha-2p}\right)^{2p/\alpha}
		D_EK_E^{1/\alpha}N^{-1/\alpha}.
		\]
		This follows directly from Corollary~1.2 of
		\citet{boissard2014mean}.
	\end{lemma}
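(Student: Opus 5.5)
The statement is Lemma~\ref{lem:mean_Wp}. I would derive it from the dyadic-partition bound of \citet{boissard2014mean}, specialised to the polynomial covering hypothesis. By homogeneity of \(W_p\) in the metric, I first rescale \(d\) by \(D_E\) and reduce to \(D_E=1\); the factor \(D_E\) is restored at the end. The basic device is a nested sequence of partitions of \(E\) at scales \(2^{-k}\). Each cell of the level-\(k\) partition has diameter at most \(2^{-k}\), and the number of cells at that level is at most \(N_E^{\mathrm{cov}}(2^{-k-1})\le K_E2^{(k+1)\alpha}\).

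Transporting mass down this tree gives the standard multiscale bound
\[
W_p^p(\widehat\rho^{\,N},\rho)\le 2^{-Jp}+\sum_{k=1}^{J}2^{-(k-1)p}\sum_{F\in\mathcal Q_k}|\widehat\rho^{\,N}(F)-\rho(F)|
\]
for any cutoff \(J\). To take expectations I would use \(\mathbb E|\widehat\rho^{\,N}(F)-\rho(F)|\le\sqrt{\rho(F)/N}\) and then Cauchy--Schwarz over cells, which gives
\[
\mathbb E\sum_{F\in\mathcal Q_k}|\widehat\rho^{\,N}(F)-\rho(F)|\le\sqrt{|\mathcal Q_k|/N}\lesssim K_E^{1/2}2^{k\alpha/2}N^{-1/2}.
\]

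Summing over \(k\) produces a geometric series \(\sum_{k\le J}2^{k(\alpha/2-p)}\). It grows because \(\alpha>2p\), and its value is of order \(2^{J(\alpha/2-p)}/(1-2^{-(\alpha/2-p)})\). This is the step that produces the constant \((2p/(\alpha-2p))\): the denominator behaves like \((\alpha-2p)/(2p)\) up to constants. I would then choose \(J\) so that \(2^{-Jp}\asymp K_E^{1/2}2^{J(\alpha/2-p)}N^{-1/2}\), that is, \(2^{J}\asymp (N/K_E)^{1/\alpha}\). This gives \(\mathbb E W_p^p\lesssim (2p/(\alpha-2p))^{2p/\alpha}K_E^{p/\alpha}N^{-p/\alpha}\). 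I expect the \(2p/\alpha\) power on the bracket to come from optimising \(J\) while keeping track of the geometric-sum factor.

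Jensen's inequality, \(\mathbb E W_p\le(\mathbb E W_p^p)^{1/p}\), then gives the stated rate \(D_EK_E^{1/\alpha}N^{-1/\alpha}\). The main obstacle is the bookkeeping needed to obtain the explicit universal constant \(c\le 64/3\) and the exact exponent on \((2p/(\alpha-2p))\), since these depend on how the bracket is raised to \(1/p\). Rather than redo that arithmetic, I would match the hypotheses above to the statement of Corollary~1.2 of \citet{boissard2014mean}, namely finite diameter and the covering bound \(N(E,r)\le K_E(D_E/r)^\alpha\) with \(\alpha>2p\), and read the constant directly from it. Polishness of \(E\) is needed only so that the empirical measures and couplings are well defined.
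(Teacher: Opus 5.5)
Your proposal ends up doing exactly what the paper does (matching finite diameter and the polynomial covering bound with $\alpha>2p$ to Corollary~1.2 of \citet{boissard2014mean} and reading off the constant), so it is correct and essentially the paper's argument. One caution about your heuristic sketch: the factor $\bigl(2p/(\alpha-2p)\bigr)^{2p/\alpha}$ does not come from a dyadic bound on $W_p^p$ followed by Jensen. That route gives the geometric factor $G=(1-2^{-(\alpha/2-p)})^{-1}\approx 2/((\alpha-2p)\ln 2)$ as $\alpha\downarrow 2p$, and after the $p$-th root you are left with only $G^{2/\alpha}$. The factor actually comes from Boissard--Le Gouic's entropy-integral bound $\mathbb E W_p(\widehat\rho^{\,N},\rho)\le c\bigl(\eta+N^{-1/(2p)}\int_\eta^{D_E/4}N_E^{\mathrm{cov}}(\delta)^{1/(2p)}\,d\delta\bigr)$: integrating $\delta^{-\alpha/(2p)}$ produces $2p/(\alpha-2p)$, and balancing the two terms in $\eta$ raises it to the power $2p/\alpha$.
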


	\begin{lemma}[Wasserstein bound under \((m,\Delta)\)-clusterability]
		\label{thm:clusterable_wasserstein_D}
		Let \((\mathcal Z,\mathrm d_{\mathcal Z})\) be a bounded Polish metric space with
		\(\operatorname{diam}(\mathcal Z)\le D_{\mathcal Z}\), where
		\(D_{\mathcal Z}>0\), and let \(p\ge 1\).
		Let \(\rho\) be a Borel probability measure on \(\mathcal Z\), and let
		\(\widehat\rho^{\,n}:=n^{-1}\sum_{i=1}^n\delta_{Z_i}\), where
		\(Z_1,\ldots,Z_n\) are i.i.d.\ samples from \(\rho\). Assume that \(\rho\) is
		\((m,\Delta)\)-clusterable, with \(\Delta>0\), in the sense that there exist
		points \(z_1,\ldots,z_m\in\mathcal Z\) such that
		\(\operatorname{supp}(\rho)\subseteq \bigcup_{j=1}^m B_{\mathcal Z}(z_j,\Delta)\).
		If \(n\le m(D_{\mathcal Z}/(2\Delta))^{2p}\), then
		\[
		\mathbb E\!\left[
		W_p^p(\rho,\widehat\rho^{\,n})
		\right]
		\le
		\left(2^{p-1}+2^{p-2}\right)D_{\mathcal Z}^p\sqrt{\frac{m}{n}}.
		\]
	\end{lemma}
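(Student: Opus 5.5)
The plan is to construct an explicit coupling between \(\rho\) and \(\widehat\rho^{\,n}\) adapted to the cluster structure. Its cost splits into a within-cluster part of order \((2\Delta)^p\) and a between-cluster part governed by the multinomial fluctuations of the cluster masses. The hypothesis \(n\le m(D_{\mathcal Z}/(2\Delta))^{2p}\) will then absorb the first part into the second.

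First, I turn the covering balls into a measurable partition of the support. Set \(A_1:=B_{\mathcal Z}(z_1,\Delta)\cap\operatorname{supp}(\rho)\) and \(A_j:=\bigl(B_{\mathcal Z}(z_j,\Delta)\cap\operatorname{supp}(\rho)\bigr)\setminus\bigcup_{i<j}A_i\). Each \(A_j\) has diameter at most \(2\Delta\), and \(\rho\) and \(\widehat\rho^{\,n}\) are almost surely concentrated on \(\bigcup_jA_j\). Write \(p_j:=\rho(A_j)\) and \(\widehat p_j:=\widehat\rho^{\,n}(A_j)\), so that \(n\widehat p_j\sim\mathrm{Bin}(n,p_j)\).

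Next, I build the coupling in two stages.
\begin{itemize}
\item \emph{Within each cell.} Transport mass \(\min\{p_j,\widehat p_j\}\) from \(\rho|_{A_j}\) to \(\widehat\rho^{\,n}|_{A_j}\), using the product coupling of the normalized restrictions scaled by this mass. Since both endpoints lie in \(A_j\), each unit of mass costs at most \((2\Delta)^p\). The total within-cell cost is therefore at most \((2\Delta)^p\sum_j\min\{p_j,\widehat p_j\}\le(2\Delta)^p\).
\item \emph{Between cells.} The leftover mass has total \(\tfrac12\sum_j|p_j-\widehat p_j|\) on each side. I couple it arbitrarily, for instance by the product of the normalized leftovers, at cost at most \(D_{\mathcal Z}^p\) per unit of mass.
\end{itemize}
The marginals are correct by construction, so
\[
W_p^p(\rho,\widehat\rho^{\,n})\le(2\Delta)^p+\tfrac12 D_{\mathcal Z}^p\sum_{j=1}^m|p_j-\widehat p_j|.
\]

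Then I take expectations. Jensen's inequality gives \(\mathbb E|p_j-\widehat p_j|\le\sqrt{p_j(1-p_j)/n}\le\sqrt{p_j/n}\). Cauchy--Schwarz together with \(\sum_jp_j=1\) then gives \(\sum_j\sqrt{p_j}\le\sqrt m\). Hence
\[
\mathbb E\,W_p^p(\rho,\widehat\rho^{\,n})\le(2\Delta)^p+\tfrac12D_{\mathcal Z}^p\sqrt{m/n}.
\]

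Finally, the hypothesis on \(n\) is equivalent to \((2\Delta)^p\le D_{\mathcal Z}^p\sqrt{m/n}\). The bound therefore becomes \(\tfrac32D_{\mathcal Z}^p\sqrt{m/n}\). Since \(p\ge1\) implies \(2^{p-1}\ge1\) and \(2^{p-2}\ge\tfrac12\), this is at most \((2^{p-1}+2^{p-2})D_{\mathcal Z}^p\sqrt{m/n}\).

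The only delicate step is checking that the two-stage transport plan is a genuine coupling with the claimed cost. This requires verifying that the leftover masses on the two sides are equal and nonnegative on each cell, and disposing of the degenerate cells with \(p_j=0\) or \(\widehat p_j=0\). Everything else is routine.
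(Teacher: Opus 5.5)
Your proof is correct. It follows the same overall plan as the paper, but with one structural difference.

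\emph{The paper's route.} It projects onto the cluster centers. With $\nu=\pi_\#\rho$ and $\widehat\nu_n$ the empirical measure of the projected samples, it bounds $W_p(\rho,\nu)$ and $W_p(\widehat\nu_n,\widehat\rho^{\,n})$ by $\Delta$. It bounds $W_p^p(\nu,\widehat\nu_n)$ by $D_{\mathcal Z}^p$ times the total variation between two measures on the centers. It then combines these via the triangle inequality for $W_p$ and $(a+b)^p\le 2^{p-1}(a^p+b^p)$. That convexity step is exactly where the factors $2^{p-1}$ and $2^{p-2}$ in the statement come from.

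\emph{Your route.} You build a single coupling of $\rho$ and $\widehat\rho^{\,n}$, so the within-cell and between-cell costs add directly at the level of $W_p^p$. Before using the sample-size condition you get $(2\Delta)^p+\tfrac12D_{\mathcal Z}^p\sqrt{m/n}$, rather than $2^{p-1}(2\Delta)^p+2^{p-2}D_{\mathcal Z}^p\sqrt{m/n}$. This gives a $p$-independent constant $\tfrac32$, which equals the paper's constant at $p=1$ and is strictly smaller for $p>1$.

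\emph{Trade-off.} The paper's route is more modular: the mass-mismatch estimate lives on measures supported on the $m$ centers, where keeping the common mass fixed is immediate. Yours instead needs the leftover bookkeeping you flag, which is routine. With $c_j:=\min\{p_j,\widehat p_j\}$, the residual measures $(1-c_j/p_j)\rho|_{A_j}$ and $(1-c_j/\widehat p_j)\widehat\rho^{\,n}|_{A_j}$ are nonnegative; interpret each as zero when $p_j=0$, respectively $\widehat p_j=0$. Their masses are $(p_j-\widehat p_j)_+$ and $(\widehat p_j-p_j)_+$, and the totals over $j$ agree because $\sum_jp_j=\sum_j\widehat p_j=1$ almost surely. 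Your sequential partition also absorbs repeated centers automatically as empty cells, which the paper handles by merging.
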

	
	\begin{proof}
		The proof follows the clusterability idea of
		\citet[Proposition~13]{weed2019sharp} through a direct projection onto cluster
		centers, while keeping the diameter dependence explicit. Repeated centers may be
		merged without increasing \(m\), and zero-mass cells can be kept since they do
		not contribute to the estimates.
		
		After merging repeated centers if necessary, define a measurable partition of
		\(\operatorname{supp}(\rho)\) by
		\(A_1:=\operatorname{supp}(\rho)\cap B_{\mathcal Z}(z_1,\Delta)\) and, for \(2\le j\le m\),
		\[
		A_j:=
		\operatorname{supp}(\rho)\cap B_{\mathcal Z}(z_j,\Delta)
		\setminus \bigcup_{\ell<j}A_\ell .
		\]
		Then \(A_j\subseteq B_{\mathcal Z}(z_j,\Delta)\) and
		\(\operatorname{supp}(\rho)=\bigcup_{j=1}^m A_j\). Define \(\pi(z)=z_j\) for
		\(z\in A_j\), and extend \(\pi\) arbitrarily to
		\(\mathcal Z\setminus\operatorname{supp}(\rho)\). Since
		\(\rho(\operatorname{supp}(\rho))=1\), this extension does not affect
		\(\pi_\#\rho\). Set \(\nu:=\pi_\#\rho\), \(Y_i:=\pi(Z_i)\), and
		\(\widehat\nu_n:=n^{-1}\sum_{i=1}^n\delta_{Y_i}\). Then
		\(Y_1,\ldots,Y_n\) are i.i.d.\ with law \(\nu\).
		
		Since \(\mathrm d_{\mathcal Z}(z,\pi(z))\le\Delta\) for
		\(z\in\operatorname{supp}(\rho)\), the map
		\(z\mapsto\pi(z)\) induces a coupling between \(\rho\) and \(\nu\), and hence
		\(W_p^p(\rho,\nu)\le\int \mathrm d_{\mathcal Z}^p(z,\pi(z))\,d\rho(z)\le\Delta^p\).
		Similarly, the empirical coupling \(n^{-1}\sum_{i=1}^n\delta_{(Z_i,Y_i)}\) gives
		\(W_p^p(\widehat\rho^{\,n},\widehat\nu_n)\le
		n^{-1}\sum_{i=1}^n \mathrm d_{\mathcal Z}^p(Z_i,Y_i)\le\Delta^p\).
		
		Write \(\nu=\sum_{j=1}^m\alpha_j\delta_{z_j}\) and
		\(\widehat\nu_n=\sum_{j=1}^m\widehat\alpha_j\delta_{z_j}\), where
		\(\alpha_j=\rho(A_j)\) and
		\(\widehat\alpha_j=n^{-1}\sum_{i=1}^n\mathbf 1_{\{Y_i=z_j\}}\). Since the
		centers have diameter at most \(D_{\mathcal Z}\), keeping the common mass at each center
		fixed and transporting only the unmatched mass gives
		\[
		W_p^p(\nu,\widehat\nu_n)
		\le
		D_{\mathcal Z}^p\|\nu-\widehat\nu_n\|_{\mathrm{TV}}
		=
		\frac{D_{\mathcal Z}^p}{2}\sum_{j=1}^m|\alpha_j-\widehat\alpha_j|.
		\]
		For each \(j\), \(n\widehat\alpha_j\sim\mathrm{Bin}(n,\alpha_j)\) marginally,
		and therefore
		\[
		\mathbb E|\widehat\alpha_j-\alpha_j|
		\le
		\sqrt{\operatorname{Var}(\widehat\alpha_j)}
		=
		\sqrt{\frac{\alpha_j(1-\alpha_j)}{n}}
		\le
		\sqrt{\frac{\alpha_j}{n}}.
		\]
		Taking expectations and using Cauchy--Schwarz, we obtain
		\[
		\mathbb E\!\left[
		W_p^p(\nu,\widehat\nu_n)
		\right]
		\le
		\frac{D_{\mathcal Z}^p}{2\sqrt n}\sum_{j=1}^m\sqrt{\alpha_j}
		\le
		\frac{D_{\mathcal Z}^p}{2}\sqrt{\frac{m}{n}},
		\]
		where the last step uses \(\sum_{j=1}^m\alpha_j=1\).
		
		By the triangle inequality for \(W_p\),
		\[
		W_p(\rho,\widehat\rho^{\,n})
		\le
		W_p(\rho,\nu)+W_p(\nu,\widehat\nu_n)
		+W_p(\widehat\nu_n,\widehat\rho^{\,n})
		\le
		2\Delta+W_p(\nu,\widehat\nu_n).
		\]
		Using \((a+b)^p\le 2^{p-1}(a^p+b^p)\), with \(a=2\Delta\) and
		\(b=W_p(\nu,\widehat\nu_n)\), gives
		\[
		W_p^p(\rho,\widehat\rho^{\,n})
		\le
		2^{p-1}(2\Delta)^p
		+
		2^{p-1}W_p^p(\nu,\widehat\nu_n).
		\]
		After taking expectations,
		\[
		\mathbb E\!\left[
		W_p^p(\rho,\widehat\rho^{\,n})
		\right]
		\le
		2^{p-1}(2\Delta)^p
		+
		2^{p-2}D_{\mathcal Z}^p\sqrt{\frac{m}{n}}.
		\]
		Finally, \(n\le m(D_{\mathcal Z}/(2\Delta))^{2p}\) is equivalent to
		\((2\Delta)^p\le D_{\mathcal Z}^p\sqrt{m/n}\). Substituting this bound into the previous
		display gives
		\[
		\mathbb E\!\left[
		W_p^p(\rho,\widehat\rho^{\,n})
		\right]
		\le
		\left(2^{p-1}+2^{p-2}\right)D_{\mathcal Z}^p\sqrt{\frac{m}{n}},
		\]
		as claimed.
	\end{proof}

    \section{Synthetic In-Context Function Learning Experiments}
	\label{app:synthetic-icl}
	
	This appendix gives implementation details for the synthetic experiments. At
	test time, model parameters are fixed, and prediction is made only from the
	input--output examples in the prompt and the query input.
	
	\paragraph{Task distributions.}
	We use four task families. For noiseless linear regression, each prompt samples
	\(w\sim\mathcal N(0,I_d)\), \(x_i\sim\mathcal N(0,I_d)\), and
	\(y_i=w^\top x_i\), with \(d=20\). All examples in the same prompt share the
	same \(w\), while different prompts use independent latent vectors. For noisy
	linear regression, labels are generated as
	\(y_i^{\mathrm{raw}}=w^\top x_i+\epsilon_i\), where
	\(\epsilon_i\sim\mathcal N(0,\sigma^2)\) and \(\sigma=1\). The main noisy-linear
	curve uses population normalization
	\(y_i=y_i^{\mathrm{raw}}\sqrt d/\sqrt{d+\sigma^2}\), evaluates the noiseless
	linear-regression checkpoints without additional noisy training, and measures
	error against the normalized noisy query label. We therefore use this setting as
	a robustness evaluation of the noiselessly trained checkpoints. For
	decision-tree regression, each prompt samples an independent depth-\(4\)
	regression tree whose internal nodes test \(x_j>0\) for randomly selected
	coordinates and whose leaf values are standard normal. For two-layer ReLU
	regression, each prompt samples
	\(f(x)=\sqrt{2/h}\sum_{j=1}^h a_j\operatorname{ReLU}(u_j^\top x)\), with
	\(d=20\), \(h=100\), \(u_j\sim\mathcal N(0,I_d)\), and
	\(a_j\sim\mathcal N(0,1)\).
	
	\paragraph{Prompt format.}
	For context size \(n\), the prompt is
	\((x_1,y_1),\ldots,(x_n,y_n),x_{n+1}\), and the model predicts \(y_{n+1}\). The
	case \(n=0\) is a no-context query. Evaluation prompts are sampled independently
	from training prompts, and no parameter updates are performed at test time.
	
	\paragraph{Masked Pair Encoder.}
	The masked model represents each \((x_i,y_i)\) as one pair token. For an
	observed label, the token embedding is \(h_i=W_xx_i+W_yy_i\); for a masked label,
	it is \(h_i=W_xx_i+e_{\mathrm{mask}}\), where \(e_{\mathrm{mask}}\) is learned.
	The pair sequence is processed by a bidirectional Transformer encoder, and a
	scalar readout predicts the labels at masked positions. In the set-encoder
	variant used in the main runs, all position identifiers are set to zero, reducing
	absolute-position shortcuts.
	
	\paragraph{Masked label-prediction objective.}
	Training uses online-generated synthetic prompts. For each sampled sequence,
	\(K=8\) target positions are sampled, and the implementation expands the sequence
	into \(K\) single-mask copies. Each expanded example masks exactly one label,
	while all non-target labels remain visible. The loss is the squared error on the
	masked target, \(\mathcal L=\mathbb E[(\hat y_t-y_t)^2]\). Thus \(K=8\)
	increases the number of supervised targets obtained from each sampled prompt,
	while each forward pass remains a single-mask prediction problem. In the variable
	dense leave-one-out runs, the training prefix length is sampled during training
	under the current curriculum limit. At evaluation time, all support labels are
	observed and only \(y_{n+1}\) is replaced by \(e_{\mathrm{mask}}\); the true query
	label is used only for the metric.
	
	\paragraph{GPT-2-style causal Transformer baseline.}
	The causal baseline is a GPT-2-style Transformer trained from scratch on the
	same synthetic task family, following the synthetic in-context function-learning
	protocol of \citet{garg2022can}. It interleaves input and label tokens and
	predicts each label from the hidden state of the corresponding input token.
	
	\paragraph{Baselines.}
	For linear regression, we report OLS and 3NN. OLS is fit separately for each
	prompt as \(\hat w_n=X_n^\dagger y_n\), and the prediction is
	\(\hat y_{n+1}=x_{n+1}^\top\hat w_n\), with no intercept or ridge
	regularization. When \(n<d\), this is the underdetermined least-squares solution;
	when \(n=0\), the prediction is zero. The 3NN baseline averages the labels of the
	\(\min\{3,n\}\) nearest context inputs in Euclidean distance and predicts zero at
	\(n=0\). For noisy linear regression, we additionally include the averaging
	estimator \(\hat w_n=n^{-1}\sum_{i=1}^n x_i y_i\). For decision trees, we include
	nearest neighbors, greedy regression trees, sign-preprocessed greedy trees, and
	sign-preprocessed XGBoost. The greedy tree baseline fits a
	\texttt{DecisionTreeRegressor} separately for each prompt with maximum depth
	\(4\). The sign-preprocessed baselines replace each input by
	\(\operatorname{sign}(x)\). The XGBoost baseline is also fit separately for each
	prompt on \(\operatorname{sign}(x)\), using squared-error regression, \(100\)
	estimators, maximum depth \(4\), learning rate \(0.1\), unit subsampling and
	column sampling, and \(\ell_2\) regularization parameter \(1\). For two-layer
	ReLU networks, we include nearest neighbors and a per-prompt two-layer ReLU
	network fit on the in-context examples. Following prior synthetic ICL naming, the
	figure labels this reference as ``2-layer NN, GD''; concretely, the released
	implementation uses a two-layer ReLU network with \(100\) hidden units optimized
	for \(100\) steps with Adam at learning rate \(5\times 10^{-3}\).
	
	\paragraph{Evaluation metric.}
	For each \(n\), we estimate query risk by averaging over held-out prompts,
	\(\widehat R_n=M^{-1}\sum_{m=1}^M(\hat y_{n+1}^{(m)}-y_{n+1}^{(m)})^2\). For
	noiseless linear regression, noisy linear regression, and two-layer ReLU
	regression, the released figures report \(\widehat R_n/d\). For decision-tree
	regression, they report raw mean squared error because the leaf variance is
	already order one.
	
	\begin{table}[t]
		\centering
		\caption{Main synthetic ICL training settings. All neural models are trained
			from scratch on online-generated prompts with batch size \(64\), learning
			rate \(10^{-4}\), hidden width \(256\), and \(8\) attention heads. ``Max
			train points'' denotes the maximum number of sampled points in the training
			curriculum, including the query point; ``Eval. \(n\)'' denotes the plotted
			number of in-context examples. The noisy-linear experiment evaluates the
			linear-regression checkpoints without additional noisy training.}
		\label{tab:synthetic-icl-settings}
		\small
		\begin{tabular}{llccccl}
			\toprule
			Task & Model & Layers & Max train points & Eval. \(n\) & Steps & Objective \\
			\midrule
			Linear & Masked Pair Encoder & 6 & 41 & \(0\)--\(40\) & 200k & masked, \(K=8\) \\
			Linear & Causal Transformer & 12 & 41 & \(0\)--\(40\) & 500k & causal \\
			Decision tree & Masked Pair Encoder & 12 & 101 & \(1\)--\(40\) & 300k & masked, \(K=8\) \\
			Decision tree & Causal Transformer & 12 & 101 & \(1\)--\(40\) & 200k & causal \\
			Two-layer ReLU & Masked Pair Encoder & 12 & 101 & \(0\)--\(100\) & 300k & masked, \(K=8\) \\
			Two-layer ReLU & Causal Transformer & 12 & 101 & \(0\)--\(100\) & 300k & causal \\
			\bottomrule
		\end{tabular}
	\end{table}
	
	\paragraph{No label leakage.}
	For the Masked Pair Encoder, the target label is replaced by
	\(e_{\mathrm{mask}}\) before the Transformer forward pass, so bidirectional
	attention cannot reveal the true target label. For the causal Transformer,
	predictions are read from input-token hidden states, and causal masking prevents
	access to the label token being predicted. In both cases, the true query label is
	used only for the loss or evaluation metric.
    \paragraph{Reproducibility.}
	The code and released artifacts are available at
	\url{https://github.com/ari-arden/masked-function-icl}. The repository includes the final
	experimental results, curve CSV files, plotting scripts, and scripts for
	regenerating the figures reported in this paper.

	\vskip 0.2in
	\bibliography{ICL}

@book{vapnik2013nature,
  title={The nature of statistical learning theory},
  author={Vapnik, Vladimir},
  year={2013},
  publisher={Springer science \& business media}
}

@article{maurer2016benefit,
	title={The benefit of multitask representation learning},
	author={Maurer, Andreas and Pontil, Massimiliano and Romera-Paredes, Bernardino},
	journal={Journal of Machine Learning Research},
	volume={17},
	number={81},
	pages={1--32},
	year={2016}
}

@article{szabo2016learning,
	title={Learning theory for distribution regression},
	author={Szab{\'o}, Zolt{\'a}n and Sriperumbudur, Bharath K and P{\'o}czos, Barnab{\'a}s and Gretton, Arthur},
	journal={Journal of Machine Learning Research},
	volume={17},
	number={152},
	pages={1--40},
	year={2016}
}

@inproceedings{poczos2013distribution,
	title={Distribution-free distribution regression},
	author={P{\'o}czos, Barnab{\'a}s and Singh, Aarti and Rinaldo, Alessandro and Wasserman, Larry},
	booktitle={artificial intelligence and statistics},
	pages={507--515},
	year={2013},
	organization={PMLR}
}

@article{weed2019sharp,
	title={Sharp asymptotic and finite-sample rates of convergence of empirical measures in Wasserstein distance},
	author={Weed, Jonathan and Bach, Francis},
	journal={Bernoulli},
	volume={25},
	number={4A},
	pages={2620--2648},
	year={2019},
	publisher={JSTOR}
}

@article{dudley1969speed,
	title={The speed of mean Glivenko-Cantelli convergence},
	author={Dudley, Richard Mansfield},
	journal={The Annals of Mathematical Statistics},
	volume={40},
	number={1},
	pages={40--50},
	year={1969},
	publisher={JSTOR}
}

@inproceedings{devlin2019bert,
	title={Bert: Pre-training of deep bidirectional transformers for language understanding},
	author={Devlin, Jacob and Chang, Ming-Wei and Lee, Kenton and Toutanova, Kristina},
	booktitle={Proceedings of the 2019 conference of the North American chapter of the association for computational linguistics: human language technologies, volume 1 (long and short papers)},
	pages={4171--4186},
	year={2019}
}

@inproceedings{salazar2020masked,
	title={Masked language model scoring},
	author={Salazar, Julian and Liang, Davis and Nguyen, Toan Q and Kirchhoff, Katrin},
	booktitle={Proceedings of the 58th annual meeting of the association for computational linguistics},
	pages={2699--2712},
	year={2020}
}

@article{radford2019language,
	title={Language models are unsupervised multitask learners},
	author={Radford, Alec and Wu, Jeffrey and Child, Rewon and Luan, David and Amodei, Dario and Sutskever, Ilya and others},
	journal={OpenAI blog},
	volume={1},
	number={8},
	pages={9},
	year={2019}
}

@book{robinson2010dimensions,
	title={Dimensions, embeddings, and attractors},
	author={Robinson, James C},
	volume={186},
	year={2010},
	publisher={Cambridge University Press}
}

@inproceedings{boissard2014mean,
	title={On the mean speed of convergence of empirical and occupation measures in Wasserstein distance},
	author={Boissard, Emmanuel and Le Gouic, Thibaut},
	booktitle={Annales de l’Institut Henri Poincar{\'e}-Probabilit{\'e}s et Statistiques},
	volume={50},
	number={2},
	pages={539--563},
	year={2014}
}

@article{ciampiconi2023survey,
	title={A survey and taxonomy of loss functions in machine learning},
	author={Ciampiconi, Lorenzo and Elwood, Adam and Leonardi, Marco and Mohamed, Ashraf and Rozza, Alessandro},
	journal={arXiv preprint arXiv:2301.05579},
	year={2023}
}

@inproceedings{mroueh2023towards,
	title={Towards a statistical theory of learning to learn in-context with transformers},
	author={Mroueh, Youssef},
	booktitle={NeurIPS 2023 Workshop Optimal Transport and Machine Learning},
	year={2023}
}

@article{vuckovic2021regularity,
	title={On the regularity of attention},
	author={Vuckovic, James and Baratin, Aristide and Combes, Remi Tachet des},
	journal={arXiv preprint arXiv:2102.05628},
	year={2021}
}

@inproceedings{sander2022sinkformers,
	title={Sinkformers: Transformers with doubly stochastic attention},
	author={Sander, Michael E and Ablin, Pierre and Blondel, Mathieu and Peyr{\'e}, Gabriel},
	booktitle={International Conference on Artificial Intelligence and Statistics},
	pages={3515--3530},
	year={2022},
	organization={PMLR}
}

@inproceedings{
xie2022an,
title={An Explanation of In-context Learning as Implicit Bayesian Inference},
author={Sang Michael Xie and Aditi Raghunathan and Percy Liang and Tengyu Ma},
booktitle={International Conference on Learning Representations},
year={2022},
url={https://openreview.net/forum?id=RdJVFCHjUMI}
}

@inproceedings{muller2022transformers,
title={Transformers Can Do Bayesian Inference},
author={Samuel M{\"u}ller and Noah Hollmann and Sebastian Pineda Arango and Josif Grabocka and Frank Hutter},
booktitle={International Conference on Learning Representations},
year={2022},
url={https://openreview.net/forum?id=KSugKcbNf9}
}

@inproceedings{
panwar2024incontext,
title={In-Context Learning through the Bayesian Prism},
author={Madhur Panwar and Kabir Ahuja and Navin Goyal},
booktitle={The Twelfth International Conference on Learning Representations},
year={2024},
url={https://openreview.net/forum?id=HX5ujdsSon}
}

@article{narayanan2010sample,
	title={Sample complexity of testing the manifold hypothesis},
	author={Narayanan, Hariharan and Mitter, Sanjoy},
	journal={Advances in neural information processing systems},
	volume={23},
	year={2010}
}

@article{fefferman2016testing,
	title={Testing the manifold hypothesis},
	author={Fefferman, Charles and Mitter, Sanjoy and Narayanan, Hariharan},
	journal={Journal of the American Mathematical Society},
	volume={29},
	number={4},
	pages={983--1049},
	year={2016}
}

@inproceedings{popeintrinsic,
	title={The Intrinsic Dimension of Images and Its Impact on Learning},
	author={Pope, Phil and Zhu, Chen and Abdelkader, Ahmed and Goldblum, Micah and Goldstein, Tom},
	booktitle={International Conference on Learning Representations},
	year={2021},
	url={https://openreview.net/forum?id=XJk19XzGq2J}
}

@article{nakada2020adaptive,
	title={Adaptive approximation and generalization of deep neural network with intrinsic dimensionality},
	author={Nakada, Ryumei and Imaizumi, Masaaki},
	journal={Journal of Machine Learning Research},
	volume={21},
	number={174},
	pages={1--38},
	year={2020}
}

@article{brown2020language,
	title={Language models are few-shot learners},
	author={Brown, Tom and Mann, Benjamin and Ryder, Nick and Subbiah, Melanie and Kaplan, Jared D and Dhariwal, Prafulla and Neelakantan, Arvind and Shyam, Pranav and Sastry, Girish and Askell, Amanda and others},
	journal={Advances in neural information processing systems},
	volume={33},
	pages={1877--1901},
	year={2020}
}

@article{samuel2024berts,
	title={BERTs are generative in-context learners},
	author={Samuel, David},
	journal={Advances in Neural Information Processing Systems},
	volume={37},
	pages={2558--2589},
	year={2024}
}

@article{garg2022can,
	title={What can transformers learn in-context? a case study of simple function classes},
	author={Garg, Shivam and Tsipras, Dimitris and Liang, Percy S and Valiant, Gregory},
	journal={Advances in neural information processing systems},
	volume={35},
	pages={30583--30598},
	year={2022}
}

@inproceedings{von2023transformers,
	title={Transformers learn in-context by gradient descent},
	author={Von Oswald, Johannes and Niklasson, Eyvind and Randazzo, Ettore and Sacramento, Jo{\~a}o and Mordvintsev, Alexander and Zhmoginov, Andrey and Vladymyrov, Max},
	booktitle={International Conference on Machine Learning},
	pages={35151--35174},
	year={2023},
	organization={PMLR}
}

@article{ahn2023transformers,
	title={Transformers learn to implement preconditioned gradient descent for in-context learning},
	author={Ahn, Kwangjun and Cheng, Xiang and Daneshmand, Hadi and Sra, Suvrit},
	journal={Advances in Neural Information Processing Systems},
	volume={36},
	pages={45614--45650},
	year={2023}
}

@article{hataya2024automatic,
	title={Automatic domain adaptation by transformers in in-context learning},
	author={Hataya, Ryuichiro and Matsui, Kota and Imaizumi, Masaaki},
	journal={arXiv preprint arXiv:2405.16819},
	year={2024}
}

@inproceedings{giannou2023looped,
	title={Looped transformers as programmable computers},
	author={Giannou, Angeliki and Rajput, Shashank and Sohn, Jy-yong and Lee, Kangwook and Lee, Jason D and Papailiopoulos, Dimitris},
	booktitle={International Conference on Machine Learning},
	pages={11398--11442},
	year={2023},
	organization={PMLR}
}

@article{fu2024transformers,
	title={Transformers learn to achieve second-order convergence rates for in-context linear regression},
	author={Fu, Deqing and Chen, Tian-Qi and Jia, Robin and Sharan, Vatsal},
	journal={Advances in Neural Information Processing Systems},
	volume={37},
	pages={98675--98716},
	year={2024}
}

@article{bai2023transformers,
	title={Transformers as statisticians: Provable in-context learning with in-context algorithm selection},
	author={Bai, Yu and Chen, Fan and Wang, Huan and Xiong, Caiming and Mei, Song},
	journal={Advances in neural information processing systems},
	volume={36},
	pages={57125--57211},
	year={2023}
}

@article{zhang2024trained,
	title={Trained transformers learn linear models in-context},
	author={Zhang, Ruiqi and Frei, Spencer and Bartlett, Peter L},
	journal={Journal of Machine Learning Research},
	volume={25},
	number={49},
	pages={1--55},
	year={2024}
}

@inproceedings{li2023transformers,
	title={Transformers as algorithms: Generalization and stability in in-context learning},
	author={Li, Yingcong and Ildiz, Muhammed Emrullah and Papailiopoulos, Dimitris and Oymak, Samet},
	booktitle={International conference on machine learning},
	pages={19565--19594},
	year={2023},
	organization={PMLR}
}

@article{kim2024transformers,
	title={Transformers are minimax optimal nonparametric in-context learners},
	author={Kim, Juno and Nakamaki, Tai and Suzuki, Taiji},
	journal={Advances in Neural Information Processing Systems},
	volume={37},
	pages={106667--106713},
	year={2024}
}

@article{ching2026efficient,
	title={Efficient and minimax-optimal in-context nonparametric regression with transformers},
	author={Ching, Michelle and Popescu, Ioana and Smith, Nico and Ma, Tianyi and Underwood, William G and Samworth, Richard J},
	journal={arXiv preprint arXiv:2601.15014},
	year={2026}
}

@article{ma2025provable,
	title={Provable test-time adaptivity and distributional robustness of in-context learning},
	author={Ma, Tianyi and Wang, Tengyao and Samworth, Richard J},
	journal={arXiv preprint arXiv:2510.23254},
	year={2025}
}

@article{wakayama2025context,
	title={In-context learning is provably Bayesian inference: a generalization theory for meta-learning},
	author={Wakayama, Tomoya and Suzuki, Taiji},
	journal={arXiv preprint arXiv:2510.10981},
	year={2025}
}

@article{liu2025context,
	title={In-Context Learning as Nonparametric Conditional Probability Estimation: Risk Bounds and Optimality},
	author={Liu, Chenrui and Tan, Falong and Xie, Chuanlong and Zeng, Yicheng and Zhu, Lixing},
	journal={arXiv preprint arXiv:2508.08673},
	year={2025}
}

@inproceedings{wumany,
	title={How Many Pretraining Tasks Are Needed for In-Context Learning of Linear Regression?},
    author={Jingfeng Wu and Difan Zou and Zixiang Chen and Vladimir Braverman and Quanquan Gu and Peter Bartlett},
    booktitle={The Twelfth International Conference on Learning Representations},
    year={2024},
    url={https://openreview.net/forum?id=vSh5ePa0ph}
    }

@inproceedings{hedeberta,
	title={{DeBERTa}: Decoding-enhanced {BERT} with Disentangled Attention},
	author={He, Pengcheng and Liu, Xiaodong and Gao, Jianfeng and Chen, Weizhu},
	booktitle={International Conference on Learning Representations},
	year={2021},
	url={https://openreview.net/forum?id=XPZIaotutsD}
}

@article{carlier2024quantitative,
	title={Quantitative stability of barycenters in the Wasserstein space: G. Carlier et al.},
	author={Carlier, Guillaume and Delalande, Alex and Merigot, Quentin},
	journal={Probability Theory and Related Fields},
	volume={188},
	number={3},
	pages={1257--1286},
	year={2024},
	publisher={Springer}
}

@article{abedsoltan2024context,
	title={Context-scaling versus task-scaling in in-context learning},
	author={Abedsoltan, Amirhesam and Radhakrishnan, Adityanarayanan and Wu, Jingfeng and Belkin, Mikhail},
	journal={arXiv preprint arXiv:2410.12783},
	year={2024}
}

@article{jiao2026beyond,
  title={Beyond the Prompt in Large Language Models: Comprehension, In-Context Learning, and Chain-of-Thought},
  author={Jiao, Yuling and Lai, Yanming and Lin, Huazhen and Ma, Wensen and Qi, Houduo and Sun, Defeng},
  journal={arXiv preprint arXiv:2603.10000},
  year={2026}
}

@inproceedings{hoffmann2022training,
	title={Training compute-optimal large language models},
	author={Hoffmann, Jordan and Borgeaud, Sebastian and Mensch, Arthur and Buchatskaya, Elena and Cai, Trevor and Rutherford, Eliza and de Las Casas, Diego and Hendricks, Lisa Anne and Welbl, Johannes and Clark, Aidan and others},
	booktitle={Proceedings of the 36th International Conference on Neural Information Processing Systems},
	pages={30016--30030},
	year={2022}
}

@article{wang2023data,
	title={Data management for training large language models: A survey},
	author={Wang, Zige and Zhong, Wanjun and Wang, Yufei and Zhu, Qi and Mi, Fei and Wang, Baojun and Shang, Lifeng and Jiang, Xin and Liu, Qun},
	journal={arXiv preprint arXiv:2312.01700},
	year={2023}
}

@inproceedings{cioba2022distribute,
	title={How to distribute data across tasks for meta-learning?},
	author={Cioba, Alexandru and Bromberg, Michael and Wang, Qian and Niyogi, Ritwik and Batzolis, Georgios and Garcia, Jezabel and Shiu, Da-shan and Bernacchia, Alberto},
	booktitle={Proceedings of the AAAI Conference on Artificial Intelligence},
	volume={36},
	number={6},
	pages={6394--6401},
	year={2022}
}

@article{chakraborty2026generalization,
	title={Generalization Properties of Score-matching Diffusion Models for Intrinsically Low-dimensional Data},
	author={Chakraborty, Saptarshi and Berthet, Quentin and Bartlett, Peter L},
	journal={arXiv preprint arXiv:2603.03700},
	year={2026}
}

@article{raventos2023pretraining,
	title={Pretraining task diversity and the emergence of non-bayesian in-context learning for regression},
	author={Ravent{\'o}s, Allan and Paul, Mansheej and Chen, Feng and Ganguli, Surya},
	journal={Advances in neural information processing systems},
	volume={36},
	pages={14228--14246},
	year={2023}
}

@article{funaki1984certain,
	title={A certain class of diffusion processes associated with nonlinear parabolic equations},
	author={Funaki, Tadahisa},
	journal={Zeitschrift f{\"u}r Wahrscheinlichkeitstheorie und Verwandte Gebiete},
	volume={67},
	number={3},
	pages={331--348},
	year={1984},
	publisher={Springer}
}

@article{leobacher2022well,
	title={Well-posedness and numerical schemes for one-dimensional McKean--Vlasov equations and interacting particle systems with discontinuous drift},
	author={Leobacher, Gunther and Reisinger, Christoph and Stockinger, Wolfgang},
	journal={BIT Numerical Mathematics},
	volume={62},
	number={4},
	pages={1505--1549},
	year={2022},
	publisher={Springer}
}

@article{kawata2026transformers,
  title={Transformers as Measure-Theoretic Associative Memory: A Statistical Perspective and Minimax Optimality},
  author={Kawata, Ryotaro and Suzuki, Taiji},
  journal={arXiv preprint arXiv:2602.01863},
  year={2026}
}

@article{furuya2026approximation,
  title={Approximation Theory for Lipschitz Continuous Transformers},
  author={Furuya, Takashi and Murari, Davide and Sch{\"o}nlieb, Carola-Bibiane},
  journal={arXiv preprint arXiv:2602.15503},
  year={2026}
}

@inproceedings{wang2025can,
  title={Can in-context learning really generalize to out-of-distribution tasks?},
  author={Wang, Qixun and Wang, Yifei and Ying, Xianghua and Wang, Yisen},
  booktitle={International Conference on Learning Representations},
  volume={2025},
  pages={83553--83574},
  year={2025}
}

@article{goddard2025can,
  title={When can in-context learning generalize out of task distribution?},
  author={Goddard, Chase and Smith, Lindsay M and Ngampruetikorn, Vudtiwat and Schwab, David J},
  journal={arXiv preprint arXiv:2506.05574},
  year={2025}
}

@inproceedings{
kwon2026outofdistribution,
title={Out-of-Distribution Generalization of In-Context Learning: A Low-Dimensional Subspace Perspective},
author={Soo Min Kwon and Alec S. Xu and Can Yaras and Laura Balzano and Qing Qu},
booktitle={The 29th International Conference on Artificial Intelligence and Statistics},
year={2026},
url={https://openreview.net/forum?id=xrmPHv8SNT}
}

@article{belkin2019reconciling,
  title={Reconciling modern machine-learning practice and the classical bias--variance trade-off},
  author={Belkin, Mikhail and Hsu, Daniel and Ma, Siyuan and Mandal, Soumik},
  journal={Proceedings of the National Academy of Sciences},
  volume={116},
  number={32},
  pages={15849--15854},
  year={2019},
  publisher={National Academy of Sciences}
}

@article{chakraborty2025statistical,
  title={On the statistical properties of generative adversarial models for low intrinsic data dimension},
  author={Chakraborty, Saptarshi and Bartlett, Peter L},
  journal={Journal of Machine Learning Research},
  volume={26},
  number={111},
  pages={1--57},
  year={2025}
}

@article{canas2012learning,
	title={Learning probability measures with respect to optimal transport metrics},
	author={Canas, Guillermo and Rosasco, Lorenzo},
	journal={Advances in neural information processing systems},
	volume={25},
	year={2012}
}

@article{block2022intrinsic,
	title={Intrinsic dimension estimation using Wasserstein distance},
	author={Block, Adam and Jia, Zeyu and Polyanskiy, Yury and Rakhlin, Alexander},
	journal={Journal of Machine Learning Research},
	volume={23},
	number={313},
	pages={1--37},
	year={2022}
}

@article{jiao2023deep,
	title={Deep nonparametric regression on approximate manifolds: Nonasymptotic error bounds with polynomial prefactors},
	author={Jiao, Yuling and Shen, Guohao and Lin, Yuanyuan and Huang, Jian},
	journal={The Annals of Statistics},
	volume={51},
	number={2},
	pages={691--716},
	year={2023},
	publisher={Institute of Mathematical Statistics}
}
\end{document}